\newtheorem{theorem}{\bf Theorem}[section]
\newtheorem{corollary}[theorem]{\bf Corollary}
\newtheorem{assumption}{\bf Assumption}[section]
\icmltitlerunning{A Constrained Optimization Perspective  of Unrolled Transformers}
\begin{document}

\twocolumn[
  \icmltitle{A Constrained Optimization Perspective  of Unrolled Transformers}



  \icmlsetsymbol{equal}{*}

  \begin{icmlauthorlist}
    \icmlauthor{Javier Porras-Valenzuela}{penn}
    \icmlauthor{Samar Hadou}{penn}
    \icmlauthor{Alejandro Ribeiro}{penn}
  \end{icmlauthorlist}

  \icmlaffiliation{penn}{Department of Electrical and Systems Engineering, University of Pennsylvania, Philadelphia, PA, United States of America}

  \icmlcorrespondingauthor{Javier Porras-Valenzuela}{jporras@seas.upenn.edu}

  \icmlkeywords{learning-to-optimize, unrolled neural networks, constrained learning}

  \vskip 0.3in
]




\printAffiliationsAndNotice{}  

\begin{abstract}
  We introduce a constrained optimization framework for training transformers that behave like optimization descent algorithms. Specifically, we enforce layerwise descent constraints on the objective function and replace standard empirical risk minimization (ERM) with a primal-dual training scheme. This approach yields models whose intermediate representations decrease the loss monotonically in expectation across layers. We apply our method to both unrolled transformer architectures and conventional pretrained transformers on tasks of video denoising and text classification. Across these settings, we observe constrained transformers achieve stronger robustness to perturbations and maintain higher out-of-distribution generalization, while preserving in-distribution performance. The code is available at \url{https://github.com/jotaporras/constrained-unrolled-transformers}
\end{abstract}


\section{Introduction}

Unrolling arises from the observation that iterations of descent algorithms of some optimization problems perform operations that are analogous to those of individual layers of a neural network \citep{gregor2010, mongaAlgorithmUnrollingInterpretable2021}. From this observation, an extensive literature has emerged in which neural networks are trained to solve optimization problems, with corresponding descent algorithms used as guidance for architecture design  \citep{yang22,weerdt23,yang21g,xie2023,hersheyDeepUnfoldingModelBased2014,freconBregmanNeuralNetworks2022}. E.g., descent algorithms for sparse reconstruction involve a linear map and a nonnegative projection motivating the use of a neural network made up of linear maps and ReLU nonlinearities to learn solutions of sparse reconstruction problems \citep{gregor2010}.

In the case of transformers \citep{VaswaniAttention2017}, unrolling has gained traction as a tool to interpret attention mechanisms \citep{yang22,yu23whitebox,ramsauer20,de2024deep,von2023uncovering}. These works present different energy functions and provide theoretical results showing that the update rules, which closely resemble a transformer's forward pass, exhibit descent properties. Beyond this theoretical value, \citep{weerdt23} and \citep{yu23whitebox} train unrolled transformers that are interpretable and parameter-efficient. When training these models, however, the behavior of these networks is non-monotonic along the iterates, which is inconsistent with the behavior expected of an optimizer. There are two reasons for this. Firstly, there is no guarantee that the learned parameters will satisfy the conditions under which the models minimize the energy. Secondly, training an unrolled architecture inherently induces a bilevel problem, which the unrolled architecture is not designed to take into account. 

In this paper, we draw from the unrolling literature to argue that it may be advantageous to train transformers that \emph{behave} like descent algorithms. We do so by imposing descent constraints on the output of each layer. Specifically, the first contribution of this paper is that:

\begin{itemize}

\item [\textbf{[C1]}] We formulate a constrained learning problem in which the output of each layer of a transformer is required to reduce the expected loss by a given factor relative to the cost of the output of the previous layer (Section \ref{sec:bilevel_unrolled_transformers}).

\end{itemize}

It is important to point out that our use of the term unrolling is not identical to the more common use of unrolling to refer to a learned parameterization that solves an optimization problem. We use unrolling here to refer to an arbitrary learning problem in which we explore the merit of forcing the layers of the transformer to behave like steps of an optimization descent algorithm. 

\begin{figure*}[t]
    \centering
    \includegraphics[width=0.98\linewidth, height=0.275\linewidth]{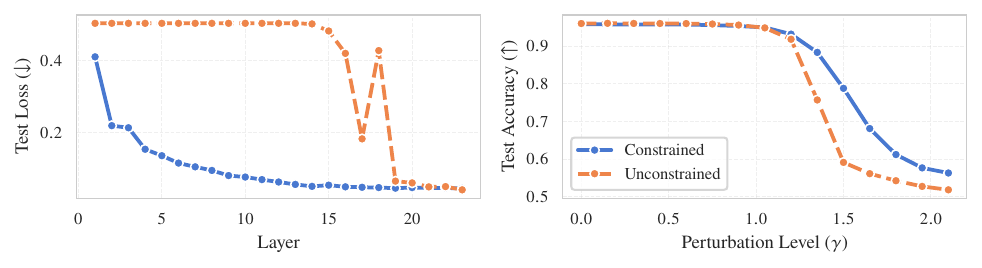}
    \caption{
    \textbf{Layerwise descent improves OOD robustness.} Left: Test loss at each layer ($\downarrow$ lower is better). Constrained RoBERTa exhibits monotonic descent, unlike the unconstrained baseline. Right: Out-of-distribution accuracy under increasing embedding perturbation levels $\gamma$ ($\uparrow$ higher is better). As $\gamma$ grows, the constrained model degrades more gracefully and retains higher accuracy. Setting: RoBERTa ($L=24$) trained on IMDb, training $\gamma=0.2$.}
    \label{fig:roberta_layerwise_and_ood_combined}
\end{figure*}

We discuss training algorithms for constrained transformers in Section \ref{sec:constrained_training}. These algorithms train transformers in the dual domain where we leverage small duality gap results drawn from the constrained learning literature \citep{chamon2023CL, chamon2020probably}. This is our second contribution: 

\begin{itemize}

\item [\textbf{[C2]}] We develop a dual training algorithm for constrained unrolled transformers. We show that incorporating descent constraints ensures asymptotic convergence--in the number of layers--to a near-optimal value of the statistical loss. We also provide theoretical \emph{guarantees} that constrained transformers maintain descent behavior and exhibit out-of-distribution (OOD) generalizability under distribution shifts (Section \ref{sec:constrained_training}). 

\end{itemize}

We expect the incorporation of descent constraints to yield trained transformers that respond better to perturbations of the input data. This is because it is a hallmark of descent algorithms that they \emph{do} respond better to perturbations and existing results have shown that incorporating descent constraints in neural networks does result in learned solutions that are less sensitive to perturbations \citep{hadou24}. In this paper, we find that transformers trained with descent constraints share this property. Our next two contributions are to demonstrate the value of adding constraints in two application domains where data perturbations arise naturally:

\begin{itemize}

\item [\textbf{[C3]}]  We investigate text classification problems with perturbed embeddings, in which users query a language model with embeddings perturbed with varying levels of Gaussian noise~\citep{fukuchiDifferentiallyPrivateEmpirical2017}. We observe that the reduction of classification accuracy as the degree of perturbation increases is smaller in transformers trained with descent constraints. We also observe this effect under uniform embedding noise and random textual perturbations (Section \ref{sec:language_classification}).

\item [\textbf{[C4]}] We consider video denoising, where the goal is to learn a transformer for vision~\citep{yang22, weerdt23, dosovitskiy2021} that recovers a video sequence from noisy observations. We show that training with descending constraints results in models that are more robust to OOD levels of noise, as measured by the root mean squared error (RMSE) of the reconstructed videos (Section \ref{sec:video_denoising}).
\end{itemize}

An instance of this robustness to perturbations is illustrated in Figure~\ref{fig:roberta_layerwise_and_ood_combined}. A text classifier (RoBERTa) trained with empirical risk minimization (ERM) exhibits a non-monotonic loss pattern along its layers, maintaining a high loss until the last few layers, with a spike at the $19$th layer. The constrained version of this model, trained to enforce monotonic descent constraints, exhibits a smoother decreasing pattern. When measuring the accuracy under OOD perturbations, we observe that the constrained model's accuracy decays more gracefully than its unconstrained counterpart.


\subsection{Related Work}

\textbf{Algorithmic unrolling and the learning-to-optimize framework.} There exists a vast literature in algorithmic unrolling, in which a neural network learns to approximate the solution of an optimization algorithms \citep{mongaAlgorithmUnrollingInterpretable2021}. Since the original LISTA \citep{gregor2010}, subsequent works have obtained remarkable improvements in speed and performance \citep{liu2018alista,chenTheoreticalLinearConvergence2018a,aberdamAdaLISTALearnedSolvers2020}
and unrolling has been applied to learn approximations of a variety of optimization algorithms \citep{hersheyDeepUnfoldingModelBased2014,sprechmannLearningEfficientSparse2015,wangDeepNetworksImage2015}.

\textbf{Other unrolled neural networks.} Beyond traditional unrolling, a new line of research has emerged that uses unrolling as a theoretical tool to interpret a variety of neural network architectures and layers, such as graph neural networks (GNNs) \citep{yang21g,hadou2023stochastic,hadou2025unrolledgraphneuralnetworks,he2025primaldual}, 
recurrent neural networks (RNNs) \citep{luongDesigningInterpretableRecurrent2021}, ReLU nonlinearities \citep{xie2023}, and other feedforward architectures \citep{freconBregmanNeuralNetworks2022}.

\textbf{Unrolled transformers.} The first work to show an unrolling of an attention layer is \citep{ramsauer20}. The first full unrolling of a transformer layer with attention and a nonlinearity is attributed to \citep{yang22}. This is extended by \citep{weerdt23} for video reconstruction with a LISTA nonlinearity instead of ReLU. The work in \citep{yu23whitebox} is a different unrolling that interprets the transformer as a process of denoising and compressing tokens.

\textbf{Transformers as optimizers.} Simultaneously, the community has explored other perspectives of transformers as optimizers. One such view is that transformers are in-context learners \citep{dongSurveyIncontextLearning2024,oswaldTransformersLearnContext2023,liTransformersAlgorithmsGeneralization2023a,olssonIncontextLearningInduction2022,ahn2023transformers,dai2022can}, which may explain large language model's abilities to generalize to tasks not seen during training via examples provided during inference \citep{brownLanguageModelsAre2020}.

\textbf{Constrained learning and constrained unrolling.} Constrained learning theory provides a framework for training neural networks subject to constraints \citep{chamon2023CL, hounie23} and has been a useful tool in various domains~\citep{moro2024solving,hounie2024loss,khalafi2024constrained}. In the context of unrolling, \citep{hadou24, hadou2023stochastic} have proposed training unrolled networks with descent constraints and demonstrated that constrained models exhibit more robustness to perturbations and better out-of-distribution generalization.

\textbf{Differential privacy.} Additive Gaussian perturbations are common in Differential Privacy (DP) for neural networks \citep{dworkAlgorithmicFoundationsDifferential2014,dworkCalibratingNoiseSensitivity2006,yuDifferentiallyPrivateFinetuning2022}. One method to train private models is to perturb the gradients during training \citep{abadiDeepLearningDifferential2016}, which gives $(\epsilon,\delta)$-DP. Another approach, closer to our experimental case, is to perturb inputs directly, known as \textit{local}-DP. This mechanism ensures end-to-end user privacy but leads to even worse $O(\sqrt{n}\epsilon,\delta)$ privacy \citep{fukuchiDifferentiallyPrivateEmpirical2017} and thus requires more noise for the same privacy level. Other DP works also study perturbing token embeddings \citep{yuDifferentiallyPrivateFinetuning2022,feyisetanPrivacyUtilityPreservingTextual2020,feyisetan2021private,bollegalaNeighbourhoodAwareDifferentialPrivacy2023}.

\section{Constrained Unrolled Transformers} \label{sec:bilevel_unrolled_transformers}

A transformer is a layered architecture that processes a sequence of $T$ vectors $\bbx_{t} \in\reals^N$ grouped in the matrix $\bbX = [\bbx_{1},\ldots,\bbx_{T}] \in \reals^{N \times T}$ to represent the entire vector sequence. The input to each transformer layer is a matrix $\bbX_{l-1} = [\bbx_{l-1,1},\ldots,\bbx_{l-1,T}] \in \reals^{N \times T}$ and the output is another matrix $\bbY = [\bby_{1},\ldots,\bby_{T}] \in \reals^{N \times T}$, both of which are also sequences with the same dimensions as $\bbX$. The first component of a transformer layer is an attention operation whose output is a matrix $\bbZ$ given by
\begin{align}\label{eqn_our_transformer_softmax}
    \bbZ_l ~=~ \bbV_l \bbX_{l-1} ~ \times ~
             \text{sm} \, \Bigl[\, \big(\bbQ_l\bbX_{l-1}\big)^T \, 
                                   \big(\bbK_l\bbX_{l-1}\big) \, \Bigr] \nonumber \\
         ~=~ \bbV_l \bbX_{l-1} ~ \times ~
             \text{sm} ( \bbA_l ) .
\end{align}
In \eqref{eqn_our_transformer_softmax}, the matrix $\bbZ_l = [\bbz_{l1},\ldots,\bbz_{lT}] \in \reals^{D \times T}$ represents a sequence of vectors $\bbz_{lt} \in \reals^D$ with dimension $D$ typically much smaller than $N$. The matrices $\bbQ_l, \bbK_l, \bbV_l \in \reals^{D \times N}$ are called query, key, and value matrices and are trainable parameters. The matrix $\bbA_l := (\bbQ_l\bbX_{l-1})^T \, (\bbK_l\bbX_{l-1})$ is a linear attention matrix and the operation $\text{sm}(\bbA_l)$ acts separately on rows of $\bbA$ so that if $\bbB = \text{sm}(\bbA)$ we have $b_{lut} = \exp(a_{lut}) / \sum_{t'=1}^{T} \exp(a_{lut'})$. 

The second operation in a transformer involves matrices $\bbW_l \in \reals^{N \times D}$ and $\bbU_l\in \reals^{N \times N}$ as trainable parameters and a pointwise nonlinear function $\sigma$ and entails the processing of the time series $\bbZ_l$ with a linear perceptron that also includes a residual connection of the layer's input vector $\bbX_{l-1}$,
\begin{align}\label{eqn_our_transformer_relu}
    \Phi_l(\bbX;\bbT) ~=~ \bbY_l ~=~ \sigma \Bigl[\bbW_l \bbZ_l + \bbU_l \bbX_{l-1} \Bigr].
\end{align}
The sequence $\bbY_l = \Phi_l(\bbX;\bbT)$ is the output of layer $l$. A transformer is defined by $L$ recursive applications of \eqref{eqn_our_transformer_softmax}--\eqref{eqn_our_transformer_relu} by making the output of layer $l$ the input to layer $l+1$, i.e., $\bbX_{l+1} = \bbY_{l}$. The input to layer 1 is the given sequence $\bbX_0 = \bbX$ and the output of the transformer is the output of layer $L$, $\bbY_L = \Phi_L(\bbX;\bbT)$. We write the output of layer $l$ as a function $\Phi_l(\bbX; \bbT)$ of the input sequence $\bbX$ and the trainable tensor $\bbT$ which groups the matrices $\bbQ_l$, $\bbK_l$ and $\bbV_l$ of \eqref{eqn_our_transformer_softmax} as well as the matrices $\bbW_l$ and $\bbU_l$ of \eqref{eqn_our_transformer_relu} for all layers $l$.

Consider now a loss function $f(\bbX, \Phi(\bbX; \bbT))$ dependent on the input and output values of the transformer. It is customary to seek parameters $\bbT_{\text{U}}^*$ that minimize the average loss,
\begin{align}\label{eqn_training_unrolled_transformer_problem} 
    \bbT_{\text{U}}^*  ~=~ \argmin_\bbT  
                    \mbE \Big[\, f \big(\,\bbX,\, \Phi(\bbX;\bbT)\,\big) \,\Big].
\end{align}
In prior contributions, it has been observed that transformers can be interpreted as iterative descent algorithms that solve optimization problems \citep{yang22,weerdt23,yu23whitebox,ramsauer20}. In this paper, we draw inspiration from this idea and argue that it may be advantageous to train transformers that behave like iterative descent algorithms. Formally, we consider a stepsize schedule $0<\alpha_l<1$ and propose to train transformers that solve the constrained learning problem,
%

\begin{alignat}{3}\label{eqn_constrained_transformer}
    \bbT^* ~=~ & \argmin_\bbT~
               && \mbE \Big[\, f \big(\,\bbX,\, \Phi(\bbX;\bbT)\,\big) \,\Big], \nonumber\\
               & \text{subject to} \quad
               && \mbE \Big[\, f \big(\,\bbX,\, \Phi_l(\bbX;\bbT)\,\big) \,\Big] \nonumber \\
               &&& \leq~ (1-\alpha_l) \,
                   \mbE \Big[\, f \big(\,\bbX,\, \Phi_{l-1}(\bbX;\bbT)\,\big) \,\Big], \ \forall l.
\end{alignat}
The purpose of the constraints in \eqref{eqn_constrained_transformer} is to force the optimal transformer $\bbT^*$ to have layers that reduce the statistical loss $f$ progressively. Since this is a property of descent algorithms, we say that $\bbT^*$ is an unrolled transformer. We point out, however, that this is not an exact analogy to the standard use of the term unrolling which involves the use of neural networks or transformers to \emph{solve} optimization problems \citep{mongaAlgorithmUnrollingInterpretable2021, weerdt23}---rather than encouraging a transformer to \emph{descend} like optimization algorithms do. It is also worth noting that analogous formulations to \eqref{eqn_constrained_transformer} can be derived for other deep neural network architectures, for which the theoretical results of this work would similarly apply.


\section{Training of Constrained Unrolled Transformers} \label{sec:constrained_training}

Problem \eqref{eqn_constrained_transformer} involves finding the transformer parameters $\bbT^*$ that minimize the loss function $f$ subject to the descent constraints. This formulation is a nonconvex constrained problem, which is usually difficult to solve directly. Rather, we resort to the dual problem, constructed through the Lagrangian function,
\begin{align}\label{eq:lagrangian}
    \ccalL(\bbT, \bblam) &= 
    \mbE \Big[ f(\bbX, \Phi(\bbX;\bbT)) \Big] \nonumber \\
    &+ \sum_{l=1}^L \lambda_l \mbE \Big[ f \big(\bbX, \Phi_l(\bbX;\bbT)\big) \nonumber \\
    &\quad- (1-\alpha_l)
     f \big(\bbX, \Phi_{l-1}(\bbX;\bbT)\big) \Big], 
\end{align}
where the vector $\bblambda \in \reals^{L}_+$ collects the Lagrangian multipliers.
%
%
%
The dual problem is then defined as
\begin{align}
    \widehat{D}^* ~=~ \max_{\bblam} \min_\bbT \ \widehat{\ccalL}(\bbT,\bblam), \label{eq:empirical_dual}
\end{align}
where $\widehat{\ccalL}$ is the empirical Lagrangian function, evaluated over $M$ realizations of $\bbX$. The max-min problem in \eqref{eq:empirical_dual} can be viewed as a sequence of regularized ERM problems, solved sequentially, differing only in the choice of Lagrangian multipliers. There is empirical evidence that a \emph{high-quality} local minimum for such unconstrained problems can be attained using stochastic gradient descent \citep{zhang2016understanding, arpit2017closer}. The Lagrangian multipliers, which act as regularization parameters in this view, are updated using projected gradient ascent to maximize the dual function, since the latter is concave. Solving the dual problem then entails alternating between minimization with respect to $\bbT$ and maximization over $\bblam$ \citep{chamon2020probably, fioretto2021lagrangian}, leading to the primal-dual procedure described in Algorithm \ref{alg:primal-dual-transformer-cl}.

It is worth pointing out that this training algorithm incurs negligible computational and memory overhead relative to standard ERM training. The additional memory from the multipliers $\bblam$ scales with the number of layers $L$, which is negligible when compared to the parameters $\bbT_l$. The additional operations are computing the penalties for the Lagrangian in Equation~\eqref{eq:lagrangian} and the dual gradient step in Algorithm~\ref{alg:primal-dual-transformer-cl}, which both reuse the forward pass activations.


Although classical duality theory \citep{boyd2004convex} indicates that nonconvex constrained programs may exhibit non-zero duality gaps, recent results show that, in training deep neural networks, this duality gap is typically small \citep{chamon2023CL}. We include these results in the following theorem to keep our discussions self-contained.

%
\begin{theorem}[Constrained Learning Theorem \citep{chamon2023CL}] \label{thm:clt_gap}
    Let $(\bbT^*,\bblam^*)$ be a stationary point of \eqref{eq:empirical_dual} and $P^*$ denote the optimal value of the statistical loss function in \eqref{eqn_constrained_transformer}. 
    Under Assumptions \ref{a-lipschitz} - \ref{a-exist-feasible-sol} (see Appendix \ref{app:CLT}), it holds, for some constant $\rho$, that 
    \begin{align*}
        | P^* &- \widehat{D}^*  |  ~\leq~ C\nu+\rho \, \zeta(M, \delta), \quad \text{and} \\
        \hspace{1em}
        &\quad \mbE \Big[ f \big(\bbX, \Phi_l(\bbX;\bbT^*)\big)  \Big]
        & \nonumber \\
        & - (1-\alpha_l) \, \mbE \Big[
         f \big(\bbX, \Phi_{l-1}(\bbX;\bbT^*)\big) \Big] ~\leq~ \zeta(M,\delta), \ \forall l,
    \end{align*}
    with probability $1-\delta$ each, and with $\rho = \max\{\|\bblam^*\|,\|\boldsymbol{\bar{\lambda}^*}\|\}$, where
    $\boldsymbol{\bar{\lambda}^*}$ is the optimal multiplier of the statistical dual problem. Moreover, $\nu$ and $\zeta(M, \delta)$ are the expressivity parameter and the sample complexity, respectively, and $C$ is a Lipschitz constant.
\end{theorem}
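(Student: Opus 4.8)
The plan is to follow the program of \citep{chamon2023CL}, which relates three problems: the empirical parametric problem whose value is $\widehat{D}^*$, the \emph{statistical} parametric problem, and an idealized \emph{functional} (nonparametric) problem that is convex. Guarantees are transferred across two bridges: a statistical-to-empirical bridge controlled by the sample size $M$, and a functional-to-parametric bridge controlled by the expressivity $\nu$. Concretely, I would introduce the statistical dual value $D^* = \max_{\bblam}\min_{\bbT}\ccalL(\bbT,\bblam)$ with optimal multiplier $\boldsymbol{\bar{\lambda}^*}$, and a functional version of \eqref{eqn_constrained_transformer} in which each layer map $\bbX \mapsto \Phi_l(\bbX;\bbT)$ is replaced by an arbitrary measurable function drawn from a convex class that the transformer parameterization $\nu$-approximates (the expressivity assumption among Assumptions \ref{a-lipschitz}--\ref{a-exist-feasible-sol}).

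The first step is the expressivity bridge. Since the objective and the constraints of the functional problem are expectations that are \emph{affine} in the decision function, the functional problem is convex; together with the strict-feasibility hypothesis (Assumption \ref{a-exist-feasible-sol}) it satisfies Slater's condition, so strong duality holds and its primal and dual optima coincide. A sensitivity argument for the optimal value of a convex program under a $\nu$-tightening (respectively $\nu$-relaxation) of the feasible set — using the Lipschitz constant $C$ of $f$ from Assumption \ref{a-lipschitz} and the boundedness of $\boldsymbol{\bar{\lambda}^*}$ — then shows that the statistical parametric primal is within $C\nu$ of the functional value and, dualizing, that $D^*$ is within $C\nu$ of the same value, hence $|P^* - D^*| \le C\nu$. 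This also certifies $\rho = \max\{\|\bblam^*\|,\|\boldsymbol{\bar{\lambda}^*}\|\}$ is finite.

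The second step is the sampling bridge. Uniform concentration of empirical means over the (bounded, Lipschitz) loss class gives, with probability $1-\delta$ and for every layer $l$, a uniform deviation of order $\zeta(M,\delta)$ between $\mbE[f(\bbX,\Phi_l(\bbX;\bbT))]$ and its empirical counterpart; $\zeta$ is the sample-complexity rate furnished by a Rademacher/covering-number bound under the measurability and boundedness assumptions. This controls $|\ccalL(\bbT,\bblam) - \widehat{\ccalL}(\bbT,\bblam)|$ by a $(1+\|\bblam\|_1)\,\zeta(M,\delta)$-type quantity uniformly in $\bbT$; evaluating at the bounded multipliers $\bblam^*$ and $\boldsymbol{\bar{\lambda}^*}$ and applying the standard two-sided argument for $\max$-$\min$ values yields $|D^* - \widehat{D}^*| \le \rho\,\zeta(M,\delta)$. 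Combining the two bridges by the triangle inequality gives $|P^* - \widehat{D}^*| \le C\nu + \rho\,\zeta(M,\delta)$.

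Finally, for the approximate feasibility of $\bbT^*$: a stationary point of the min-max in \eqref{eq:empirical_dual} satisfies the empirical KKT/complementary-slackness relations, which bound the empirical constraint violations; passing these through the same uniform bound converts empirical near-feasibility into the claimed statistical slack, $\mbE[f(\bbX,\Phi_l(\bbX;\bbT^*))] - (1-\alpha_l)\,\mbE[f(\bbX,\Phi_{l-1}(\bbX;\bbT^*))] \le \zeta(M,\delta)$ for all $l$. I expect the main obstacle to be the functional-to-parametric step: showing the functional problem is genuinely convex with an explicit strong-duality certificate, and quantifying how its optimal value degrades under the $\nu$-approximation without losing feasibility — this requires carefully tracking the Slater margin against the tightening and propagating the constant $C$ through both the primal perturbation and the dual, noting that the descent constraints couple consecutive layers.
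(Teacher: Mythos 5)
The paper does not prove this theorem; it restates it verbatim from \citet{chamon2023CL} in Appendix~\ref{app:CLT} and explicitly directs the reader to that reference for the proof. So the comparison must be to the cited work's argument, which your sketch attempts to reproduce. The two-bridge skeleton you lay out (an expressivity bridge controlled by $\nu$ and a sampling bridge controlled by $\zeta(M,\delta)$, combined by a triangle inequality) is indeed the high-level plan of \citet{chamon2023CL}, and your handling of near-feasibility via stationarity plus uniform concentration is also in the right spirit.

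There is, however, a genuine gap in the first step. You assert that the functional problem is convex ``since the objective and the constraints of the functional problem are expectations that are affine in the decision function.'' This is false as stated: for a general loss $f$, the map $\phi \mapsto \mbE\big[f(\bbX,\phi(\bbX))\big]$ is not affine (nor convex) in the deterministic decision function $\phi$, so restricting to a convex hull of function classes does not yield a convex program, and Slater's condition alone does not give strong duality. The actual mechanism in \citet{chamon2023CL} is a Lyapunov-convexity / perturbation-function argument: one reformulates the functional problem over conditional output distributions, and the nonatomicity of $\bbX \mid \bbY$ (Assumption~\ref{a-compact-nonatomic-uc}, which your sketch never invokes) guarantees that the range of the vector-valued map collecting objective and constraint values is convex, whence the perturbation function is convex and the duality gap vanishes. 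Without that assumption your argument for $|P^* - D^*| \le C\nu$ has no foundation. Your second bridge and the near-feasibility step are essentially fine, but the expressivity step needs to be rebuilt on the nonatomicity/Lyapunov argument rather than on an affinity claim.
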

%
The theorem affirms that \eqref{eq:empirical_dual} yields near-optimal near-feasible solutions to \eqref{eqn_constrained_transformer} and can replace it. This result stems from the fact that the functional version of \eqref{eqn_constrained_transformer} exhibits zero duality gap. When we optimize over an \emph{expressive} parameterized class, we incur an optimality loss that amounts to the expressivity parameter $\nu$. However, there is theoretical evidence that $\nu$ can be made arbitrary small in deep neural networks \citep{pmlr-v97-ryu19a, graikos2022diffusion} and also in transformers \citep{yun2019transformers}. The second source of error is the empirical approximation of the Lagrangian function, quantified by the sample complexity $\zeta(M,\delta)$, and can be reduced by increasing the sample size $M$.



\begin{algorithm}[t]
\caption{Primal-dual Training Algorithm for Constrained Unrolled Transformers}
\label{alg:primal-dual-transformer-cl}
\begin{algorithmic}[1]
\STATE \textit{Inputs}: number of epochs, batch size $M$, step sizes~$\eta_1, \eta_2 > 0$
\STATE \textit{Initialize}: $\bbT$, $\bblam = \{ \lambda_\ell \}_{\ell=1}^L$ with $\bblam \geq 0$
\FOR{each epoch}
    \FOR{each batch of samples $\bbX=\{\bbX^{(m)}\}_{m=1}^M$}
        \STATE Execute the forward pass $\Phi(\bbX^{(m)};\bbT)$ according to \eqref{eqn_our_transformer_softmax} and \eqref{eqn_our_transformer_relu} for all $m$
        \STATE Compute the Lagrangian $\widehat{\ccalL}(\bbT, \bblam)$
        \STATE Primal update: $\bbT = \bbT - \eta_1 \nabla_{\bbT} \widehat{\ccalL}(\bbT, \bblam)$
        \STATE Dual update: $\bblam = \big[\bblam + \eta_2 \nabla_{\bblam} \widehat{\ccalL}(\bbT, \bblam)\big]_+$
    \ENDFOR
\ENDFOR
\STATE \textbf{Return} $\bbT^* = \bbT$.
\end{algorithmic}
\end{algorithm}

Theorem~\ref{thm:clt_gap}, however, makes it challenging to conclude convergence guarantees, since it provides only high-probability near-feasibility guarantees. That is, even though \eqref{eqn_constrained_transformer} requires each layer to enforce descent in $f$, the near-feasible solution satisfies each constraint with probability $1-\delta$. The probability that all $L$ constraints are satisfied is then $(1-\delta)^L$. For large $L$, this probability could be small, implying that this theorem alone is insufficient to establish the required layerwise descent and convergence guarantees.

In Theorem \ref{thm:convergence}, we show that the constrained unrolled transformer, obtained by \eqref{eq:empirical_dual}, converges asymptotically--in the number of layers--to the optimal value of the statistical loss. 
%
\begin{theorem}[Convergence Guarantees]\label{thm:convergence} Given a constrained unrolled transformer $\bbT^*$, which satisfies Theorem \ref{thm:clt_gap}, and a functional minimizer $\bbY^*$  of the statistical loss. Let $\alpha_l=\alpha$, for all $l$. Then, under Assumption \ref{asm:convergence} (see Appendix \ref{app:convergence}), it holds that 
%
\begin{align}\label{eq:thm_convergence}
    &\lim_{l \rightarrow \infty}   \min_{k\leq l} \ \mbE \Big[  
    f\big(\,\bbX, \, \Phi_k(\bbX;\bbT^*)\, \big)  - 
    f\big(\bbX, \bbY^*\big) \,  \Big] \nonumber \\
    &\qquad\qquad\qquad~\leq~ \frac{1}{\alpha} \left(\zeta(M, \delta) + \frac{C\delta \nu}{1-\delta}\right) \ \ a.s.
\end{align}
%
\end{theorem}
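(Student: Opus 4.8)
The plan is to reduce the statement to a one-dimensional recursion on the expected per-layer suboptimality and then unroll it. Write $c_l := \mbE[f(\bbX,\Phi_l(\bbX;\bbT^*))]$ for the expected loss after layer $l$, $P_* := \mbE[f(\bbX,\bbY^*)]$ for the functional optimum, and $d_l := c_l - P_*$ for the quantity whose running minimum we must control. Since $\min_{k\le l} d_k$ is non-increasing in $l$, the limit in \eqref{eq:thm_convergence} exists, so it suffices to produce an eventual upper bound on $d_l$ itself (the running minimum is then trivially no larger).

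The core step is to turn the layerwise descent \emph{constraint} into a recursive \emph{inequality}. For each fixed $l$, Theorem~\ref{thm:clt_gap} gives, with probability at least $1-\delta$, the near-feasibility bound $c_l \le (1-\alpha)c_{l-1} + \zeta(M,\delta)$; on the complementary event (probability at most $\delta$) the slack $c_l-(1-\alpha)c_{l-1}$ must instead be bounded by hand, and this is where Assumption~\ref{asm:convergence} together with the duality-gap estimate $|P^*-\widehat{D}^*|\le C\nu+\rho\,\zeta(M,\delta)$ of Theorem~\ref{thm:clt_gap} enters: the worst-case violation is controlled by a term of order $C\nu$, and weighting the two events (so that the bad event carries relative weight $\delta/(1-\delta)$) yields an effective slack $\varepsilon := \zeta(M,\delta) + \frac{C\delta\nu}{1-\delta}$. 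Using $P_*\ge 0$ to discard the beneficial $-\alpha P_*$ term, this becomes $d_l \le (1-\alpha)\,d_{l-1} + \varepsilon$.

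Unrolling this geometric recursion -- here is where the hypothesis $\alpha_l\equiv\alpha$ with $0<\alpha<1$ is essential -- gives $d_l \le (1-\alpha)^l d_0 + \varepsilon\sum_{j\ge 0}(1-\alpha)^j = (1-\alpha)^l d_0 + \varepsilon/\alpha$. Since $\min_{k\le l}d_k\le d_l$, taking the running minimum and letting $l\to\infty$ kills the transient $(1-\alpha)^l d_0$, leaving $\lim_{l\to\infty}\min_{k\le l} d_k \le \varepsilon/\alpha = \frac1\alpha\big(\zeta(M,\delta) + \frac{C\delta\nu}{1-\delta}\big)$, which is exactly \eqref{eq:thm_convergence}. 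The almost-sure qualifier is then obtained by combining the per-layer high-probability guarantees with Assumption~\ref{asm:convergence}, e.g.\ via a Borel--Cantelli or monotone-convergence passage from the in-expectation bound to the pathwise one.

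I expect the main obstacle to be exactly the pitfall identified in the paragraph preceding the theorem: each of the (eventually infinitely many) constraints is only near-feasible with probability $1-\delta$, so a naive union bound degrades to $(1-\delta)^L\to 0$ as the depth grows, and the recursion above is worthless if it can be broken at every layer. The resolution has two ingredients that must be made precise -- (i) one never asks for descent at \emph{every} layer, only for the running minimum $\min_{k\le l} d_k$, which is robust to sporadic violations, and (ii) the rare violations are absorbed into an additive slack rather than destroying the contraction, which is precisely what costs the extra $\frac{C\delta\nu}{1-\delta}$ relative to the clean bound $\zeta/\alpha$ one would get if all constraints held simultaneously. Pinning down that slack constant from Assumption~\ref{asm:convergence}, and upgrading the conclusion from ``in expectation over the training draw'' to ``almost surely,'' is the delicate part of the argument.
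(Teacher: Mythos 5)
Your proposal reaches the same recursion as the paper's proof---the same good/bad event decomposition conditioned on whether the layer-$l$ descent constraint holds, the same use of Theorem~\ref{thm:clt_gap} on the good event, and the same use of Lipschitz continuity together with the universal-approximation part of Assumption~\ref{asm:convergence} to absorb the bad event into a $C\nu$ term. But the analysis of the recursion is genuinely different. The paper defines $Z_l = \mbE[f(\bbX,\bbY_l) - f(\bbX,\bbY^*)]$, introduces indicator-weighted auxiliary sequences $\beta_l = Z_l \mathbf{1}\{Z_l^{\text{best}}>\eta\}$ and $\gamma_l = \alpha(Z_l-\eta)\mathbf{1}\{Z_l^{\text{best}}>\eta\}$, verifies a supermartingale inequality, and invokes the Robbins--Siegmund theorem to conclude $\sum\gamma_l<\infty$ a.s.\ and hence $\liminf(\alpha Z_l - \alpha\eta)\mathbf{1}\{Z_l^{\text{best}}>\eta\}=0$. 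You instead unroll the scalar contraction $d_l \le (1-\alpha)d_{l-1}+\varepsilon$ geometrically, giving $\limsup_l d_l \le \varepsilon/\alpha$ directly. Since the paper itself notes that $Z_l$ ``has a degenerate distribution,'' i.e.\ is deterministic once $\bbT^*$ is fixed, the supermartingale machinery is not doing essential work, and your geometric unrolling is a legitimate simplification that reaches the same bound. In fact, pushing your approach on the paper's exact recursion $Z_l \le (1-\delta)(1-\alpha)Z_{l-1} + (1-\delta)\zeta + C\delta\nu$ would yield the slightly tighter fixed point $[(1-\delta)\zeta + C\delta\nu]/[\alpha(1-\delta)+\delta]$; the paper's choice of $\eta = \frac{1}{\alpha}(\zeta+\frac{C\delta\nu}{1-\delta})$ is what its supermartingale construction naturally produces, and is what the theorem states. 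You also make explicit the step of discarding the $-\alpha P_*$ term via $P_*\ge 0$, which the paper applies silently in passing from Theorem~\ref{thm:clt_gap}'s constraint to the inequality on $Z_l$; this is a genuine assumption (nonnegativity of the loss at the optimum) and you are right to flag it. Finally, you correctly identify the ``almost surely'' qualifier as the soft spot: the recursion is derived by total expectation over the constraint-satisfaction event, so the conclusion naturally lives at the level of expectations, and neither your Borel--Cantelli handwave nor the paper's invocation of Robbins--Siegmund with degenerate $Z_l$ fully discharges the pathwise claim. That weakness is shared with the paper's own proof and does not distinguish your approach from theirs.
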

Theorem \ref{thm:convergence} states that the constrained unrolled transformer is guaranteed to attain the optimal performance up to an error that is controlled by the step size $\alpha$, the sample size $M$ and the expressivity of the model class $\nu$. The proof proceeds by showing that, despite the aforementioned probabilistic constraint violations, the sequence of layer losses forms a supermartingale and converges infinitely often to a sub-optimal region, characterized by the bound. The full proof of Theorem \ref{thm:convergence} is relegated to Appendix \ref{app:convergence}.  The sample complexity controls the failure probability $\delta$, and for sufficiently large $M$, we can keep $\delta$ arbitrary small and eliminate the second term of the bound. Moreover, large $\alpha$, which corresponds to aggressive reductions, shrinks the size of the sub-optimal region and provides tighter guarantees. Although this result holds asymptotically, our numerical results demonstrate that we can achieve the same performance of a \emph{good} local minimizer of \eqref{eqn_training_unrolled_transformer_problem} in a finite number of layers.

Imposing layerwise constraints endows transformers with monotone-descent inductive biases. Such descent properties provide classical optimization methods with stability under perturbations and generalization across problem instances. By aligning the transformer’s layer-to-layer dynamics with these properties, the transformer maintains comparable performance under distribution shifts. The following corollary and theorem formalize this effect and establish OOD generalization guarantees for transformers trained with descent constraints.
\begin{corollary}\label{cor:OOD_descent}
Let $\bbT^*$ be a constrained unrolled transformer trained on a data distribution $D_x$. Then, for any shifted distribution $D_{x'}$ that satisfies Assumption \ref{asm:OOD} (see Appendix \ref{app:OOD_descent}), it holds with probability $1-\delta$, for all $l$:
\begin{equation}
\begin{split}
    &\mbE_{D_x'} \big[ f \big(\bbX, \Phi_l(\bbX;\bbT^*)\big)  \big] \\
    &\quad - (1-\alpha_l) \, \mbE_{D_{x'}} \big[
     f \big(\bbX, \Phi_{l-1}(\bbX;\bbT^*)\big) \big] \\
    &\qquad\qquad\qquad\qquad\qquad\qquad~\leq~ \zeta(M, \delta) + C \tau, 
\end{split}
\end{equation}
where $\tau = d(D_x, D_{x'}) + d(D_{x'}, D_x)$, and $d(\cdot, \cdot)$ is a bounded asymmetric distance metric.
\end{corollary}
\begin{theorem}[Out-of-Distribution Guarantees]\label{cor:OOD_generalization}
Let $\widehat\bbY^*$ be a functional minimizer of the statistical loss evaluated on $D_{x'}$. Then, the constrained unrolled transformer trained on $D_x$ satisfies
\begin{align}
    \lim_{l \rightarrow \infty}   \min_{k\leq l} \ \mbE_{D_{x'}} \Big[  
    f\big(\bbX, \Phi_k(\bbX;\bbT^*)\big)  - 
    f\big(\bbX, \widehat\bbY^*\big)  \Big] \nonumber \\
    \leq \frac{1}{\alpha} \left(\zeta(M, \delta) + C\tau +\frac{C \delta \nu}{1-\delta}\right).
    \end{align}
\end{theorem}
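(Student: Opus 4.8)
The plan is to replay the argument of Theorem \ref{thm:convergence}, but with every expectation taken under the shifted distribution $D_{x'}$ and with the in-distribution near-feasibility bound $\zeta(M,\delta)$ replaced by the OOD near-feasibility bound $\zeta(M,\delta)+C\tau$ supplied by Corollary \ref{cor:OOD_descent}. Concretely, set $\phi_l := \mbE_{D_{x'}}\big[ f(\bbX,\Phi_l(\bbX;\bbT^*)) - f(\bbX,\widehat\bbY^*)\big]$, the suboptimality of layer $l$ measured against the $D_{x'}$-optimal functional minimizer $\widehat\bbY^*$. Since $\widehat\bbY^*$ minimizes $\mbE_{D_{x'}}[f(\bbX,\cdot)]$, we have $\phi_l \ge 0$ for all $l$, so $\{\phi_l\}$ is a nonnegative sequence; the goal is to show $\liminf_l \phi_l$ (equivalently $\lim_l \min_{k\le l}\phi_l$) is bounded by the stated quantity.

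First I would rewrite Corollary \ref{cor:OOD_descent}'s inequality in terms of $\phi_l$. With $\alpha_l=\alpha$ for all $l$, the constraint $\mbE_{D_{x'}}[f(\bbX,\Phi_l)] \le (1-\alpha)\mbE_{D_{x'}}[f(\bbX,\Phi_{l-1})] + \zeta(M,\delta)+C\tau$ becomes, after subtracting $\mbE_{D_{x'}}[f(\bbX,\widehat\bbY^*)]$ from both sides and using $(1-\alpha)\mbE_{D_{x'}}[f(\bbX,\widehat\bbY^*)] \le \mbE_{D_{x'}}[f(\bbX,\widehat\bbY^*)]$,
\begin{equation}
\phi_l ~\le~ (1-\alpha)\,\phi_{l-1} + \big(\zeta(M,\delta)+C\tau\big), \qquad \text{with probability } 1-\delta \text{ for each } l.
\end{equation}
This is exactly the recursion that drives the supermartingale argument of Theorem \ref{thm:convergence}. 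As in that proof, the inequality only holds on an event of probability $1-\delta$ per layer, so I would introduce the same stopped/averaged supermartingale construction used in Appendix \ref{app:convergence}: on the good event the one-step contraction gives $\mbE[\phi_l \mid \ccalF_{l-1}] \le (1-\alpha)\phi_{l-1} + \zeta+C\tau$, and the $\delta$-probability bad event contributes the additional $\tfrac{C\delta\nu}{1-\delta}$ correction term (the bad-event loss is controlled by the expressivity-type bound $\nu$ exactly as in Theorem \ref{thm:convergence}, now with no distribution-dependent change since the bad-event bound is on the parameter class, not the data). Taking conditional expectations and unrolling the geometric recursion, the fixed point of $x = (1-\alpha)x + (\zeta+C\tau) + \tfrac{C\delta\nu}{1-\delta}$ is $x^\star = \tfrac{1}{\alpha}\big(\zeta(M,\delta)+C\tau+\tfrac{C\delta\nu}{1-\delta}\big)$, which is precisely the claimed bound; a supermartingale convergence / Borel–Cantelli argument then shows $\min_{k\le l}\phi_k$ converges almost surely (hence the limit exists and is $\le x^\star$), and taking expectations gives the stated inequality.

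The main obstacle is the same one flagged in the discussion after Theorem \ref{thm:clt_gap}: the per-layer near-feasibility holds only with probability $1-\delta$, and naively $(1-\delta)^L \to 0$. The nontrivial content is therefore showing that the occasional constraint violations do not derail the contraction — this is handled by the supermartingale machinery already developed for Theorem \ref{thm:convergence}, and the only genuinely new ingredient here is swapping $\zeta(M,\delta)$ for $\zeta(M,\delta)+C\tau$ via Corollary \ref{cor:OOD_descent}, which is legitimate because Assumption \ref{asm:OOD} guarantees the distribution-shift penalty $\tau = d(D_x,D_{x'})+d(D_{x'},D_x)$ is bounded and enters additively and uniformly over $l$. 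One should double-check that Assumption \ref{asm:convergence} (used in Theorem \ref{thm:convergence}) transfers to the $D_{x'}$-evaluated losses — I would either state this as part of Assumption \ref{asm:OOD} or verify it follows from the boundedness of $d(\cdot,\cdot)$ together with Assumption \ref{asm:convergence} on $D_x$; modulo that bookkeeping, the proof is a direct corollary of Theorem \ref{thm:convergence} and Corollary \ref{cor:OOD_descent}.
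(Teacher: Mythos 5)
Your proposal matches the paper's proof: Appendix~\ref{app:OOD_generalization} says, almost verbatim, that the argument ``proceeds identically to that of Theorem~\ref{thm:convergence}, except it is initialized with the inequality in Corollary~\ref{cor:OOD_descent},'' which is exactly the replay you describe—swap the in-distribution near-feasibility bound $\zeta(M,\delta)$ for $\zeta(M,\delta)+C\tau$, rerun the supermartingale machinery, and read off the fixed point $\tfrac{1}{\alpha}\bigl(\zeta(M,\delta)+C\tau+\tfrac{C\delta\nu}{1-\delta}\bigr)$. Your closing caveat that Assumption~\ref{asm:convergence} must also hold under $D_{x'}$ (via boundedness of $d$) is a fair bookkeeping point that the paper leaves implicit, but it does not represent a departure from the paper's route.
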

The proofs are in Appendices \ref{app:OOD_descent} and \ref{app:OOD_generalization}. Corollary \ref{cor:OOD_descent} states that a constrained unrolled transformer trained on one distribution also satisfies the descent constraints under a shifted distribution $D_{x'}$ up to an additional error proportional to the distance between the two distributions. Then, it follows that the constrained unrolled transformer converges to the optimal of the statistical loss evaluated on $D_{x'}$ up to the same additional error bound, as formalized in Theorem \ref{cor:OOD_generalization}. 

The assumptions under which our theoretical results hold are readily achievable in practice (see Appendix \ref{app:CLT}). However, the feasibility assumption is not widely guaranteed, as it requires the constrained problem \eqref{eqn_constrained_transformer} to be strictly feasible. To enforce this condition, we consider a resilient constrained learning relaxation, as proposed in \citep{hounie23}, wherein we introduce an additive slack variable $\bbu\in \reals_+^{L}$ into the constraints and augment the loss function $f$ with a quadratic penalty term $\frac{\beta}{2}\|\bbu\|_2^2$. This transforms the saddle point problem into the regularized formulation:
$$\max_{\bblam} \min_{\bbT, \bbu} \ \widehat{\ccalL}(\bbT,\bblam) + \frac{\beta}{2}\|\bbu\|_2^2 - \bbu^\top \bblam.$$
Solving this regularized problem is equivalent to solving \eqref{eqn_constrained_transformer} via Algorithm \eqref{alg:primal-dual-transformer-cl} with a weight decay factor of $1- \frac{1}{\beta}$ applied to the dual update; see Appendix \ref{app:resilience} for more details.

\begin{figure*}[t]
    \centering
    \includegraphics[width=0.98\linewidth]{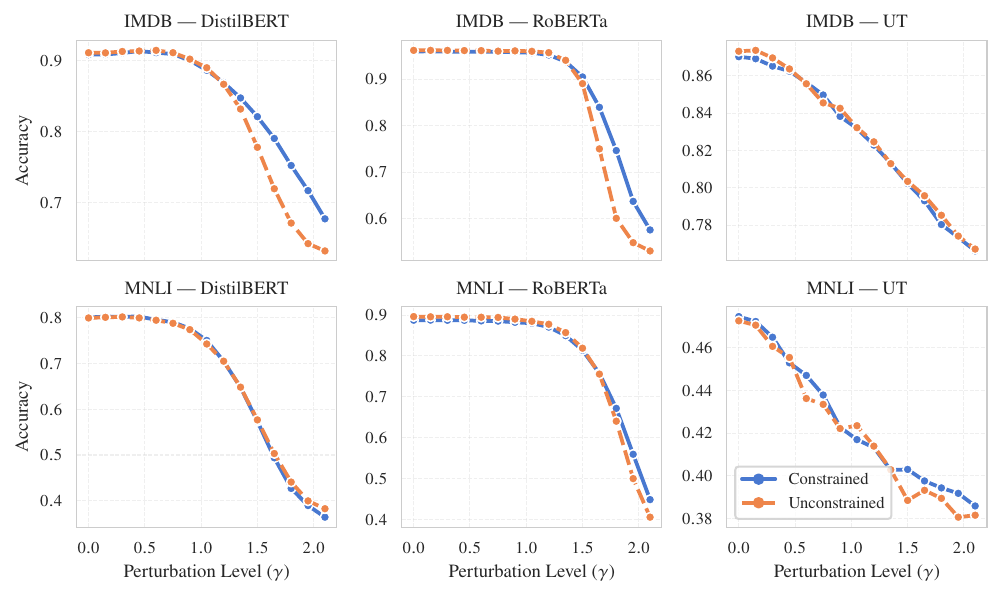}
    \caption{\textbf{Text classification accuracy vs. test perturbation} (Accuracy $\uparrow$, higher is better).  
    Rows are datasets, and columns are architectures. Solid lines denote constrained models; dashed lines denote unconstrained. Each plot shows accuracy over increasing test perturbation levels ($\gamma$). All models were trained with perturbation $\gamma_{\text{train}}=0.8$.}
    \label{fig:language_all_best_ood_curves_embedding_perturb_std_08}
    \vspace{-5pt}
\end{figure*}

\section{Text Classification with Perturbed Embeddings}\label{sec:language_classification}

Our first use case is text classification in the presence of input noise. The task is to minimize cross-entropy, $f(\tilde{\bbX},\bbq,\Phi(\tilde{\bbX};\bbT)) = -\sum_{c=1}^C q_c\log{\psi(\Phi(\tilde{\bbX};\bbT))_c}$. Here, $\tilde\bbX = [\tilde\bbx_0, \dots, \tilde\bbx_T]$ are perturbed token embeddings, where $\tilde\bbx_t = \bbx_t + \bbepsilon_t$, for all $t$, and $\bbepsilon_t \sim {\ccalN}(0, \sigma^2 {\bbI})$. The class labels are $\bbq \in \reals^C$, and $\psi(\cdot)_c$ is the $c$-th component of a readout layer, $\psi(\cdot): \reals^{N} \rightarrow [0,1]^{C}$. We consider two different language understanding tasks: IMDb sentiment classification \citep{maasLearningWordVectors2011} and Multi-Genre Natural Language Inference (MNLI) from the GLUE benchmark \citep{wangGLUEMultiTaskBenchmark2019}.

\textbf{In-distribution (ID) and out-of-distribution (OOD) evaluation.} In light of Theorem \ref{cor:OOD_generalization}, which provides OOD generalization guarantees for models trained to satisfy descent constraints, we aim to contrast the performance of constrained and unconstrained models under distribution shifts. To accomplish this, we train models with perturbation level $\gamma_\text{train}$, and evaluate on testing perturbations $\gamma \in [0.0,1.0]$. We refer to the testing perturbation levels $\gamma\leq\gamma_\text{train}$ as ID, and to larger perturbations as OOD. Noisy signals $\tilde\bbX$ are generated with $\sigma=\gamma\cdot\sigma_x$, where $\sigma_x$ is the standard deviation of the clean data $\bbX$.

\textbf{Architectures.} We adapt two pretrained language models, DistilBERT~\citep{sanh2020DistilBERT} ($L=12)$ and RoBERTa~\citep{liuRoBERTaRobustlyOptimized2019} ($L=24$) and one unrolled transformer, UT \citep{yang22}. The architecture of the classifiers is identical to \eqref{eqn_our_transformer_softmax} and \eqref{eqn_our_transformer_relu}, except for the addition of a readout $\psi(\cdot)$, implemented as a single linear layer followed by a softmax nonlinearity. This readout is shared across all transformer layers to extract a label prediction from intermediate outputs' representations. The input to the readout is the \texttt{[CLS]} token for DistilBERT, and the average pooling of the output vectors of $\bbY_l$ for unrolled transformer.

\textbf{Training.} We train unconstrained models with ERM, whereas constrained models are trained with our primal-dual algorithm under constant descent schedule, initialized from a reference value $f_0$.  We tune the constraint and dual hyperparameters for each constrained model and report its best. For a fair comparison, we tune unconstrained models with the same number of runs and report its best as well. The experiment setting is $\gamma_\text{train}=0.8$, and $L\in \{3,5,7,9\}$. We present summarized results on additional values of training perturbation in Appendix \ref{app:extended_results}.

\textbf{Out-of-Distribution Accuracy.}
We observe a notable improvement in OOD robustness for DistilBERT and RoBERTa models on IMDb, as illustrated in Figure~\ref{fig:language_all_best_ood_curves_embedding_perturb_std_08}. The gap between constrained and unconstrained is larger for RoBERTa, suggesting that the benefits of descent constraints are more significant with larger models. Constrained UT runs show comparable behavior to unconstrained runs. In general, the effects of constraints in MNLI are less pronounced.  

\textbf{Preserved in-distribution accuracy.} In Figure~\ref{fig:language_all_best_ood_curves_embedding_perturb_std_08}, the accuracies at lower perturbation levels are the in-distribution performance. In all scenarios, we observe comparable results between constrained and unconstrained settings. In some cases, such as DistilBERT-IMDb, a marginal tradeoff between ID and OOD performance is observed.

\textbf{Differential Privacy.} This robust classifier is of interest in the context of local differential privacy. By degrading more smoothly to various levels of input perturbation, our model can support different degrees of privacy as required by a user at inference time, with less performance degradation than training only on perturbed inputs. A more thorough exploration of the application of our method to differential privacy is a promising future work direction.

\begin{figure}[t]
  \centering
  \includegraphics[width=0.95\linewidth]{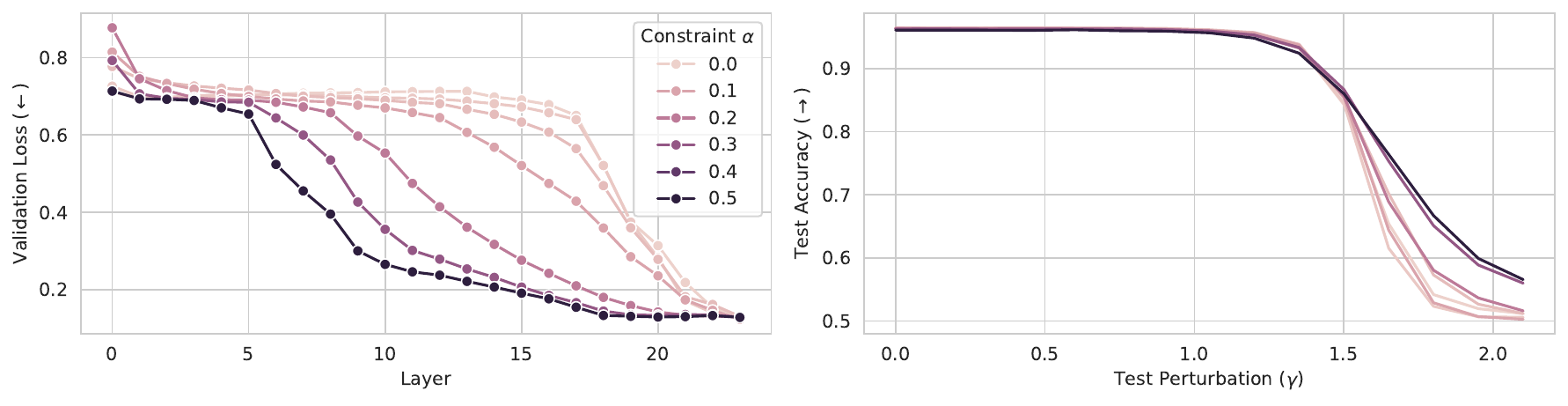}
    \caption{\textbf{Ablation analysis} of constrained RoBERTa on IMDb. Left: layerwise validation loss ($\downarrow$ lower is better), right: test accuracy across test perturbation levels $\uparrow$ higher is better. Increasing the constraint step size $\alpha$ induces monotonic descent along the layers, while improving OOD robustness.}
    \label{fig:ablation_analysis}
\end{figure}

\subsection{Robustness to other perturbations} \label{sec:other_perturbs}
In the previous section, we showed that unrolled transformers are more robust than conventional transformers when subject to Gaussian perturbations. However, the results of Corollary~\ref{cor:OOD_generalization} hold for any shifted distribution that is a bounded distance away from the original distribution. We now present evidence that shows that unrolled models generalize out-of-distribution under two additional kinds of perturbation.

\paragraph{Uniform perturbations.} We sample embedding perturbations according to $\bbepsilon_t \sim \ccalU(-\sqrt{3}\gamma,\sqrt{3}\gamma)$. This perturbation schedule introduces noise of similar magnitude as in the Gaussian setup. We can observe in Figure~\ref{fig:uniform_perturb} that unrolled models degrade more gracefully across increasing values of $\gamma$.

\begin{figure}[h]
    \centering
    \includegraphics[width=0.98\linewidth]{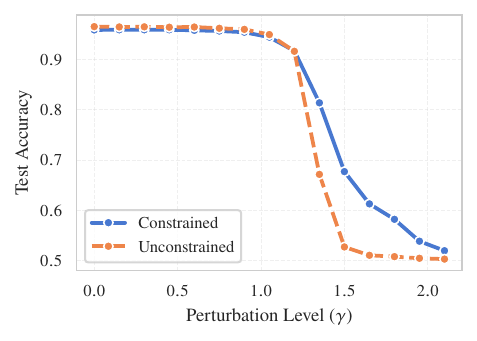}
    \caption{\textbf{Language with uniform perturbations.} OOD Plot of constrained and unconstrained RoBERTa on IMDb in the uniform perturbation setting (accuracy, $\uparrow$ higher is better).}
    \label{fig:uniform_perturb}
\end{figure}

\todo{Confirm the noise}

\paragraph{Discrete perturbations.} For a more realistic case, we model discrete perturbations at the text level by simultaneously applying two types of modification to the source texts. First, for each character in the text, with probability $p$, replace with a random character. Second, with probability $p$, remove a word from the text. This type of perturbation aims to simulate real-world distortions to a language model's input, for instance due to OCR or audio transcription errors. Figure~\ref{fig:discrete_perturb} shows that unrolled RoBERTa retains uniformly superior accuracy across various levels of discrete perturbation.

\begin{figure}[h]
    \centering
    \includegraphics[width=0.98\linewidth]{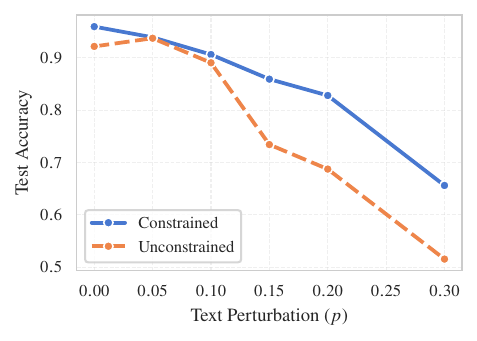}
    \caption{\textbf{Language with discrete perturbations.} OOD Plot of constrained and unconstrained RoBERTa on IMDb in the text perturbation setting (accuracy, $\uparrow$ higher is better).}
    \label{fig:discrete_perturb}
\end{figure}

\subsection{Ablation Analysis} \label{sec:ablation_analysis}

Our training method introduces two main hyperparameters: the layerwise step size $\alpha$, and the dual learning rate $\eta_2$. In Figure \ref{fig:ablation_analysis}, we compare the ID and OOD performances for varying values of $\alpha$. Consistent with our theory, as $\alpha$ increases, we observe an improvement in OOD accuracy, and a less than 0.3\% in-distribution degradation. In Appendix \ref{app:ablation_setup}, we also provide experimental details and show that performance is consistent across different values of $\eta_2$.

\section{Unrolling on Large Scale Models} \label{sec:llm_experiment}

The pretrained models of Section \ref{sec:language_classification} suggest that the OOD robustness effect remains present with increased model size. To test whether this trend is persistent, we consider supervised fine-tuning on Llama 3.1 8B \cite{dubey2024llama} on the Alpaca instruction dataset \cite{taori2023alpaca}. The setup is as in Section \ref{sec:language_classification}: the loss is cross-entropy, with $C$ being the vocabulary size, and we evaluate robustness by calculating mean token accuracy on a test set, with increasingly perturbed sequences. Figure \ref{fig:llm_result} shows that the constrained model exhibits improved layerwise descent, more robustness to input perturbations, and virtually no ID degradation. For downstream task evaluation, we run AlpacaEval \cite{taori2023alpaca} with a GPT-5 judge comparing the outputs of constrained and unconstrained models. For completions with input perturbations of $\gamma=0.0$, the length-controlled win rate of constrained over unconstrained is 50\%, indicating that in-distribution performance is preserved. With $\gamma=1.5$, the length-controlled win rate is 69.93\% favoring the constrained model, which confirms that the model retains its capabilities in the presence of input noise. More empirical exploration is necessary to understand the effects on downstream task performance. Appendix \ref{app:llm_details} includes experimental details and completion examples.

\begin{figure}[h]
    \centering
    \includegraphics[width=0.98\linewidth]{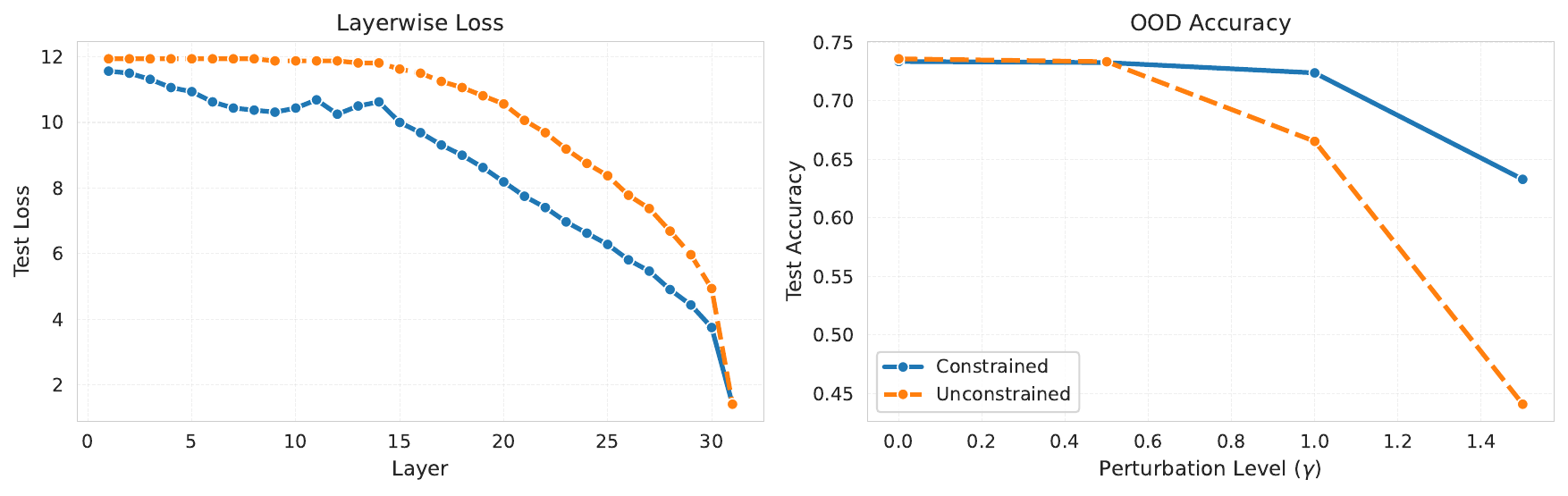}
    \caption{\textbf{Constrained Llama 8B.} Left: layerwise test loss ($\downarrow$ lower is better). Right: test token accuracy across perturbation levels ($\uparrow$ higher is better).}
    \label{fig:llm_result}
\end{figure}


\begin{figure*}[t]
    \centering
    \includegraphics[width=0.8\linewidth, height=0.515\linewidth]{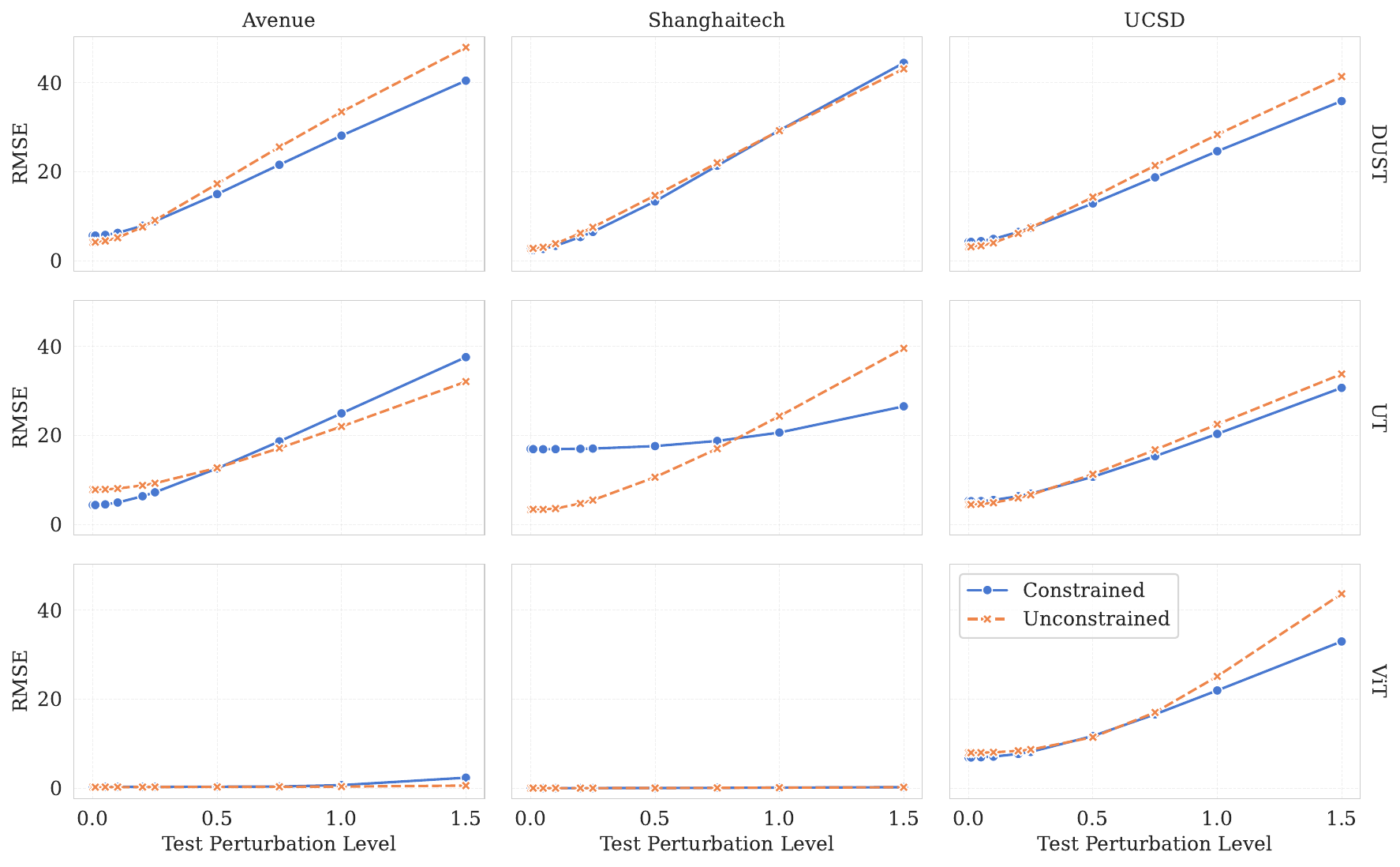}
    \caption{\textbf{Video denoising error vs. test perturbation $\gamma$} (RMSE $\downarrow$ , lower is better). Columns are datasets, rows are architectures. Solid lines are constrained models; dashed lines are unconstrained. Each plot shows RMSE over increasing test perturbation levels ($\gamma$). All models were trained with perturbation $\gamma_\text{train}=0.13$.}
    \label{fig:video_all_ood_gamma_013_best}
\end{figure*} 

\section{Video Denoising}\label{sec:video_denoising}
We now consider an application to video denoising, which is to reconstruct a sequence of signals $\bbX = [\bbx_0, \dots, \bbx_T]$
from noisy measurements $\tilde\bbX = [\tilde\bbx_0, \dots, \tilde\bbx_T]$, where $\tilde\bbx_t = \bbx_t + \bbepsilon_t$, for all $t$, and $\bbepsilon_t \sim {\ccalN}(0, \sigma^2 {\bbI})$. The goal is to minimize the reconstruction loss 
$f(\bbX,\Phi(\tilde\bbX;\bbT)) = \frac{1}{T} \|  \bbX - \bbY  \|_\ccalF^2$, 
where $\bbY = \Phi(\tilde{\bbX};\bbT)$ is the output sequence. Our experimental setup follows the denoising experiments in \cite{weerdt23} and \cite{luongDesigningInterpretableRecurrent2021}. We train and evaluate our models on the CUHK Avenue \cite{luAbnormalEventDetection2013}, UCSD Anomaly Detection \cite{mahadevanAnomalyDetectionCrowded2010} and ShanghaiTech Campus \cite{luoRevisitSparseCoding2017} datasets. Refer to Appendix \ref{app:experiment_details} for more details.

\textbf{Architectures.} We train three transformer architectures. The first is a standard pretrained Vision Transformer (ViT) \citep{dosovitskiy2021}. The other two are symmetric transformers from the unrolling literature: Deep Unfolded Sequential Transformer (DUST) \citep{weerdt23} and Unrolled Transformer (UT) \citep{yang22}. All models follow the formulation in \eqref{eqn_our_transformer_softmax} and \eqref{eqn_our_transformer_relu}, except that in DUST and UT, the learnable parameters of the attention layers are tied, i.e., $\bbQ_l=\bbK_l=\bbV_l =\bbD \in \reals^{D \times N}$, for all $l$. The key difference between DUST and UT lies in the choice of nonlinearities: DUST employs a soft-threshold nonlinearity, while UT uses ReLU. Notably, in DUST, the parameter $\bbD$ is interpreted as an overcomplete dictionary to create sparse reconstructions of video frames. Consequently, the dimensions satisfy $N < D$, marking a significant architectural deviation from conventional transformers, where attention matrices are low-rank projections. We defer further discussion about DUST and UT, along with additional implementation details, to Appendix \ref{app:unrolling}.

\textbf{Training.} We train each model under two settings: i) without constraints via the ERM formulation in \eqref{eqn_training_unrolled_transformer_problem}, and ii) with descent constraints as specified in \eqref{eqn_constrained_transformer}, employing a constant schedule $\alpha_l = \alpha$.
To prevent the model from trivially satisfying the constraints by increasing the initial value of $f(\cdot)$, we include an additional constraint $f(\bbX,\Phi_1(\tilde\bbX;\bbT)) \leq (1-\alpha)f_0$, where $f_0$ is a fixed reference value. The unconstrained models are trained using ADAM, and the constrained ones are trained via Algorithm \ref{alg:primal-dual-transformer-cl}.
For each dataset and each transformer architecture, we run experiments with different numbers of layers $L \in \{3,5,7\}$ and varying levels of perturbations. Here we present the results obtained for $\gamma_\text{train}=0.13$ by selecting the best run across all values of $L$, and leave additional results for other training perturbations to Appendix \ref{app:extended_results}. All runs share the hyperparameters selected via a grid search, including the step size $\alpha$, the reference value $f_0$, the learning rates $\eta_1$, $\eta_2$, and the resilience coefficient $\beta$.

\textbf{Out-of-Distribution RMSE.} Figure~\ref{fig:video_all_ood_gamma_013_best} reports results at training perturbation level $\gamma_\text{train}=0.13$, where the best $L$ for each model is selected based on validation RMSE. We observe that in five out of nine cases, constrained models outperform their unconstrained counterparts. Notably, on UCSD, all constrained models show a consistently lower RMSE score for values of $\gamma \geq 1$. In three cases, constrained models have comparable performance. Finally, in one case (UT-Avenue), OOD performance is slightly degraded.

\textbf{In-Distribution tradeoff.} The OOD curves of Figure \ref{fig:video_all_ood_gamma_013_best} also allow us to analyze the tradeoff between ID and OOD performance. One noteworthy case is UT on ShanghaiTech, where constrained UT trades off a higher ID reconstruction error for lower error at higher perturbation levels, with an inflection point around $\gamma=0.75$. In general, constrained models show slight to no degradation in RMSE relative to unconstrained ones. These results suggest that descent constraints improve robustness with little to no sacrifice in ID performance.

\textbf{Increasing depth influences OOD performance} Finally, we show the effects of increasing the number of layers in Figure \ref{fig:dust_ucsd_layer_ablations}. As we train deeper models, we observe a decrease in RMSE for higher test $\gamma$ , an effect that is not present in unconstrained models under the same settings. This is consistent with our theory: as the number of layers increases, the unrolled models converge to the optimal of the statistical loss under shifted distributions.
\begin{figure}[t]
    \centering
    \includegraphics[width=\columnwidth]{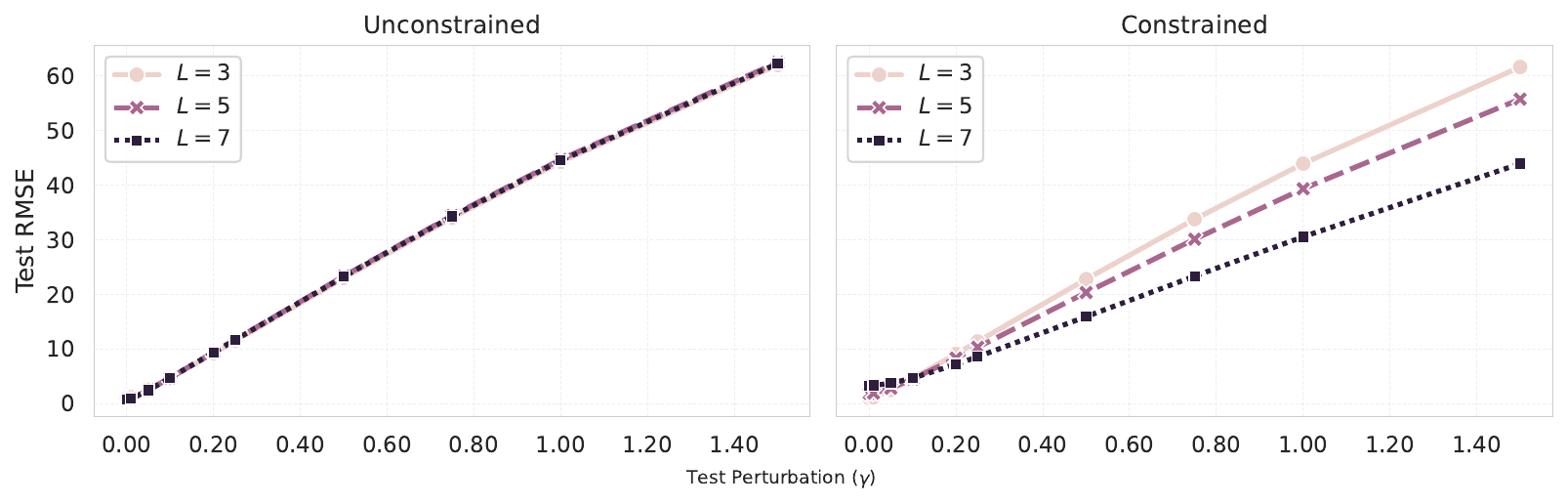}
    \caption{Effects of descent constraints on the OOD performance of (left, $\downarrow$ lower is better) unconstrained DUST and (right, $\downarrow$ lower is better) constrained DUST trained on UCSD with training $\gamma=0$. Deeper constrained models show improved OOD performance, while increasing depth does not benefit unconstrained models.}
    \label{fig:dust_ucsd_layer_ablations}
    \vspace{-10pt}
\end{figure}

\section{Conclusions}

This work presented a framework for training unrolled transformers by imposing descent constraints on the intermediate outputs of the model. We developed a dual training algorithm for constrained transformers and showed that descent constraints ensure layerwise convergence to a near-optimal value of the statistical loss. We showed theoretically that our unrolled transformers retain their descent properties under distribution shifts, and that this enables improved OOD generalization. This was verified empirically with various transformer-based architectures in language classification with perturbed embeddings, LLM supervised finetuning, and video denoising. This robust transformer has a variety of downstream applications. In language settings, it could be used to support varying levels of local differential privacy at inference time for a better privacy-utility tradeoff. Other future directions include combining descent constraints with primal-dual neural algorithmic reasoning models such as~\cite{he2025primaldual}, exploring alternative descent constraint formulations such as gradient conditions, and applying unrolled transformers to other iterative improvement problems, as in the case of diffusion or latent reasoning.


\newpage

\section*{Impact Statement}
This paper presents work whose goal is to advance the field of Machine
Learning. There are many potential societal consequences of our work, none of
which we feel must be specifically highlighted here.

\section*{Acknowledgements}
We are very grateful to our colleagues who have engaged with the ideas in this work and contributed to improving its quality. In particular, we thank Ignacio Hounie, Berkay Uslu, Alex Nguyen-Le, Shervin Khalafi, Sergio Rozada, and Antonio Marques. We would also like to thank Boris Joukovsky for providing access to the codebase of the video experiments, and to Yongyi Yang for clarifying the transformer unrolling proofs that have been reproduced in Appendix~\ref{app:appendix_yang_proof}.

\bibliographystyle{plainnat}
\bibliography{Bib}

\newpage
\appendix
\onecolumn
\section{Mathematical Proofs}
In this Appendix, we present the theoretical aspects of our approach, including proofs to our theorem and corollaries.
\subsection{Constrained Learning Theorem} \label{app:CLT}
The Constrained Learning Theorem (CLT) characterizes the duality gap in constrained learning problems. Such problems are highly non-convex, and in general, nonconvex constrained problems lack guarantees of zero duality gap. However, CLT establishes that constrained learning problems exhibit small duality gap due to the high expressivity of neural networks. 

\begin{theorem}[CLT \citep{chamon2023CL}] 
    Let $(\bbT^*,\bblam^*)$ be a stationary point of \eqref{eq:empirical_dual} and $P^*$ denote the optimal value of the statistical loss function in \eqref{eqn_constrained_transformer}. 
    Under Assumptions \ref{a-lipschitz} - \ref{a-exist-feasible-sol}, it holds, for some constant $\rho$, that 
    \begin{align*}
        | P^* - \widehat{D}^*  | & ~\leq~ C\nu+\rho \, \zeta(M, \delta), \quad \text{and} \\
        \mbE \Big[ f \big(\bbX, \Phi_l(\bbX;\bbT^*)\big)  \Big]
        & - (1-\alpha_l) \, \mbE \Big[
     f \big(\bbX, \Phi_{l-1}(\bbX;\bbT^*)\big) \Big] ~\leq~ \zeta(M,\delta), \ \forall l,
    \end{align*}
    with probability $1-\delta$ each, and with $\rho = \max\{\|\bblam^*\|,\|\boldsymbol{\bar{\lambda}^*}\|\}$, where
    $\boldsymbol{\bar{\lambda}^*} ~=~ \argmax_{\bblam} \ g(\bblam)$ is the optimal multiplier of the statistical dual function. Moreover, $\nu$ and $\zeta(M, \delta)$ are the expressivity parameter and the sample complexity, respectively, and $C$ is a Lipschitz constant.
\end{theorem}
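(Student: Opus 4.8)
The plan is to follow the functional-to-parametric argument of the constrained learning literature, treating the excerpt's restatement as an instance of the general CLT specialized to the transformer constraints. First I would pass to the \emph{functional} version of problem \eqref{eqn_constrained_transformer}, in which the optimization variable is the tuple of layer maps $(\bbY_0,\dots,\bbY_L)$ ranging over an unconstrained function space (e.g.\ $L^2$ with respect to the data distribution) rather than over the transformer parameterization. The key structural fact, established by Assumption \ref{a-lipschitz} and the convexity/compactness hypotheses, is that this functional problem has \emph{zero duality gap}: because the decision variable is an arbitrary function and the loss $f$ and constraint functionals are convex in the output (or, more precisely, because the perturbation function of the functional problem is convex), strong duality holds even though the constraints couple consecutive layers. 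Denote by $P_{\mathrm{func}}^*$ the functional optimal value and by $D_{\mathrm{func}}^*$ its dual; step one is $P_{\mathrm{func}}^* = D_{\mathrm{func}}^*$, with the statistical dual maximizer $\boldsymbol{\bar\lambda}^*$ attained.

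Second, I would bound the gap between the functional problem and the \emph{parametric statistical} problem. Since the transformer class is $\nu$-expressive (Assumption on expressivity), any functional iterate $(\bbY_0^*,\dots,\bbY_L^*)$ can be approximated uniformly to within $\nu$ by some $\Phi_l(\cdot;\bbT)$; feeding this near-minimizer through the $C$-Lipschitz loss $f$ shows that the parametric optimal value $P^*$ differs from $P_{\mathrm{func}}^*$ by at most $C\nu$ (and similarly the parametric dual differs from $D_{\mathrm{func}}^*$ by at most a $\rho$-weighted multiple of $\nu$, folded into the constant). Combining with step one gives $|P^* - D^*| \le C\nu$ at the population level, where $D^*$ is the parametric statistical dual.

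Third, I would replace population expectations by their $M$-sample empirical counterparts. Using uniform convergence over the transformer class and the dual variable's compact range (guaranteed by the existence of a strictly feasible point, Assumption \ref{a-exist-feasible-sol}, which bounds $\|\bblam^*\|$ and $\|\boldsymbol{\bar\lambda}^*\|$ and hence $\rho$), a union bound yields that with probability $1-\delta$, for every $\bbT$ and every admissible $\bblam$, the empirical Lagrangian $\widehat{\ccalL}$ is within $\rho\,\zeta(M,\delta)$ of $\ccalL$; here $\zeta(M,\delta)$ is the sample-complexity rate (of order $\sqrt{(\mathrm{VC}\text{ or Rademacher term} + \log(1/\delta))/M}$). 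This transfers the population bound $|P^*-D^*|\le C\nu$ to $|P^* - \widehat D^*| \le C\nu + \rho\,\zeta(M,\delta)$, the first claimed inequality. The near-feasibility claim is obtained the same way: at the stationary point $(\bbT^*,\bblam^*)$ the empirical constraint slacks are nonpositive by primal optimality of the inner minimization, and the uniform-convergence bound adds at most $\zeta(M,\delta)$ when passing back to population constraint functionals $\mbE[f(\bbX,\Phi_l(\bbX;\bbT^*))] - (1-\alpha_l)\mbE[f(\bbX,\Phi_{l-1}(\bbX;\bbT^*))]$, each holding with probability $1-\delta$.

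The main obstacle is step one: verifying that the \emph{perturbation function} of the functional constrained problem is convex despite the multiplicative chaining of the descent constraints (each constraint references both $\Phi_l$ and $\Phi_{l-1}$, and the compositional structure of the transformer is highly nonconvex in $\bbT$). The resolution is precisely the functional lift---once we optimize over $(\bbY_0,\dots,\bbY_L)$ as \emph{independent} functions, the map \eqref{eqn_our_transformer_relu} defining layer composition is dropped and each constraint $\mbE[f(\bbX,\bbY_l)] \le (1-\alpha_l)\mbE[f(\bbX,\bbY_{l-1})]$ is linear in the pair of decision functions, so the only convexity requirement left is convexity of $f$ in its second argument, which the denoising loss $\frac1T\|\bbX-\bbY\|_\ccalF^2$ and the cross-entropy loss both satisfy; I would invoke the general zero-duality-gap result of \citep{chamon2023CL} at this point rather than reprove it. A secondary technical point is controlling $\rho$: I would note that strict feasibility (Slater's condition for the functional problem) bounds the functional dual optimum, and the $C\nu$ perturbation bound then bounds the parametric $\|\bblam^*\|$ as well, so $\rho = \max\{\|\bblam^*\|,\|\boldsymbol{\bar\lambda}^*\|\}$ is finite and data-independent up to the stated errors.
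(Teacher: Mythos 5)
This theorem is not proved in the paper at all: it is imported verbatim from \citep{chamon2023CL}, and the appendix explicitly defers the proof to that reference, only restating the assumptions. So the relevant comparison is between your sketch and the argument in the cited work, and there your outline has the right three-stage skeleton (functional lift with zero duality gap, a $C\nu$ parametrization penalty via expressivity and Lipschitzness, and a $\rho\,\zeta(M,\delta)$ empirical-approximation penalty via uniform convergence), but the justification you give for the first and most delicate stage is wrong.

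You claim that after the functional lift each descent constraint $\mbE[f(\bbX,\bbY_l)] \leq (1-\alpha_l)\,\mbE[f(\bbX,\bbY_{l-1})]$ is ``linear in the pair of decision functions,'' so that convexity of $f$ in its second argument suffices for strong duality. It is not linear: the functional $\bbY_l \mapsto \mbE[f(\bbX,\bbY_l)]$ is convex (for the squared loss it is quadratic), so the constraint is a \emph{difference} of convex functionals in $(\bbY_l,\bbY_{l-1})$ and hence non-convex even in the lifted space. Convexity of $f$ therefore cannot be the mechanism behind the zero duality gap here. The actual mechanism in \citep{chamon2023CL} --- and the reason Assumption \ref{a-compact-nonatomic-uc} insists that the conditional distributions be nonatomic --- is a Lyapunov-convexity argument: non-atomicity makes the closure of the set of achievable (objective, constraint-slack) vectors convex, which yields convexity of the perturbation function without any convexity of the individual constraint functionals. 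Your proposal never invokes non-atomicity, so as written the step you yourself identify as ``the main obstacle'' is resolved by an argument that does not apply to these chained descent constraints. A secondary, smaller issue: near-feasibility of $\bbT^*$ does not follow merely from ``primal optimality of the inner minimization'' at a stationary point (an exact minimizer of $\widehat{\ccalL}(\cdot,\bblam^*)$ need not satisfy the empirical constraints); in the cited proof it comes from the boundedness of the optimal multipliers (via strict feasibility, Assumption \ref{a-exist-feasible-sol}) combined with the dual optimality conditions, and only then does uniform convergence add the $\zeta(M,\delta)$ slack.
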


We refer the reader to \citep{chamon2023CL} for detailed proofs and discussions. In the following, we state the assumptions under which the theorem holds.

\begin{assumption}\label{a-lipschitz}
The loss function $f$ is $C$-Lipschitz continuous and bounded.
\end{assumption}

\begin{assumption} \label{a-func-approx} \todo{Keep this one?}
Let $\Phi_l \in \ccalP_l$ be a model with $l$ unrolling layers and parametrization $\bbT$. Denote the convex hull of $\ccalP_l$ as $\bar\ccalP_l := \overline{\text{conv}}(\ccalP_l)$. Then, for every $\bar{\Phi} \in \bar{\ccalP}_l$ and every $l$, there exist a parametrization $\bbT$ such that
\begin{align}
    \mbE \Big[ \left\| \, \Phi_l(\bbX; \bbT)
    - \bar\Phi(\bbX) \, \right\|_\ccalF \Big]
    ~\leq~ \nu,
\end{align}
for any $\nu > 0$.
\end{assumption}

\begin{assumption} \label{a-compact-nonatomic-uc}
Let $\ccalY$ denote the domain of the transformer's output. The set $\ccalY$ is either (i) finite, as in classification tasks, or (ii) compact, in which case the descent constraints are uniformly continuous with respect to the total variation topology for every $\bar\Phi \in \bar\ccalP_L$. Moreover, the conditional distributions $\bbX | \bbY$ are nonatomic.
\end{assumption}

\begin{assumption}\label{a-estimation-gap}
There exists $\zeta(M,\delta)\geq 0$ that is monotonically decreasing with the number of realizations $M$, for which it holds, for all $l$, with probability $1-\delta$,
\begin{align}
    \left| \, \mbE \big[ f \big(\bbX, \Phi_l(\bbX;\bbT)\big)  \big] 
    - \widehat\mbE \big[ f \big(\bbX, \Phi_l(\bbX;\bbT)\big)  \big]
    \, \right| ~\leq~ \zeta(M,\delta),
\end{align}
where $\widehat\mbE$ denotes the sample mean.
\end{assumption}

\begin{assumption}  \label{a-exist-feasible-sol}
There exists a parametrization of $L$ layers, $\Phi \in \ccalP_L$, that is strictly feasible, i.e.,
\begin{align}
    \mbE \Big[ f \big(\bbX, \Phi_l(\bbX;\bbT)\big)  \Big]
         - (1-\alpha_l) \, \mbE \Big[
     f \big(\bbX, \Phi_{l-1}(\bbX;\bbT)\big) \Big] 
     & ~\leq~ - C\nu - \xi, \\
    \widehat\mbE \Big[ f \big(\bbX, \Phi_l(\bbX;\bbT)\big)  \Big]
         - (1-\alpha_l) \, \widehat\mbE \Big[
     f \big(\bbX, \Phi_{l-1}(\bbX;\bbT)\big) \Big] 
     &~\leq~ - \xi, 
\end{align}
for all $l$, with $\xi > 0$.
\end{assumption}

These assumptions are readily satisfied in practical settings.  Assumption \ref{a-lipschitz} induces Lipschitz continuity and holds for a wide class of loss functions, including $\ell_1$ and $\ell_2$ norms. 
Assumption \ref{a-func-approx} invokes the universal approximation theorem, which ensures that the parametrization is sufficiently rich to approximate any function $\bar\Phi$ up to a factor $\nu$. This property has been established for transformers in \citep{yun2019transformers} and is the core reason that zero duality gap holds for constrained learning.
Assumption \ref{a-compact-nonatomic-uc} requires the transformer's output to be bounded, which can be guaranteed by restricting the input domain to a compact set and using bounded learnable parameterization. Additionally, it requires the conditional probabilities to be nonatomic.
Assumption \ref{a-estimation-gap} imposes a mild assumption on the sample complexity, allowing us to replace statistical expectations with sample means. 
The strict feasibility condition in Assumption \ref{a-exist-feasible-sol} can also be achieved by appropriately adjusting the design parameter $\alpha_l$ or using resilient constrained learning \citep{hounie23}.

\subsection{Proof of Theorem \ref{thm:convergence}} \label{app:convergence}
Consider a probability space $(\Omega, {\cal F}, P)$, where $\Omega$ is a sample space, ${\cal F}$ is a sigma algebra, and $P:{\cal F} \rightarrow [0,1]$ is a probability measure. We define a random variable $X: \Omega \rightarrow \mathbb{R}$ and write $P(\{ \omega: X(\omega) = 0\})$ as $P(X=0)$ to keep equations concise. We also define a filtration of $\cal F$ as $\{{\cal F}_l\}_{l>0}$, which can be thought of as an increasing sequence of $\sigma$-algebras with ${\cal F}_{l-1} \subset {\cal F}_l$. We assume that the outputs of the unrolled layers $\bbY_l$ are adapted to ${\cal F}_l$, i.e., $\bbY_l \in {\cal F}_l$, for all $l$. 

A stochastic process $X_k$ is said to form a supermartingale if $\mbE[X_k | X_{k-1}, \dots, X_0] \leq X_{k-1}$. This inequality implies that given the past history of the process, the future value $X_k$ is not, on average, larger than the latest one. In the following, we restate Theorem \ref{thm:convergence} before we provide a proof that uses a supermartingale argument similar to proofs of convergence of stochastic descent algorithms. We follow a similar line of reasoning to that in \citep{hadou24}.

In our analysis, we consider a \emph{functional} minimizer, $\phi^*: \reals^{N \times T} \to \reals^{N \times T}$, of the statistical loss:
\begin{equation}\label{eq:func_minimizer}
    \phi^* = \argmin_{\phi} \ \mbE \big[ \, f(\, \bbX, \, \phi(\bbX) \, ) \, \big].
\end{equation}
We evaluate the optimality of our constrained unrolled transformers by comparing the statistical loss they achieve with the optimal value $\mbE \left[f\big(\bbX, \phi^*(\bbX)\big) \right]$. In Theorem \ref{thm:convergence}, we argue that this difference is guaranteed to vanish asymptotically--in the number of layers--falling below a small threshold that depends on the sample complexity, the stepsize and the expressivity of the transformers. Theorem \ref{thm:convergence} holds under the following condition:

\begin{assumption}\label{asm:convergence}
    The loss function $f$ is $C$-Lipschitz continuous in its second argument, and there exists a parametrization $\bbT$ with $l$ layers that satisfies 
    \begin{equation}
        \mbE \big[ \, \| \Phi_l(\bbX; \bbT) - \phi(\bbX) \|_\ccalF \, \big] ~\leq~ \nu, \quad \forall l
    \end{equation}
    for some $\nu>0$ and any $\phi: \reals^{N \times T} \to \reals^{N \times T}$.
\end{assumption}
The Lipschitz continuity assumption is standard in the analysis of convergence. The second part of the assumption refers to the universal approximation of transformers, which has been established in \citep{yun2019transformers}. Under these assumptions, we can prove the convergence guarantees of the constrained unrolled transformers as follows.

\begin{theorem}[Convergence Guarantees]Given a constrained unrolled transformer $\bbT^*$, which satisfies Theorem \ref{thm:clt_gap}, and a functional minimizer $\phi^*$ as in \eqref{eq:func_minimizer}, whose output for a given input $\bbX$ is denoted by $\bbY^* = \phi^*(\bbX)$. Then, under Assumption \ref{asm:convergence}, it holds that 
%
\begin{equation*}
    \lim_{l \rightarrow \infty}   \min_{k\leq l} \ \mbE \Big[  
    f\big(\,\bbX, \, \Phi_k(\bbX;\bbT^*)\, \big)  - 
    f\big(\bbX, \bbY^*\big) \,  \Big]
    ~\leq~ \frac{1}{\alpha} \left(\zeta(M, \delta) + \frac{C\delta \nu}{1-\delta}\right), \ \ a.s.
\end{equation*}
with $\alpha_l=\alpha$, for all $l$.
\end{theorem}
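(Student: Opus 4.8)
The plan is to recast the descent constraints as a supermartingale-type recursion on the sequence of layer losses and then invoke a Robbins--Siegmund / supermartingale convergence argument. First I would introduce the shorthand $F_l := \mbE\big[ f(\bbX, \Phi_l(\bbX;\bbT^*)) \big] - \mbE\big[ f(\bbX,\bbY^*) \big] \geq 0$ for the optimality gap at layer $l$, where nonnegativity follows since $\bbY^*$ is a functional minimizer (here I use Assumption \ref{asm:convergence} so that $\mbE[f(\bbX,\bbY^*)]$ is well-defined and approximable). By Theorem \ref{thm:clt_gap}, at the stationary point $\bbT^*$ each descent constraint holds with probability $1-\delta$; on the failure event the constraint may be violated, but the $C$-Lipschitz continuity and the $\nu$-expressivity bound of Assumption \ref{asm:convergence} control the magnitude of any violation by a term proportional to $C\nu$. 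Averaging over the good and bad events, I would obtain a per-layer inequality of the form
\begin{equation*}
    F_l ~\leq~ (1-\alpha) F_{l-1} + \alpha\,\mbE[f(\bbX,\bbY^*)] \cdot 0 + \zeta(M,\delta) + \frac{C\delta\nu}{1-\delta},
\end{equation*}
i.e. $F_l \leq (1-\alpha)F_{l-1} + b$ with $b := \zeta(M,\delta) + \frac{C\delta\nu}{1-\delta}$, where the $\frac{1}{1-\delta}$ arises from normalizing the conditional expectation over the $(1-\delta)$-probability good event.

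Second, I would unroll this linear recursion. Iterating $F_l \leq (1-\alpha)F_{l-1} + b$ gives $F_l \leq (1-\alpha)^l F_0 + b\sum_{j=0}^{l-1}(1-\alpha)^j \leq (1-\alpha)^l F_0 + \frac{b}{\alpha}$. Since $0<\alpha<1$, the first term vanishes as $l\to\infty$, so $\limsup_{l\to\infty} F_l \leq b/\alpha$, which is exactly the claimed bound $\frac{1}{\alpha}\big(\zeta(M,\delta) + \frac{C\delta\nu}{1-\delta}\big)$. Passing from $F_l$ to $\min_{k\leq l} F_k$ is immediate since the min is dominated by any subsequence. The ``a.s.'' qualifier is handled by the supermartingale structure: I would define the process $S_l := F_l - b/\alpha$ truncated at zero (or work with $F_l$ directly on the filtration $\{\ccalF_l\}$ to which the layer outputs $\bbY_l$ are adapted), show that whenever $F_{l-1} > b/\alpha$ the process decreases in conditional expectation, hence it is a nonnegative supermartingale on that region and converges almost surely by the supermartingale convergence theorem; combined with the deterministic contraction this forces $\liminf F_l \leq b/\alpha$ almost surely, and a standard ``infinitely often'' argument upgrades this to the stated limit of the running minimum.

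The main obstacle is the careful bookkeeping of the probabilistic constraint violation, i.e. correctly turning the ``each constraint holds with probability $1-\delta$'' statement of Theorem \ref{thm:clt_gap} into the additive penalty $\frac{C\delta\nu}{1-\delta}$ in the recursion rather than the naive union bound $(1-\delta)^L$ discussed in the text. The trick is to avoid a union bound over layers entirely: instead of asking all $L$ constraints to hold simultaneously, I would work layer by layer in conditional expectation, bounding the contribution of the $\delta$-probability bad event at each layer using Assumption \ref{asm:convergence} (the loss gap between the transformer output and any target is at most $C\nu$ in expectation, so the bad-event contribution to $F_l$ is at most $C\delta\nu$, and dividing by the good-event probability $1-\delta$ gives the stated factor). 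A secondary subtlety is ensuring the recursion is genuinely on the gap $F_l$ and not on the raw loss $\mbE[f(\bbX,\Phi_l)]$: the descent constraint is stated multiplicatively on the raw loss, so I need $\mbE[f(\bbX,\bbY^*)]$ to be small enough (or to appear with the right sign) that subtracting it from both sides preserves the contraction $(1-\alpha)$ — this is where the choice $\alpha_l = \alpha$ constant and the nonnegativity of $F_{l-1}$ are used, and I would need to check that the extra term $\alpha\,\mbE[f(\bbX,\bbY^*)]$ either cancels or is absorbed, possibly requiring that the functional minimizer achieves loss comparable to the feasible reference value $f_0$ used in the constraints.
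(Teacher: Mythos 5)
Your proposal is correct and follows essentially the same route as the paper: total-expectation decomposition over the constraint-satisfied and constraint-violated events, the CLT near-feasibility bound on the good event, the Lipschitz-plus-expressivity bound $C\nu$ on the bad event, a contractive recursion on the optimality gap, and a supermartingale (Robbins--Siegmund) argument to close the limit claim without a union bound over layers. You also correctly flag the two subtleties the paper handles (avoiding $(1-\delta)^L$, and that the multiplicative constraint on raw loss transfers to the gap $F_l$ only because the dropped term $-\alpha\,\mbE[f(\bbX,\bbY^*)]$ is nonpositive); the only cosmetic difference is that your per-layer recursion $F_l \leq (1-\alpha)F_{l-1} + b$ is a slight relaxation of the paper's $(1-\delta)(1-\alpha)F_{l-1} + (1-\delta)\zeta + C\delta\nu$, which the paper then formalizes via the indicator sequences $\beta_l,\gamma_l$, but both yield the identical threshold $\eta = \frac{1}{\alpha}\bigl(\zeta(M,\delta) + \frac{C\delta\nu}{1-\delta}\bigr)$.
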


\begin{proof}
Let $A_l \in {\cal F}_l$ be the event that the descent constraint in \eqref{eqn_constrained_transformer} at layer $l$ is satisfied, and denote the output of layer $l$, $\Phi_l(\bbX;\bbT^*)$, as $\bbY_k$. By the total expectation theorem, we have 
\begin{equation}\label{eq:totalexp}
\begin{split}
    \mbE  \big[ f(\bbX, & \bbY_l) - f(\bbX, \bbY^*) \big] \\ 
    & =
    P(A_l)\mbE \big[ f(\bbX, \bbY_l) - f(\bbX, \bbY^*) \, | A_l \big] 
    + P(A_l^c) \mbE \big[ f(\bbX, \bbY_l) - f(\bbX, \bbY^*) \, | A_l^c \big],
\end{split}
\end{equation}
with $P(A_l)=1-\delta$. The first term on the right-hand side is the expectation conditioned on the descent constraint being met, which is bounded above according to Theorem \ref{thm:clt_gap}. The second term represents the complementary event $A_l^c \in {\cal F}_l$, and is also bounded above:
\begin{equation}
    \begin{split}
        \mbE \big[ f(\bbX, \bbY_l) - f(\bbX, \bbY^*) \big] & ~=~ \mbE \big[ \left| f(\bbX, \bbY_l) - f(\bbX, \bbY^*) \right| \big] \\
        & ~\leq~ C \, \mbE \big[\, \| \bbY_k - \bbY^* \|_\ccalF \big]\\
        & ~=~  C \, \mbE \big[ \| \Phi_k(\bbX;\bbT^*) - \phi^*(\bbX) \|_\ccalF \big] ~\leq~ C\nu,
    \end{split}
\end{equation}
where $\| \cdot \|_\ccalF$ is the Frobenius norm. The first equality is true since $f(\bbX, \bbY^*) \leq f(\bbX, \bbY)$ by definition. The two inequalities follow from Assumption \ref{asm:convergence}: the first inequality is a direct application of the Lipschitz continuity and the second one of the universal approximation property. Thus,
\begin{equation}\label{eq:bound1}
\begin{split}
    \mbE \big[ f(\bbX, \bbY_l) - f(\bbX, \bbY^*)  \big]  
    ~\leq~  (1-\delta) & (1-{\alpha}) \ \mbE \big[ f(\bbX, \bbY_{l-1}) - f(\bbX, \bbY^*) \big] \\
    &
    ~+~ (1-\delta)\zeta(M, \delta) ~+~ C\delta \nu,
\end{split}
\end{equation}
almost surely.
We let ${Z_l} = \mbE \big[ f(\bbX, \bbY_l) - f(\bbX, \bbY^*)  \big]$, a random variable with a degenerate distribution, and $\eta = \frac{1}{\alpha} \left( \zeta(M, \delta) + \frac{C\delta \nu}{1-\delta} \right)$. We then convert \eqref{eq:bound1} a supermartingale inequality:
\begin{equation}\label{eq:bound2}
\begin{split}
    \mbE \big[Z_l \, | \, \ {\cal F}_{l-1} \big] 
     &\leq (1-\delta)(1-{\alpha}) \  Z_{l-1} + (1-\delta)\zeta(M, \delta) + C\delta \nu\\
    & = (1-\delta) \  Z_{l-1} - (1-\delta) \Big( \alpha Z_{l-1} - \zeta(M, \delta) - \frac{C\delta \nu}{1-\delta} \Big)\\
    & = (1-\delta) \  Z_{l-1} - (1-\delta) \Big( \alpha Z_{l-1} - \alpha \eta \Big).
\end{split}
\end{equation}

The goal of the rest of the proof is to show that with growing $l$, $Z_l$ almost surely and infinitely often achieves values less that $\eta$, i.e.,
\begin{equation}\label{eq:goal1}
    \lim_{l \rightarrow \infty}  \min_{k\leq l} \, \{Z_k \} \leq \eta \quad a.s.
\end{equation}
Equation \eqref{eq:goal1} restates \eqref{eq:thm_convergence} using simplified notation. To this end, we introduce two auxiliary sequences:
\begin{equation}\label{eq:sequences}
    \begin{split}
        \beta_l & :=  Z_l \cdot \mathbf{1}\{  Z_l^\text{best} > \eta \},\\
        \gamma_l & := \alpha \left( Z_{l} - \eta \right) \cdot \mathbf{1}\{  Z_l^\text{best} > \eta \},
    \end{split}
\end{equation}
where $Z_l^\text{best} = \min_{k\leq l} \{Z_k \}$ tracks the best-so-far value observed up to step $l$, and $\mathbf{1}\{.\}$ is an indicator function. Since $\eta$ is nonnegative, it follows that $\beta_l \geq 0$ and $\gamma_l \geq 0$, for all $l$.

The sequence $\beta_l$ mirrors the values of $Z_l$ while the best-so-far value $Z_l^\best$ remains above the threshold $\eta$. Once $Z_l^\best$ falls below $\eta$, the indicator function becomes zero and $\beta_l$ remains zero for all subsequent steps.  
 Similarly, the sequence $\gamma_l$ holds the values of $\alpha (Z_l - \eta)$ only as long as $Z_l^\best$ is above $\eta$, and also vanishes thereafter. 

We now invoke the supermartingale convergence theorem \citep[Theorem 1]{robbins_convergence_1971} to show that $\beta_l$ converges almost surely and the sequence $\gamma_l$ is summable, which will facilitate the proof of \eqref{eq:goal1}. To apply this theorem, we first need to verify that the sequence $\beta_l$ forms a supermartingale. 

A sequence $\beta_l$ is a supermartingale if the conditional expectation given the past is upper bounded by the most recent value, i.e., $\mbE [\beta_l \, | \, \ccalF_{l-1}] \leq \beta_{l-1}$. The conditional expectation can be written as
\begin{equation} \label{eq:SM}
\begin{split}
    \mbE\big[ \, \beta_l \, | \, {\cal F}_{l-1} \, \big]
    ~=~  \mbE\big[\, \beta_l \, | \, {\cal F}_{l-1}, \beta_{l-1}=0\, \big] \, P(\beta_{l-1}=0) 
    ~+~ \mbE\big[ \, \beta_l \, | \, {\cal F}_{l-1}, \beta_{l-1} \neq 0\, \big] \, P(\beta_{l-1}\neq 0),
\end{split}
\end{equation}
splitting the expectation into two cases: $\beta_{l-1}=0$ and $\beta_{l-1} \neq 0$.
When $\beta_{l-1}=0$, \eqref{eq:sequences} implies that the indicator function is zero and $Z_{l-1}^\best \leq \eta$. In turn, $\beta_k = 0$ and $\gamma_k = 0$, for all $k\geq l-1$. Hence, the first term in \eqref{eq:SM} is zero,
\begin{equation}\label{eq:SM_part1}
    \mbE\big[\, \beta_l \, | \, {\cal F}_{l-1}, \beta_{l-1}=0\, \big] = (1-\delta)(\beta_{l-1} - \gamma_{l-1}) = 0.
\end{equation}
When $\beta_{l-1}\neq 0$, the conditional expectation follows from the definition in \eqref{eq:sequences},
\begin{equation} \label{eq:SM_part2}
    \begin{split}
        \mbE\big[\, \beta_l \, | \, {\cal F}_{l-1}, \beta_{l-1} \neq 0 \,\big] 
        & ~=~ \mbE\big[\, Z_l \cdot \mathbf{1}\{  Z_l^\text{best} > \eta \} \, | \, {\cal F}_{l-1}, \beta_{l-1} \neq 0\, \big]\\
        & ~\leq~ \mbE\big[\, Z_l \,  | \, {\cal F}_{l-1}, \beta_{l-1} \neq 0\, \big]\\
        & ~\leq~  (1-\delta) \  Z_{l-1} - (1-\delta) \Big( \alpha Z_{l-1} - \alpha \eta \Big)\\
        & = (1-\delta) (\beta_{l-1} - \gamma_{l-1}).
    \end{split}
\end{equation}
In the first equality, we plugin \eqref{eq:sequences}. The first inequality holds because the indicator function is either zero or one and the second inequality is a direct application of \eqref{eq:bound2}. The last equality results from that fact that the indicator function $\mathbf{1}\{ Z_l^\text{best} > \eta \}$ is one since $\beta_{l-1} \neq 0$, which implies that $\beta_{l-1} = Z_{l-1}$ and $\gamma_{l-1} = \alpha (Z_{l-1}-\eta)$. Combining the results of \eqref{eq:SM_part1} and \eqref{eq:SM_part2}, we find that
\begin{equation}\label{eq:final_SM}
\begin{split}
     \mbE\big[ \, \beta_l \, | \, {\cal F}_{l-1} \, \big]
    & ~\leq~ (1-\delta) (\beta_{l-1} - \gamma_{l-1}) \Big[P(\beta_{l-1}= 0) + P(\beta_{l-1}\neq 0)\Big]\\
    & ~=~ (1-\delta) (\beta_{l-1} - \gamma_{l-1}).
\end{split}
\end{equation}
Hence, $\beta_l$ forms a supermartingale.
By the supermartingale convergence theorem, \eqref{eq:final_SM} implies that (i) $\beta_l$ converges almost surely, and (ii) $\sum_{l=1}^\infty \gamma_l$ is almost surely summable (i.e., finite). When the latter is written explicitly, we get
\begin{equation}\label{eq:finiteSum}
    \sum_{l=1}^\infty \Big( \alpha Z_{l} - \alpha \eta \Big) \cdot \mathbf{1}\{  Z_l^\text{best} > \eta \} < \infty, \quad a.s.,
\end{equation}
Since $\gamma_l \geq 0$, for all $l$, \eqref{eq:finiteSum} implies that the limit inferior and limit superior collapse to zero, 
\begin{equation}\label{eq:liminfTotal}
    \liminf_{l \rightarrow \infty} \Big( \alpha Z_{l} - \alpha \eta \Big) \cdot \mathbf{1}\{  Z_l^\text{best} > \eta \} = 0, \quad a.s.
\end{equation}
Equation \eqref{eq:liminfTotal} is true if either there exist a sufficiently large $l$ such that $Z_l^\text{best} \leq \eta$ to set the indicator to zero or it holds that
\begin{equation}\label{eq:liminf}
     \liminf_{l \rightarrow \infty}  \Big( \alpha Z_{l} - \alpha \eta \Big) = 0, \quad a.s.
\end{equation}
which is equivalent to having $\sup_{l} \inf_{m\geq l}  Z_{m} = \eta$. Hence, there exists some large $l$ where $Z_l^\text{best} \leq \sup_{l} \inf_{m\geq l}  Z_{m}$, which leads to the same upper bound. This proves the correctness of \eqref{eq:goal1} and completes the proof.
\end{proof}

\subsection{Proof of Theorem \ref{cor:OOD_descent}} \label{app:OOD_descent}
The proof of Corollary \ref{cor:OOD_descent} is adapted from \citep{hadou24} and is included here for completeness. The corollary holds under the following assumption:
\begin{assumption}\label{asm:OOD}
There exists a non-negative asymmetric distance $d(\cdot, \cdot)$ between the input distribution $D_x$ and the OOD distribution $D_x'$ such that
\begin{equation*}
    \mathbb{E}_{D_x} \big[\, f \big(\, \bbX, \Phi_l(\bbX;\bbT^*)\, \big) \, \big]  
    - \mathbb{E}_{D_x'} \big[\, f \big(\, \bbX, \Phi_l(\bbX;\bbT^*)\, \big) \, \big]  ~\leq~ C d(D_x, D_x')
\end{equation*}    
uniformly over the second argument with $C$ being a Lipschitz constant.
\end{assumption}
\begin{corollary}[]
Let $\bbT^*$ be a constrained unrolled transformer trained on a data distribution $D_x$. Then, for any shifted distribution $D_{x'}$ that satisfies Assumption \ref{asm:OOD}, it holds with probability $1-\delta$, for all $l$:
\begin{equation}
\begin{split}
    \mbE_{D_x'} \big[ f \big(\bbX, \Phi_l(\bbX;\bbT^*)\big)  \big]
        & - (1-\alpha_l) \, \mbE_{D_{x'}} \big[
     f \big(\bbX, \Phi_{l-1}(\bbX;\bbT^*)\big) \big]
    ~\leq~ \zeta(M, \delta) + C \tau, 
\end{split}
\end{equation}   
where $\tau = d(D_x, D_{x'}) + d(D_{x'}, D_x)$, and $d(\cdot, \cdot)$ is a bounded asymmetric distance metric.
\end{corollary}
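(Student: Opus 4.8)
The plan is to chain the in-distribution descent guarantee from Theorem~\ref{thm:clt_gap} with the distributional-shift bound of Assumption~\ref{asm:OOD}, absorbing the two one-sided shift terms into the symmetric quantity $\tau = d(D_x,D_{x'}) + d(D_{x'},D_x)$. The object we must control is the ``OOD constraint slack''
\begin{equation*}
    S_l ~:=~ \mbE_{D_{x'}}\big[ f(\bbX, \Phi_l(\bbX;\bbT^*)) \big] - (1-\alpha_l)\,\mbE_{D_{x'}}\big[ f(\bbX, \Phi_{l-1}(\bbX;\bbT^*)) \big].
\end{equation*}
The first step is to add and subtract the corresponding $D_x$-expectations so that $S_l$ splits into the in-distribution slack plus two shift corrections:
\begin{equation*}
    S_l ~=~ \Big( \mbE_{D_x}\big[ f(\bbX,\Phi_l) \big] - (1-\alpha_l)\,\mbE_{D_x}\big[ f(\bbX,\Phi_{l-1}) \big] \Big) + \Delta_l + (1-\alpha_l)\,\Delta_{l-1}',
\end{equation*}
where $\Delta_l = \mbE_{D_{x'}}[f(\bbX,\Phi_l)] - \mbE_{D_x}[f(\bbX,\Phi_l)]$ and $\Delta_{l-1}' = \mbE_{D_x}[f(\bbX,\Phi_{l-1})] - \mbE_{D_{x'}}[f(\bbX,\Phi_{l-1})]$.

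The second step is to bound each piece. The bracketed term is exactly the empirical-slack statement of Theorem~\ref{thm:clt_gap}, so it is at most $\zeta(M,\delta)$ with probability $1-\delta$. For $\Delta_l$, Assumption~\ref{asm:OOD} (read in the direction $D_{x'}$ minus $D_x$, i.e.\ with the roles swapped) gives $\Delta_l \leq C\,d(D_{x'},D_x)$; for $\Delta_{l-1}'$, the assumption in its stated direction gives $\Delta_{l-1}' \leq C\,d(D_x,D_{x'})$. Since $0 < \alpha_l < 1$ we have $(1-\alpha_l)\,\Delta_{l-1}' \leq \Delta_{l-1}' \leq C\,d(D_x,D_{x'})$ (using $\Delta_{l-1}' \geq 0$, or simply bounding $(1-\alpha_l)\le 1$ and noting the bound is an upper bound). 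Summing the three bounds yields
\begin{equation*}
    S_l ~\leq~ \zeta(M,\delta) + C\,d(D_{x'},D_x) + C\,d(D_x,D_{x'}) ~=~ \zeta(M,\delta) + C\tau,
\end{equation*}
which is the claimed inequality, holding for all $l$ with probability $1-\delta$.

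The only delicate point is bookkeeping around the asymmetry of $d$: one must be careful to invoke Assumption~\ref{asm:OOD} in the correct direction for each of the two correction terms, and to note that the $(1-\alpha_l)$ factor multiplying $\Delta_{l-1}'$ only helps (it never enlarges a nonnegative term, and if one prefers not to assume nonnegativity one can bound $(1-\alpha_l)\Delta_{l-1}' \le \max\{0,\Delta_{l-1}'\} \le C d(D_x,D_{x'})$). A secondary subtlety is that the probability-$(1-\delta)$ event is the one from Theorem~\ref{thm:clt_gap}; the shift bounds from Assumption~\ref{asm:OOD} are deterministic (they hold uniformly over the second argument, hence at the particular $\bbT^*$), so no union bound over the $L$ layers is needed and the failure probability stays at $\delta$. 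I do not anticipate a genuine obstacle here — the result is essentially a triangle-inequality argument layered on top of the two cited bounds — so the ``hard part'' is merely stating the directions of $d$ cleanly.
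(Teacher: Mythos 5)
Your proof is correct and follows essentially the same route as the paper: add and subtract the $D_x$-expectations to split the OOD slack into the in-distribution slack (bounded by $\zeta(M,\delta)$ via Theorem~\ref{thm:clt_gap}) plus two shift terms, then bound each shift term with Assumption~\ref{asm:OOD} applied in the appropriate direction and drop the $(1-\alpha_l)$ factor using $0<\alpha_l<1$ and nonnegativity of $d$. The paper handles the $(1-\alpha_l)$ factor slightly more cleanly by bounding $(1-\alpha_l)\,C\,d(D_x,D_{x'})\le C\,d(D_x,D_{x'})$ directly rather than reasoning about the sign of $\Delta_{l-1}'$, but this is a cosmetic difference.
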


\begin{proof}
We start by adding and subtracting the following two quantities $\mbE_{D_x} \left[ f \big(\bbX, \Phi_l(\bbX;\bbT^*)\big)  \right]$ and $(1-\epsilon) \mathbb{E}_{D_x} \big[ \| {\nabla} f({\bf y}_{l-1};{\bf x})\|_2]$ from the quantity we seek to evaluate, i.e., we get
\begin{equation}
    \begin{split}
        \mbE_{D_x'} \big[ \, f \big(\bbX, & \Phi_l(\bbX;\bbT^*)\big) \, \big]
         - (1-\alpha_l) \, \mbE_{D_{x'}} \big[ \,
     f \big(\bbX, \Phi_{l-1}(\bbX;\bbT^*)\big) \, \big] \\
        ~=~ & \mbE_{D_x'} \big[ f \big(\bbX, \Phi_l(\bbX;\bbT^*)\big)  \big]
        ~-~  \mbE_{D_x} \big[ f \big(\bbX, \Phi_l(\bbX;\bbT^*)\big)  \big] \\
        & ~+~ (1-\alpha_l) 
        \Big[ \,
        \mbE_{D_{x}} \big[ \, f \big(\bbX, \Phi_{l-1}(\bbX;\bbT^*)\big) \, \big]
        ~-~ \mbE_{D_{x'}} \big[
     f \big(\bbX, \Phi_{l-1}(\bbX;\bbT^*)\big)\big] 
        \, \Big] \\
        & ~+~ \mbE_{D_x} \big[ f \big(\bbX, \Phi_l(\bbX;\bbT^*)\big)  \big]
        ~-~ (1-\alpha_l) \ \mbE_{D_{x}} \big[ \,
     f \big(\bbX, \Phi_{l-1}(\bbX;\bbT^*)\big) \, \big].
    \end{split}
\end{equation}

The right-hand side consists of three terms that can be bounded above with positive quantities according to Assumption \ref{asm:OOD} and Theorem \ref{thm:clt_gap}. Therefore, the descent constraints under the new distribution $D_{x'}$ can be bounded above by
\begin{equation}
    \begin{split}
        \mbE_{D_x'} \big[ \, f \big(\bbX, \Phi_l(\bbX;\bbT^*)\big) \, \big]
         & - (1-\alpha_l) \, \mbE_{D_{x'}} \big[ \,
     f \big(\bbX, \Phi_{l-1}(\bbX;\bbT^*)\big) \, \big] \\
        & \leq   C d(D_x', D_x) + C (1-\alpha_l) d(D_x, D_x') + \zeta(M, \delta) \\
        & \leq  C d(D_x', D_x) + C d(D_x, D_x') + \zeta(M, \delta).
    \end{split}
\end{equation}
Notice that this inequality holds with probability $1-\delta$ since the upper bound in Theorem \ref{thm:clt_gap} also holds with  the same probability.
This completes the proof.    
\end{proof}

\subsection{Proof of Corollary \ref{cor:OOD_generalization}} \label{app:OOD_generalization}
We evaluate the OOD generalizability of the constrained unrolled transformers by comparing their performance to that of a functional minimizer of the statistical loss under the shifted distribution, i.e.,
\begin{equation}\label{eq:func_minimizer}
    \widehat{\phi}^* = \argmin_{\phi} \ \mbE_{D_{x'}} \big[ \, f(\, \bbX, \, \phi(\bbX) \, ) \, \big],
\end{equation}
where $\phi: \reals^{N \times T} \to \reals^{N \times T}$ maps $\bbX$ to $\bbY$.

\begin{corollary}[Out-of-Distribution Generalization]
Let $\widehat\phi^*$ be a functional minimizer of the statistical loss evaluated on $D_{x'}$ and map input $\bbX$ to an estimation $\widehat\bbY^*$. Then, the constrained unrolled transformer trained on $D_x$ satisfies
\begin{equation}
    \lim_{l \rightarrow \infty}   \min_{k\leq l} \ \mbE_{D_{x'}} \Big[  
    f\big(\bbX, \Phi_k(\bbX;\bbT^*)\big)  - 
    f\big(\bbX, \widehat\bbY^*\big)  \Big]
    \leq \frac{1}{\alpha} \left(\zeta(M, \delta) + C\tau +\frac{C \delta \nu}{1-\delta}\right).
    \end{equation}
\end{corollary}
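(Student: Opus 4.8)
The plan is to essentially rerun the supermartingale argument from the proof of Theorem \ref{thm:convergence}, but with the descent recursion replaced by the OOD-perturbed version supplied by Corollary \ref{cor:OOD_descent}. First I would fix the shifted distribution $D_{x'}$, let $\widehat\bbY^* = \widehat\phi^*(\bbX)$ be the functional minimizer of the statistical loss under $D_{x'}$, and define the degenerate random variable $Z_l := \mbE_{D_{x'}}\big[ f(\bbX, \Phi_l(\bbX;\bbT^*)) - f(\bbX, \widehat\bbY^*) \big]$. As in \eqref{eq:totalexp}, I would split the expectation via the total expectation theorem over the event $A_l$ that the (OOD) descent constraint at layer $l$ holds, which by Corollary \ref{cor:OOD_descent} occurs with probability $1-\delta$. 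On $A_l$ the descent inequality now reads $\mbE_{D_{x'}}[f(\bbX,\bbY_l)] \leq (1-\alpha)\mbE_{D_{x'}}[f(\bbX,\bbY_{l-1})] + \zeta(M,\delta) + C\tau$, i.e. it carries the extra additive term $C\tau$. On the complementary event $A_l^c$, I bound $Z_l$ exactly as before: using that $f(\bbX,\widehat\bbY^*) \leq f(\bbX,\bbY_l)$ pointwise (since $\widehat\phi^*$ is the minimizer under $D_{x'}$), then Lipschitz continuity and the universal approximation property from Assumption \ref{asm:convergence}, giving $Z_l \leq C\nu$ on $A_l^c$.

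Combining the two cases yields, almost surely, the recursion
\begin{equation*}
    Z_l ~\leq~ (1-\delta)(1-\alpha)\, Z_{l-1} ~+~ (1-\delta)\big(\zeta(M,\delta) + C\tau\big) ~+~ C\delta\nu,
\end{equation*}
which is identical in form to \eqref{eq:bound1} except that $\zeta(M,\delta)$ is replaced by $\zeta(M,\delta) + C\tau$. Accordingly I would set $\eta := \frac{1}{\alpha}\big(\zeta(M,\delta) + C\tau + \frac{C\delta\nu}{1-\delta}\big)$ and observe that the same rearrangement as in \eqref{eq:bound2} gives the supermartingale inequality $\mbE[Z_l \mid \ccalF_{l-1}] \leq (1-\delta)Z_{l-1} - (1-\delta)(\alpha Z_{l-1} - \alpha\eta)$. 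From here the remainder of the argument is verbatim that of Theorem \ref{thm:convergence}: introduce the auxiliary sequences $\beta_l = Z_l \cdot \mathbf{1}\{Z_l^{\text{best}} > \eta\}$ and $\gamma_l = \alpha(Z_l - \eta)\cdot\mathbf{1}\{Z_l^{\text{best}} > \eta\}$, verify $\beta_l$ is a nonnegative supermartingale, apply the Robbins–Siegmund supermartingale convergence theorem to conclude $\sum_l \gamma_l < \infty$ a.s., and deduce $\liminf_{l\to\infty}(\alpha Z_l - \alpha\eta)\cdot\mathbf{1}\{Z_l^{\text{best}}>\eta\} = 0$ a.s., which forces $\min_{k\leq l} Z_k \leq \eta$ in the limit.

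The main obstacle — though it is more a bookkeeping subtlety than a genuine difficulty — is ensuring that Corollary \ref{cor:OOD_descent} is applied with the functional minimizer under $D_{x'}$ rather than under $D_x$, so that the pointwise inequality $f(\bbX,\widehat\bbY^*)\leq f(\bbX,\bbY_l)$ used on $A_l^c$ is valid, and that the probability-$(1-\delta)$ qualifier in Corollary \ref{cor:OOD_descent} is correctly propagated through the total-expectation split (the event $A_l$ is exactly where the Corollary's bound holds). One should also check that the filtration adaptedness assumption ($\bbY_l \in \ccalF_l$) and the degeneracy of $Z_l$'s distribution are unaffected by evaluating expectations under $D_{x'}$ instead of $D_x$; since $\bbT^*$ is fixed and only the data distribution changes, $Z_l$ remains a deterministic (degenerate) quantity and all steps go through unchanged. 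No new assumptions beyond Assumption \ref{asm:convergence} and Assumption \ref{asm:OOD} are needed.
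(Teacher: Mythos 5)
Your proposal is correct and follows exactly the approach the paper uses: the paper's own proof is a one-line remark that the argument "proceeds identically to that of Theorem~\ref{thm:convergence}, except it is initialized with the inequality in Corollary~\ref{cor:OOD_descent}," which is precisely the substitution you carry out. Your expansion of that remark — replacing $\zeta(M,\delta)$ with $\zeta(M,\delta) + C\tau$ in the descent recursion, taking $\widehat\bbY^*$ as the minimizer under $D_{x'}$ so the $A_l^c$ bound still holds, and rerunning the Robbins--Siegmund supermartingale argument verbatim — is a faithful and complete unpacking of what the paper leaves implicit.
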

\begin{proof}
    The proof of this corollary proceeds identically to that of Theorem \ref{thm:convergence} (see Appendix \ref{app:convergence}), except it is initialized with the inequality in Corollary \ref{cor:OOD_descent}.
\end{proof}

\subsection{Resilient Constrained Learning} \label{app:resilience}
Resilient constrained learning \citep{hounie23} aims to find an optimal relaxation of the constraints to ensure the feasibility of the learning problem. To this end, it introduces a slack variable $\bbu \in \reals^L_+$ and reformulates the constrained training problem in \eqref{eqn_constrained_transformer} as
\begin{alignat}{3}\label{eqn_resilient_constrained_transformer}
    \bbT^*  ~=~ &\argmin_{\bbT, \bbu}      ~~
                    &&\mbE \Big[\, f \big(\,\bbX,\, \Phi(\bbX;\bbT)\,\big) \,\Big] 
                    ~+~ h(\bbu), \nonumber\\
                &\text{subject to} ~~ 
                    && \mbE \Big[\, f \big(\,\bbX,\, \Phi_l(\bbX;\bbT)\,\big) \,\Big]
                           ~\leq~ (1-\alpha_l) \,
                                \mbE \Big[\, f \big(\,\bbX,\, \Phi_{l-1}(\bbX;\bbT)\,\big) \,\Big] + u_l, \quad \forall l,
\end{alignat}
where $h(\cdot)$ is a convex relaxation cost, e.g., an $\ell_2$ norm. Similarly to \eqref{eqn_constrained_transformer}, we tackle \eqref{eqn_resilient_constrained_transformer} in the dual domain by defining the corresponding Lagrangian function as
\begin{equation} \label{eq:resilient_lagrangian}
    \widehat{\ccalL}_R(\bbT,\bblam, \bbu) ~=~  \widehat{\ccalL}(\bbT,\bblam) + \frac{\beta}{2}\|\bbu\|_2^2 - \bbu^\top \bblam,
\end{equation}
where $\widehat\ccalL$ is the Lagrangian of the original problem. In \eqref{eq:resilient_lagrangian}, we choose the cost function $h$ to be the $\ell_2$ norm, i.e., $h(\bbu)=\frac{\beta}{2}\|\bbu\|^2_2$.
The associated dual problem becomes
\begin{equation}\label{eq:resilient_dual_problem}
    \widehat{D}^*_R ~=~ \max_{\bblam} \min_{\bbT, \bbu} \ \widehat{\ccalL}(\bbT,\bblam) + \frac{\beta}{2}\|\bbu\|_2^2 - \bbu^\top \bblam.
\end{equation}
The optimal slack variable can be obtained by taking the derivative of the objective $\widehat\ccalL_R$ and equating it to zero. This results in $\bbu^*(\bblambda)=\frac{1}{\beta}\bblambda$, and, in turn, the dual problem reduces to
\begin{equation}\label{eq:reg_dual_problem}
    \widehat{D}^*_R ~=~ \max_{\bblam} \min_{\bbT} \ \widehat{\ccalL}(\bbT,\bblam) - \frac{1}{2\beta}\|\bblambda\|_2^2.
\end{equation}
Problem \eqref{eq:reg_dual_problem} is a regularized variant of the empirical dual problem in \eqref{eq:empirical_dual}, and can be solved with the same optimization scheme: alternating between minimizing with respect to $\bbT$ and maximizing over $\bblambda$. The gradient with respect to $\bbT$ remains unchanged, resulting in the same primal update as in Algorithm \ref{alg:primal-dual-transformer-cl}. However, the gradient with respect to $\bblambda$ now includes a regularized term, $-\frac{1}{\beta} \bblambda$, modifying the dual update to
\begin{equation}\label{eq:dual_update_decay}
    \bblam = \left[\left(1-\frac{1}{\beta}\right) \bblam + \eta_2 \nabla_{\bblam} \widehat{\ccalL}(\bbT, \bblam)\right]_+.
\end{equation}
This formulation is analogous to applying weight decay in updating the Lagrangian multipliers and serves to stabilize their growth. In our experiments, we employ either the update rule in \eqref{eq:dual_update_decay} or directly solve \eqref{eq:resilient_dual_problem} via automatic differentiation, depending on the problem setting.

\newpage
\section{Experimental details} \label{app:experiment_details}
\subsection{Common Implementation details for Sections \ref{sec:video_denoising} and  \ref{sec:language_classification}}
\textbf{Training setting.} The goal of both experiments is to compare the behavior of constrained and unconstrained models under different settings. One setting is comprised of a model, a dataset, a perturbation level $\gamma$, and a depth $L$. 

\textbf{Hyperparameters.} The common hyperparameters to tune are $\beta$, $\eta_1$, $\eta_2$, $\alpha$ and $f_0$. The hyperparameter search method differs in video and language, detailed in the next sections.

\textbf{Optimizers.} Unconstrained training uses one ADAM optimizer. Constrained training uses two ADAM optimizers, one for the neural network parameters and another one for optimizing the dual variables. We implement resilient constrained learning in video as presented in the original work \citep{hounie23}. For the language experiment, we use the weight decay formulation of resilience. However, as was noted in Section \ref{sec:constrained_training}, both formulations are equivalent.

\textbf{Compute platform.} The video experiments were distributed between three machines: one machine has a single NVIDIA GeForce RTX 3080 Ti GPU, two of the machines have two NVIDIA GeForce RTX 3090 cards. The language experiments were run exclusively in the machines with two GPUs.

\textbf{Relevant libraries.} All of our experiments are implemented using PyTorch, version 2.6 for video and 2.7 for language. Additionally, the language experiment uses HuggingFace Datasets and Pytorch Lightning.


\subsection{Video Denoising Implementation Details}

\textbf{Models.} We considered three models: UT, DUST, and ViT. While we generally follow the implementation from \citep{weerdt23}, we make some minor simplifications to DUST and UT, which may make our results not directly comparable to theirs. These changes are explained in Appendix \ref{app:unrolling} To adapt ViT to the denoising task, we discard the classifier head, directly take each layer's output and interpret it as a reconstruction. It is worth noting that ViT processes each frame separately, while DUST and UT are natively designed to process sequences of patches. However, we reiterate that the goal of our experiments is \textit{not} to compare performance across models, but rather contrast the constrained and unconstrained versions of each. Therefore, this difference is not significant for our purposes.

\textbf{Splits and data processing.} We reuse the preprocessing from \citep{luongDesigningInterpretableRecurrent2021}, which consists of grayscaling, resizing to 160x160, and creating 16x16 patches. Vision Transformer uses its own out-of-the box processor on each frame.

\textbf{Initialization.} Dual variables and resilience slacks are initialized to zero. Model weight initialization in video uses a Discrete Cosine Transform for DUST and UT \citep{luongDesigningInterpretableRecurrent2021}. ViT is initialized to pretrained weights.

\textbf{Metrics.} 
For a set of ground-truth images $\{\mathbf{Y}_i\}_{i=1}^N$ and their reconstructions $\{\widehat{\mathbf{Y}}_i\}_{i=1}^N$, the root mean squared error (RMSE) is defined as $\text{RMSE} = \sqrt{\tfrac{1}{N}\sum_{i=1}^N \lVert \widehat{\mathbf{Y}}_i - \mathbf{Y}_i \rVert_2^2}$. At test time, we evaluate RMSE under different perturbation levels, $\gamma \in \{0.01, 0.05, 0.1, 0.2, 0.25, 0.5, 0.75, 1.0, 1.5\}$. In the forthcoming extended results, we summarize model performance across different distribution shifts, we report the mean RMSE across perturbation levels.




\textbf{Primal warmup.} We train for an epoch without activating constraints as we empirically observed this aids with the stability of constrained training

\textbf{Resilience restarts.} After every epoch, we clamp the resilience slacks to zero. We empirically observe that initial relaxations tend to be high and then converge slowly. Restarting the slacks after every epoch helps converge to tighter feasible solutions more quickly.

\textbf{Hyperparameter tuning.} We perform a single hyperparameter search for each model with training perturbation $\gamma=0.15$ and reuse the results for every setting. We fix $\eta_1 = 3 \times 10^{-4}$, except for runs of DUST-Avenue with $L=9$, which use $\eta_1 = 8 \times 10^{-6}$. Table \ref{tab:dual_hyperparameters_video} shows the results of the hyperparameter search.


\begin{table}[htbp]
\centering
\caption{Dual hyperparameters used for each model in the video denoising task.}
\begin{tabular}{rcccc}
\toprule
\textbf{Model} & $\alpha$ & $f_0$ & $\beta$ & $\eta_2$ \\
\midrule
DUST & 6.50 & $3.10 \times 10^{6}$ & 0.75 & $2.78 \times 10^{-4}$ \\
UT & 5.50 & $5.30 \times 10^{5}$ & 0.78 & $3.20 \times 10^{-4}$ \\
ViT & 0.44 & $2.02 \times 10^{2}$ & 0.71 & $3.00 \times 10^{-4}$ \\
\bottomrule
\end{tabular}

\label{tab:dual_hyperparameters_video}
\end{table}

\begin{table}[htbp]
\centering
\caption{Dual hyperparameters for the best run of each model in the language task.}
\begin{tabular}{rrccccccc}
\toprule
\textbf{Model} & \textbf{Dataset} & $L$ & $\gamma$ & $\alpha$ & $f_0$ & $\beta$ & $\eta_2$ \\
\midrule
\multirow{2}{*}{DistilBERT} & IMDB & 12 & 1.00 & 0.774 & 1.00 & 1.07 & $3.83\times10^{-2}$ \\
                           & MNLI & 12 & 1.00 & 0.774 & 1.00 & 1.99 & $1.51\times10^{-2}$ \\
\multirow{2}{*}{UT}        & IMDB & 3 & 0.80 & 0.900 & 0.90 & 3.45 & $2.80\times10^{-2}$ \\
                           & MNLI & 3 & 0.00 & 0.900 & 0.90 & 1.98 & $9.12\times10^{-2}$ \\
\bottomrule
\end{tabular}

\label{tab:dual_hyperparameters_language}
\end{table}

\subsection{Language experiment setup.}

\textbf{Splits and data processing.} We rely on the standard train and test splits for each dataset. For the case of MNLI, there is a single step of preprocessing where we combine the premise and hypothesis into a single instance.

\textbf{Models.} We considered two models, a pretrained DistilBERT and UT. UT takes as input the same word embeddings as DistilBERT.

\textbf{Initialization.} Dual variables and resilience slacks are initialized to zero. UT uses Xavier initialization. DistilBERT is initialized to pretrained weights.

\textbf{Metric.} In language, we report the prediction accuracy, $\text{Acc}(\bbx,\bby)\ =\ \frac{1}{M}\,\sum_{i=1}^{M}\mathbf{1}\!\bigl\{x_i = y_i\bigr\}$, where $M$ is the number of samples, $\bbx$ is the true vector of classes, and $\bby$ is the predicted classes. To summarize, we estimate the AUC of the accuracies at different perturbation levels.

\textbf{Hyperparameter tuning.} For constrained training, we performed a Bayesian hyperparameter search with five runs per experimental setting. To maintain a fair comparison, unconstrained training was executed five times with different seeds, and we report the best result. This choice was motivated by observing a higher sensitivity of constrained experiments to dual hyperparameters compared to the video experiments. For brevity, Table \ref{tab:dual_hyperparameters_language} only lists hyperparameters corresponding to the best-performing run for each combination of dataset, model, and number of layers. In all settings, we fixed $\eta_1 = 10^{-5}$.

\subsection{Extended Results for Video and Language} \label{app:extended_results}

\textbf{Constraints Improve OOD Performance Across Settings.} In Section \ref{sec:video_denoising} and \ref{sec:language_classification} we analyzed ID and OOD distribution for runs with a particular training perturbation. In Tables \ref{tab:full_auc_video} and \ref{tab:full_auc_language}, we provide a summary of the complete suite of experiments for language and video experiments respectively. Note that RoBERTa-MNLI was omitted due to computational limitations. We can appreciate that in most settings, training with descent constraints increases performance when compared to unconstrained runs with the same settings. In many cases, these differences are significant. For instance, constrained DUST on the Avenue dataset with $L=5$ and $\gamma=0.15$, has an average RMSE of $14.72$, while unconstrained is $28.269$, a $47.9\%$ reduction. On the language side, we find a similar result: constraints either improve or attain comparable AUC when compared across settings.

We also observe that a small number of constrained runs have very high RMSE, such as DUST with 7 layers on the ShanghaiTech dataset results in an RMSE above 131. The reason for these anomalies is challenges with primal-dual convergence. These results highlight the importance of carefully choosing the dual parameters and verifying training finalizes at a feasible solution.


\begin{table}[htbp]
\begin{minipage}{\linewidth}
    \centering
    \caption{Average test RMSE over perturbations: Constrained vs Unconstrained (all settings)}
    \label{tab:video_constrained_vs_unconstrained}
    \begin{tabular}{lllllllll}
    \toprule
     &  &  & \multicolumn{2}{c}{\textbf{Avenue}} & \multicolumn{2}{c}{\textbf{Shanghaitech}} & \multicolumn{2}{c}{\textbf{UCSD}} \\
     \cmidrule(lr){4-5} \cmidrule(lr){6-7} \cmidrule(lr){8-9}
     & $L$ & $\gamma_{\text{train}}$ & Constr & Unconstr. & Constr & Unconstr. & Constr & Unconstr. \\
    \midrule
    \multirow[c]{15}{*}{\textbf{DUST}} & \multirow[c]{5}{*}{\textbf{3}} & 0.00 & 21.145 & \textbf{19.679} & \textbf{19.290} & 19.357 & \textbf{19.165} & 19.323 \\
     &  & 0.09 & \textbf{16.976} & 17.139 & 16.175 & \textbf{15.952} & \textbf{14.856} & 14.974 \\
     &  & 0.11 & \textbf{16.054} & 16.410 & \textbf{15.377} & 15.444 & \textbf{14.166} & 14.167 \\
     &  & 0.13 & \textbf{15.409} & 15.878 & 17.452 & \textbf{14.838} & 13.562 & \textbf{13.505} \\
     &  & 0.15 & \textbf{14.765} & 15.264 & \textbf{13.781} & 14.276 & 12.970 & \textbf{12.934} \\
     \cmidrule(lr){2-9}
     & \multirow[c]{5}{*}{\textbf{5}} & 0.00 & \textbf{17.650} & 21.072 & \textbf{18.643} & 19.174 & \textbf{17.517} & 19.469 \\
     &  & 0.09 & \textbf{15.599} & 20.344 & \textbf{15.018} & 15.259 & \textbf{14.362} & 14.443 \\
     &  & 0.11 & \textbf{15.848} & 20.960 & 13.633 & \textbf{12.463} & 105.767 & \textbf{20.965} \\
     &  & 0.13 & \textbf{18.057} & 22.368 & \textbf{13.231} & 13.738 & \textbf{13.240} & 13.257 \\
     &  & 0.15 & \textbf{14.724} & 28.269 & \textbf{12.410} & 18.636 & \textbf{12.642} & 12.840 \\
     \cmidrule(lr){2-9}
     & \multirow[c]{5}{*}{\textbf{7}} & 0.00 & \textbf{16.179} & 19.370 & 131.102 & \textbf{19.233} & \textbf{14.448} & 19.396 \\
     &  & 0.09 & \textbf{28.916} & 36.374 & 132.968 & \textbf{14.754} & \textbf{13.040} & 25.968 \\
     &  & 0.11 & \textbf{14.885} & 25.883 & 138.362 & \textbf{13.470} & 22.551 & \textbf{20.503} \\
     &  & 0.13 & \textbf{14.515} & 63.309 & \textbf{13.106} & 13.502 & \textbf{12.365} & 62.931 \\
     &  & 0.15 & \textbf{14.330} & 30.617 & \textbf{11.986} & 13.065 & \textbf{12.024} & 23.470 \\
     \midrule
    \multirow[c]{15}{*}{\textbf{UT}} & \multirow[c]{5}{*}{\textbf{3}} & 0.00 & 16.770 & \textbf{16.180} & \textbf{17.691} & 18.471 & \textbf{15.074} & 15.353 \\
     &  & 0.09 & \textbf{14.098} & 14.639 & \textbf{14.381} & 15.026 & \textbf{12.938} & 13.481 \\
     &  & 0.11 & \textbf{13.523} & 14.063 & \textbf{13.618} & 13.783 & \textbf{12.770} & 13.293 \\
     &  & 0.13 & \textbf{13.196} & 13.481 & \textbf{13.432} & 15.538 & \textbf{11.996} & 12.871 \\
     &  & 0.15 & \textbf{12.377} & 13.110 & \textbf{12.952} & 15.306 & \textbf{11.342} & 12.442 \\
     \cmidrule(lr){2-9}
     & \multirow[c]{5}{*}{\textbf{5}} & 0.00 & 16.464 & \textbf{14.786} & \textbf{17.705} & 18.601 & \textbf{15.227} & 15.675 \\
     &  & 0.09 & \textbf{13.828} & 15.084 & 14.977 & \textbf{13.988} & 13.394 & \textbf{12.923} \\
     &  & 0.11 & \textbf{13.228} & 13.961 & 14.258 & \textbf{12.638} & \textbf{12.283} & 12.784 \\
     &  & 0.13 & \textbf{12.701} & 13.631 & 13.498 & \textbf{11.540} & \textbf{12.144} & 12.360 \\
     &  & 0.15 & \textbf{12.398} & 13.214 & 12.728 & \textbf{11.832} & \textbf{10.815} & 11.643 \\
     \cmidrule(lr){2-9}
     & \multirow[c]{5}{*}{\textbf{7}} & 0.00 & 16.609 & \textbf{14.979} & \textbf{14.626} & 16.817 & 15.047 & \textbf{12.361} \\
     &  & 0.09 & 13.962 & \textbf{13.819} & 16.286 & \textbf{13.567} & 12.818 & \textbf{11.953} \\
     &  & 0.11 & 16.731 & \textbf{14.005} & 13.606 & \textbf{13.300} & 13.092 & \textbf{11.849} \\
     &  & 0.13 & \textbf{12.553} & 13.357 & 18.531 & \textbf{12.735} & \textbf{11.155} & 11.534 \\
     &  & 0.15 & \textbf{12.232} & 12.996 & 12.726 & \textbf{12.316} & \textbf{10.577} & 11.768 \\
     \midrule
    \multirow[c]{5}{*}{\textbf{ViT}} & \multirow[c]{5}{*}{\textbf{12}} & 0.00 & \textbf{11.577} & 12.124 & \textbf{11.577} & 12.124 & \textbf{21.308} & 21.793 \\
     &  & 0.09 & 7.595 & \textbf{7.582} & \textbf{7.538} & 8.008 & \textbf{16.349} & 17.037 \\
     &  & 0.11 & \textbf{7.219} & 7.495 & 7.387 & \textbf{7.240} & \textbf{16.638} & 18.515 \\
     &  & 0.13 & \textbf{7.094} & 7.138 & \textbf{7.021} & 7.351 & \textbf{23.186} & 23.482 \\
     &  & 0.15 & 6.953 & \textbf{6.865} & \textbf{6.650} & 6.889 & \textbf{12.695} & 14.632 \\
    \bottomrule
    \end{tabular}
    \label{tab:full_auc_video}
\end{minipage}
\end{table}


\begin{table}[t]
  \centering
    \caption{OOD Accuracy AUC values for all language classification settings.}
  \begin{tabular}{rrrcccc}
    \toprule
    & & & \multicolumn{2}{c}{\textbf{IMDB}} & \multicolumn{2}{c}{\textbf{MNLI}} \\
    \cmidrule(lr){4-5} \cmidrule(lr){6-7}
    \textbf{Model} & $L$ & $\gamma_\text{train}$ & Constr. & Unconstr. & Constr & Unconstr. \\
    \midrule
    \multirow[c]{24}{*}{\textbf{UT}} 
      & \multirow[c]{6}{*}{\textbf{3}} 
        & 0.0 & \textbf{1.719} & 1.707 & \textbf{0.892} & 0.887 \\
      &  & 0.2 & \textbf{1.719} & 1.714 & \textbf{0.902} & 0.896 \\
      &  & 0.4 & \textbf{1.723} & 1.720 & \textbf{0.914} & 0.909 \\
      &  & 0.6 & \textbf{1.731} & 1.728 & \textbf{0.921} & 0.920 \\
      &  & 0.8 & \textbf{1.740} & 1.735 & \textbf{0.930} & 0.926 \\
      &  & 1.0 & \textbf{1.736} & 1.731 & \textbf{0.935} & 0.933 \\
    \cmidrule(lr){2-7}
      & \multirow[c]{6}{*}{\textbf{5}} 
        & 0.0 & 1.680 & \textbf{1.680} & \textbf{0.889} & 0.882 \\
      &  & 0.2 & 1.686 & \textbf{1.687} & \textbf{0.902} & 0.894 \\
      &  & 0.4 & 1.702 & \textbf{1.706} & \textbf{0.913} & 0.909 \\
      &  & 0.6 & 1.714 & \textbf{1.718} & \textbf{0.924} & 0.918 \\
      &  & 0.8 & 1.727 & \textbf{1.729} & \textbf{0.930} & 0.927 \\
      &  & 1.0 & 1.739 & \textbf{1.740} & \textbf{0.933} & 0.931 \\
    \cmidrule(lr){2-7}
      & \multirow[c]{6}{*}{\textbf{7}} 
        & 0.0 & \textbf{1.673} & 1.672 & \textbf{0.889} & 0.885 \\
      &  & 0.2 & 1.679 & \textbf{1.685} & \textbf{0.901} & 0.895 \\
      &  & 0.4 & \textbf{1.696} & 1.695 & \textbf{0.915} & 0.911 \\
      &  & 0.6 & 1.712 & \textbf{1.714} & \textbf{0.922} & 0.919 \\
      &  & 0.8 & \textbf{1.726} & 1.724 & \textbf{0.929} & 0.925 \\
      &  & 1.0 & \textbf{1.736} & 1.733 & \textbf{0.932} & 0.931 \\
    \cmidrule(lr){2-7}
      & \multirow[c]{6}{*}{\textbf{9}} 
        & 0.0 & \textbf{1.672} & 1.670 & \textbf{0.890} & 0.885 \\
      &  & 0.2 & \textbf{1.680} & 1.676 & \textbf{0.902} & 0.893 \\
      &  & 0.4 & \textbf{1.697} & 1.693 & \textbf{0.914} & 0.909 \\
      &  & 0.6 & 1.711 & \textbf{1.711} & \textbf{0.923} & 0.920 \\
      &  & 0.8 & 1.724 & \textbf{1.724} & 0.928 & \textbf{0.929} \\
      &  & 1.0 & \textbf{1.734} & 1.733 & \textbf{0.932} & 0.930 \\
    \midrule
    \multirow[c]{6}{*}{\textbf{DistilBERT}} 
      & \multirow[c]{6}{*}{\textbf{12}} 
        & 0.0 & 1.583 & \textbf{1.608} & 1.052 & \textbf{1.064} \\
      &  & 0.2 & \textbf{1.592} & 1.586 & 1.091 & \textbf{1.096} \\
      &  & 0.4 & \textbf{1.653} & 1.602 & \textbf{1.162} & 1.153 \\
      &  & 0.6 & \textbf{1.677} & 1.660 & \textbf{1.253} & 1.227 \\
      &  & 0.8 & \textbf{1.738} & 1.704 & \textbf{1.328} & 1.322 \\
      &  & 1.0 & \textbf{1.789} & 1.745 & \textbf{1.408} & 1.405 \\
    \midrule
    \multirow[c]{6}{*}{\textbf{RoBERTa}} & \multirow[c]{6}{*}{\textbf{24}} & 
          0.0 & \textbf{1.797} & 1.651 & 1.532 & \textbf{1.544} \\
     &  & 0.2 & \textbf{1.789} & 1.702 & \textbf{1.570} & 1.532 \\
     &  & 0.4 & \textbf{1.785} & 1.642 & \textbf{1.621} & 1.568 \\
     &  & 0.6 & \textbf{1.778} & 1.708 & \textbf{1.668} & 1.626 \\
     &  & 0.8 & \textbf{1.828} & 1.719 & \textbf{1.742} & 1.694 \\
     &  & 1.0 & \textbf{1.874} & 1.824 & \textbf{1.783} & 1.756 \\
    \bottomrule
    
    \end{tabular}
    
    \label{tab:full_auc_language}
\end{table}



\textbf{Monotonic Descent Behavior.} 
In Figure \ref{fig:dust_ucsd_test_loss_by_layer}, we observe that constrained DUST trained on the UCSD dataset exhibits a monotonically decreasing loss $f$ across the layers. This effect persists consistently for models with different depths. In contrast, the descent behavior is absent in the unconstrained counterparts. Similar patterns were observed for other models and datasets. We note that due to the choice of the reference value $f_0$, the initial energy in some constrained runs may be higher than that of the unconstrained ones.

\begin{figure}[h]
    \centering
    \includegraphics[width=0.8\textwidth]{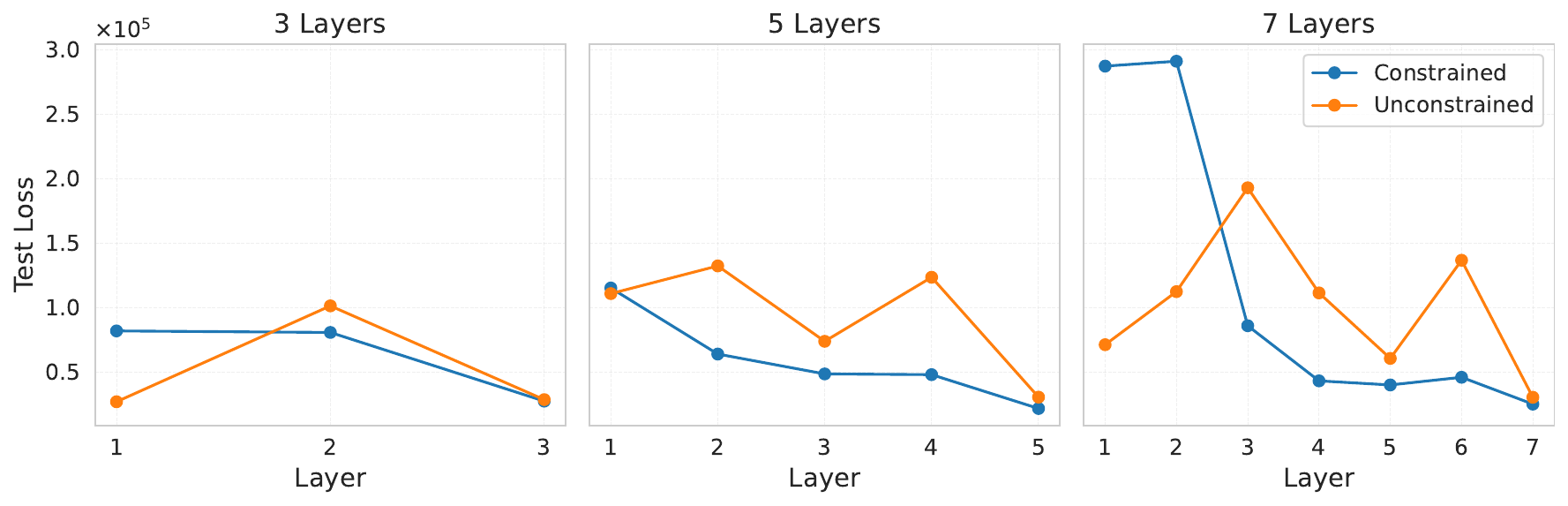}
    \caption{Evolution of the loss function $f$ for unconstrained and constrained DUST models with different numbers of layers: $3$, $5$, and $7$ from left to right. All models are trained on the UCSD dataset with no input perturbations. Constrained DUST exhibits monotonic descent behavior while the unconstrained model does not.}
    \label{fig:dust_ucsd_test_loss_by_layer}
\end{figure}

\textbf{Different Tradeoffs in Video Denoising.} Figure \ref{fig:appendix_success_cases_video} shows three more examples where, with the same settings, constrained models achieve a better tradeoff between in-distribution and OOD reconstruction quality. In two of them (UT and ViT), we see constrained models trading off reconstruction quality at low levels of noise for better OOD performance. In the case of constrained DUST, however, we see a higher reconstruction quality compared to unconstrained in low-perturbation settings, and this difference gradually decreases with more test noise.

\begin{figure}
    \centering
    \includegraphics[width=\linewidth]{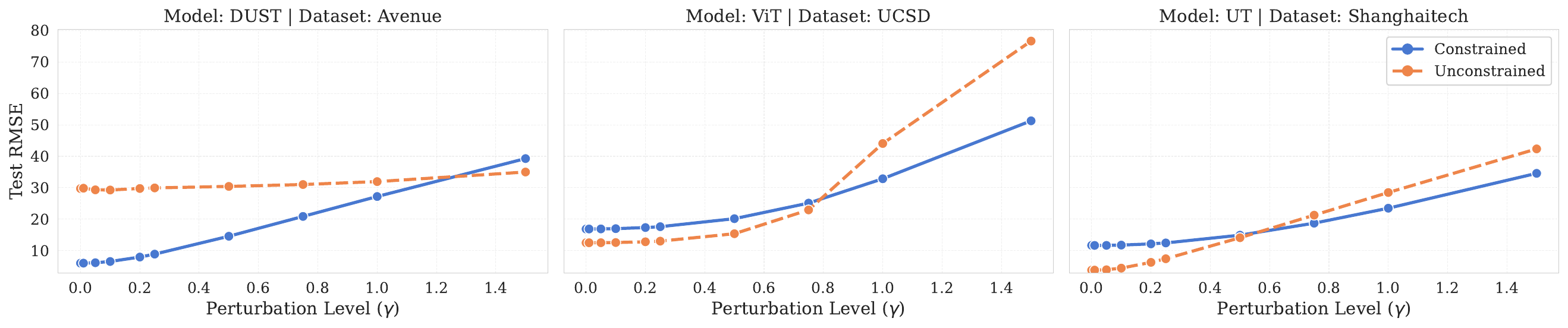}
    \caption{RMSE vs test perturbation plots for constrained and unconstrained models in three select video denoising settings. The first is UT-ShanghaiTech with $L=7$, train $\gamma=0.09$, the second is ViT-UCSD with $L=12$ and training $\gamma=0.13$, and the third is DUST-Avenue with $L=7$ and training $\gamma=0.15$.}
    \label{fig:appendix_success_cases_video}
\end{figure}

\textbf{Non-uniform Effect of Perturbation.} Figure \ref{fig:appendix_lang_ood_per_perturb} shows constrained and unconstrained OOD accuracy curves for UT and DistilBERT. In DistilBERT, as the perturbation level increases, the OOD accuracy of the constrained model slightly increases, an effect also present on the IMDb dataset, as shown in Section \ref{sec:video_denoising}. This effect is not uniform across settings, however, as we see that in UT the gap between constrained and unconstrained is largest when training with $\gamma=0$.

\begin{figure}[b]
  \centering
  \begin{subfigure}[b]{0.49\linewidth}
    \includegraphics[width=\linewidth]{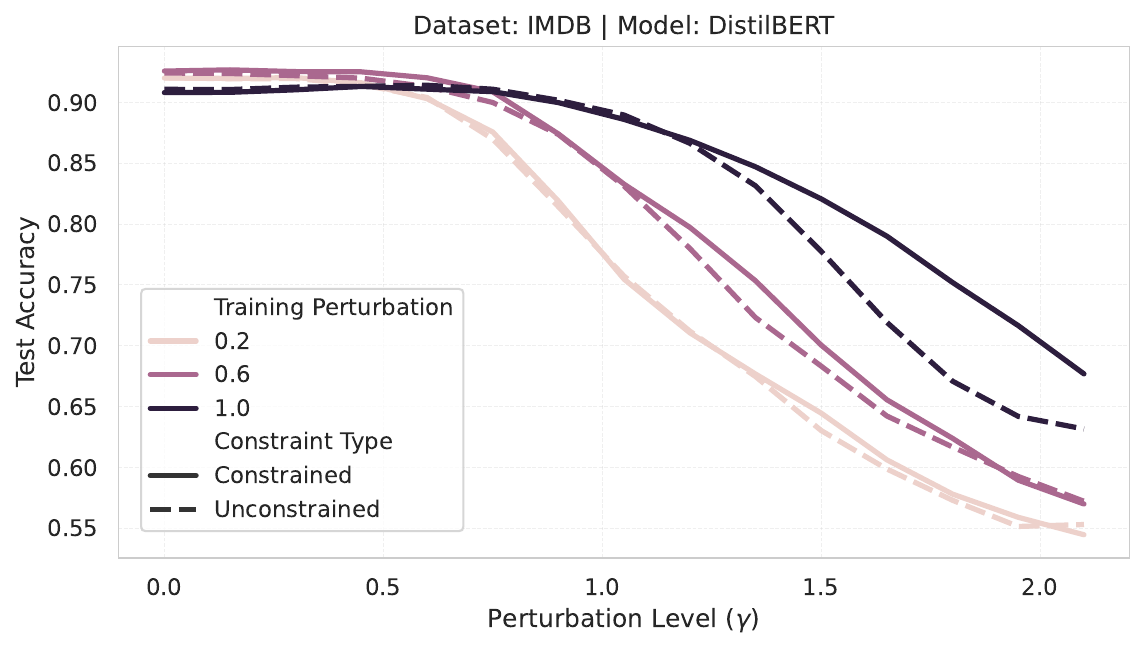}
    \caption{OOD Accuracy}
    \label{fig:ood-side-by-side}
  \end{subfigure}
  \hfill
  \begin{subfigure}[b]{0.49\linewidth}
    \includegraphics[width=\linewidth]{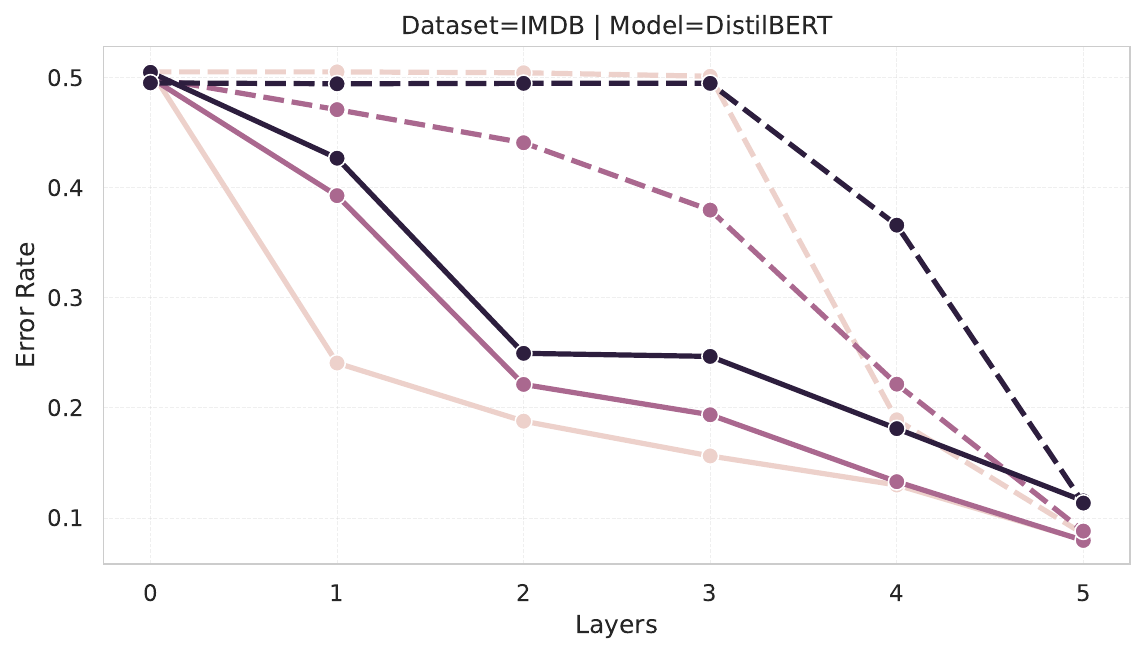}
    \caption{Error rate over layers}
    \label{fig:layer-desc}
  \end{subfigure}
  \caption{Language Classification OOD Accuracy and layerwise error rates of the constrained and unconstrained DistilBERT trained on the IMDb dataset. Constrained DistilBERT achieves higher OOD accuracy and exhibits monotonic descent behavior across layers for various perturbation levels.}
  \label{fig:imdb-combined}
\end{figure}

\textbf{Infeasible Solutions Lead to Low Performance. } In Figure \ref{fig:failure_case_shanghaitech_dust} we present a failure mode where the constrained model failed to converge to a feasible solution, resulting in the constrained model having very low performance across all noise regimes. As mentioned previously, this is an example that highlights the importance of hyperparameter tuning for the dual problem and verifying that constrained models are feasible at the end of training.

\textbf{Effect of training perturbation.} In addition to the OOD analysis on fixed $\gamma_\text{train}$, we explore the effect of different training perturbations for DistilBERT on IMDb in Figure \ref{fig:imdb-combined}. On the left plot, we observe that increasing $\gamma_\text{train}$ improves robustness at high perturbation levels for constrained models at a slight tradeoff for reduced performance at low test perturbations $\gamma$. For unconstrained models, the tradeoff is not as favorable. On the right plot, we observe smoother descent patterns on the error rate for constrained models.

\begin{figure}[htbp]
    \centering
    \includegraphics[width=\textwidth]{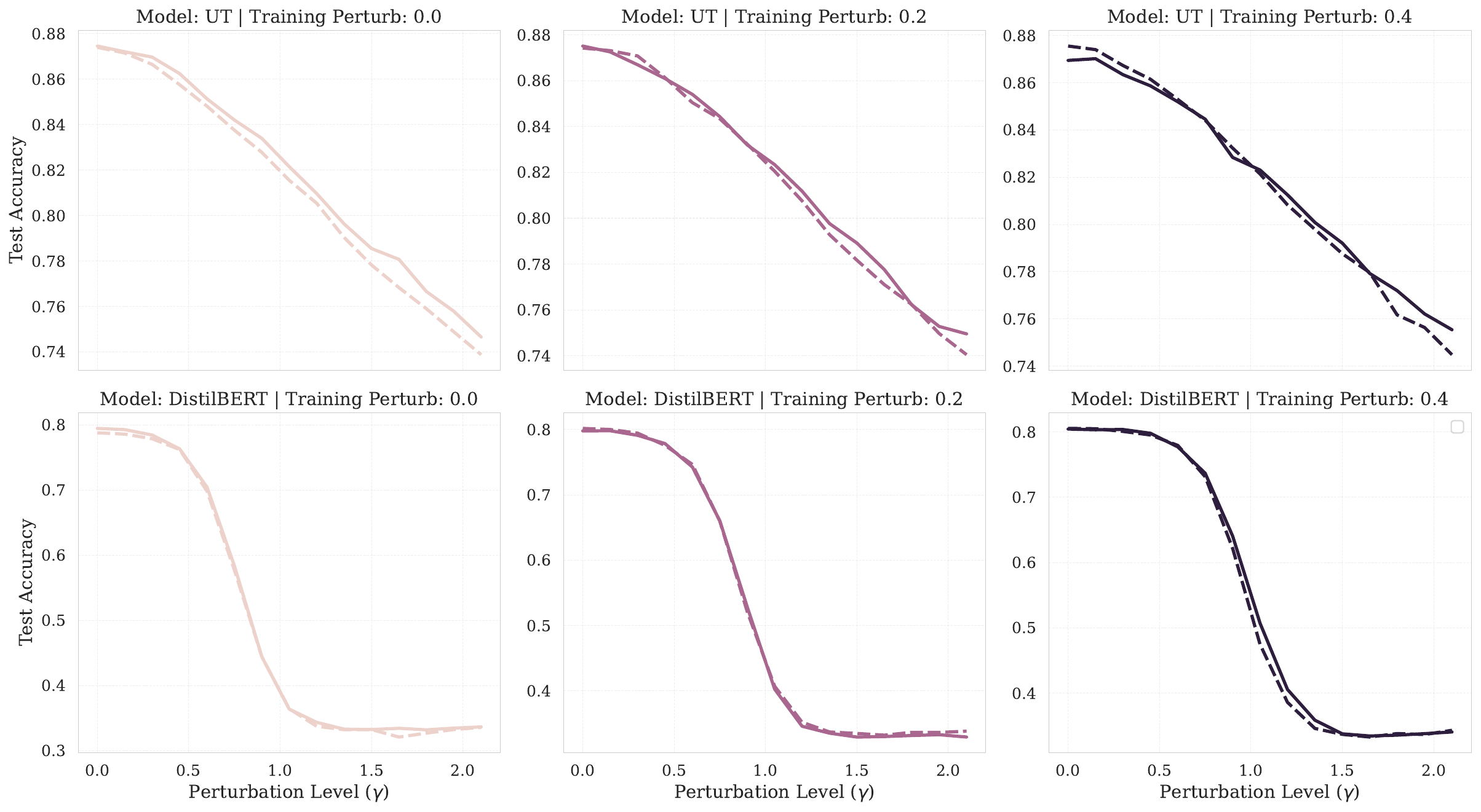}
    \caption{Accuracy OOD plots for two select language experiment settings. The first plot row is UT-IMDB, the second plot row is DistilBERT-MNLI. The plot columns are training $\gamma$ levels. Each plot shows constrained and unconstrained OOD Accuracy curves for each setting. The plot colors represent the training perturbation level.}
    \label{fig:appendix_lang_ood_per_perturb}
\end{figure}

\begin{figure}
    \centering
    \includegraphics[width=\linewidth]{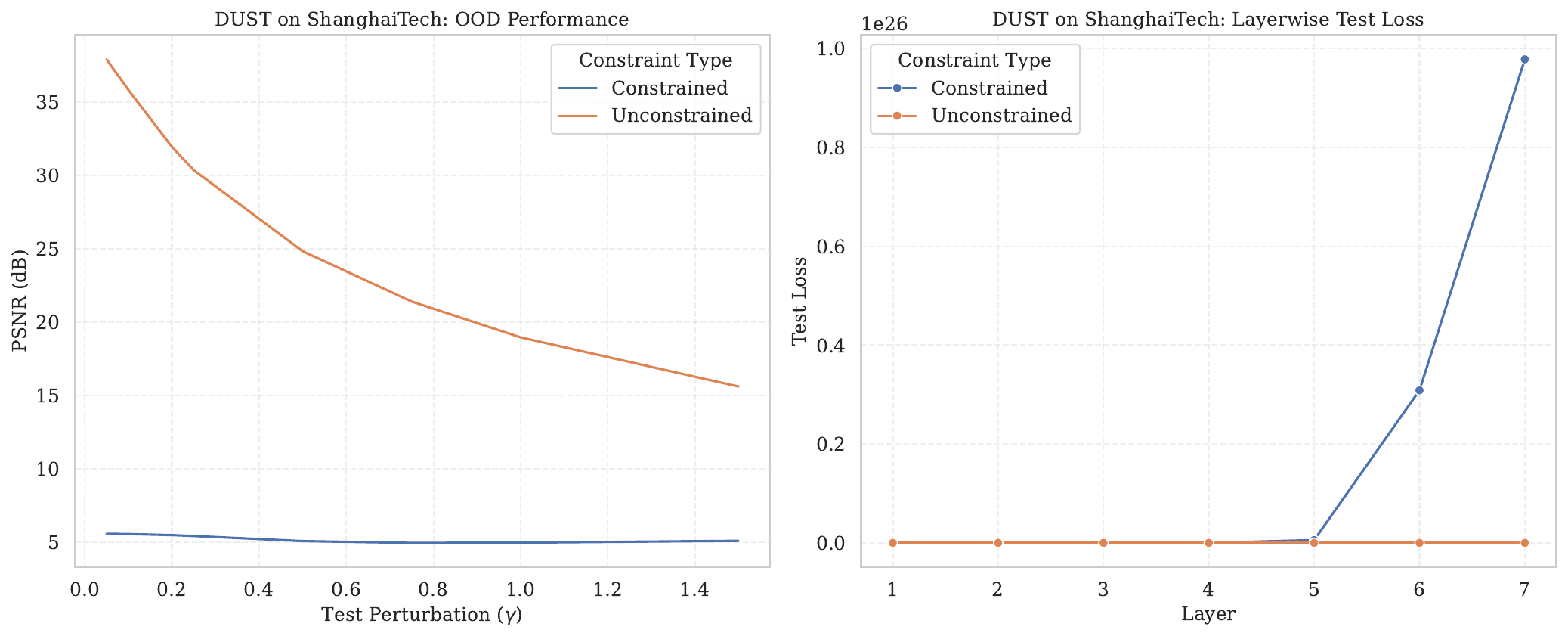}
    \caption{Example of an infeasible constrained model. The setting is DUST on the ShanghaiTech dataset, $L=7$, training $\gamma=0.11$. The left plot shows the OOD PSNR values for constrained and unconstrained models, and the right plot shows the test loss of the models at each intermediate layer's representation. The constrained model failed to converge to a monotonically decreasing solution.}
    \label{fig:failure_case_shanghaitech_dust}
\end{figure}

\newpage
\subsection{Constraint Satisfaction}

\begin{figure}
    \centering
    \includegraphics[width=\linewidth]{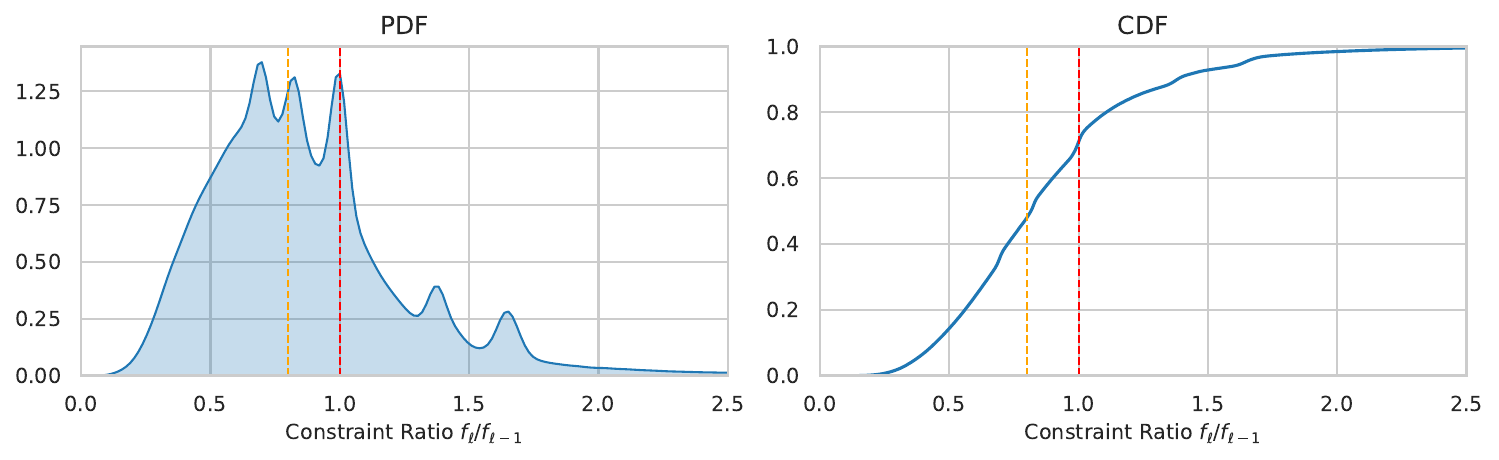}
    \caption{PDF and CDF for empirical layerwise loss ratios. Red line indicates ratio of 1, orange line indicates ratio of $(1-\alpha)=0.8$.}
    \label{fig:distr_constr_unconstr_samplewise}
\end{figure}

\begin{figure}[b]
    \centering
    \includegraphics[width=\linewidth]{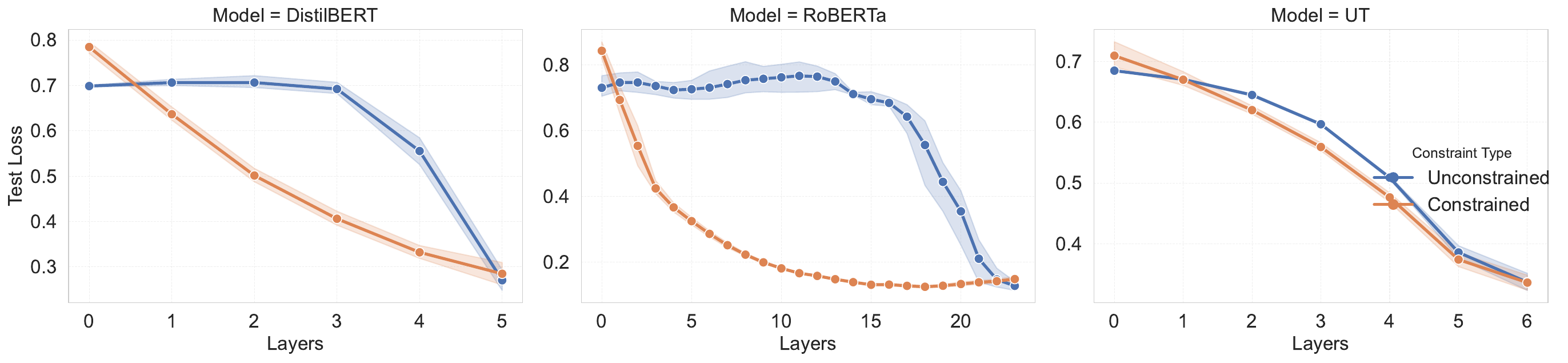}
    \caption{Average layerwise trajectory over all runs of the text classification experiments. UT shows the cases where $L=7$.}
    \label{fig:language_average_satisfaction}
\end{figure}

\textbf{Constraint satisfaction on average.} Figure \ref{fig:language_average_satisfaction} illustrates the layerwise descent behavior over all training settings. The figures show that constrained models attain  consistent, monotonically decreasing patterns.

\textbf{Per-sample constraint satisfaction.} Satisfying the descent constraints in expectation of Problem \eqref{eqn_constrained_transformer} does not guarantee that every sample instance will also satisfy descending constraints. At the time of writing, per-sample constraint satisfaction remains a challenging problem. In light of this, we study the empirical distribution of layerwise ratios. Denoting $f_l^{(m)}=f(\bbX^{(m)},\Phi_l(\bbX^{(m)}))$, we analyze the distribution of $f_l^{(m)}/f_{l-1}^{(m)}$, for all $X^{(m)}$ in a test set of a constrained model (RoBERTa, IMDb). The constrained model is trained with $\alpha=0.2$ and without constraint relaxations.

The distribution of layerwise ratios is presented in Figure \ref{fig:distr_constr_unconstr_samplewise}. We observe empirical constraint ratios with mean 0.87 and a median of 0.82. In addition to predictions being concentrated around (1-$\alpha$), as our theory predicts, we also observe that loss decreases on 70.8\% of steps in the constrained case. This shows that empirically, constraints ellicit descent behavior in most steps.

\subsection{Ablation Analysis Setup \& Additional Results} \label{app:ablation_setup}

Section \ref{sec:ablation_analysis} presented the results of our ablation analysis on the hyperparameters introduced by our constrained learning algorithm. The results were obtained by finetuning RoBERTa on the IMDb dataset, with $\gamma_{train}=1.0$, for three seeds per ablation setting. For step size, we used values of $\alpha \in \{1.0, 0.98, 0.95, 0.9, 0.8, 0.7, 0.5\}$. For dual learning rate, the values were $\eta_2 \in \{0.001, 0.005, 0.01, 0.05, 0.1 \}$. In all cases, set the resilience coefficient $\beta=1$.

\textbf{Ablation on $\alpha$.} Figure  \ref{fig:alpha_ablation} shows test accuracy at three different test perturbation levels, with means and error bars calculated across three seeds, for increasing values of $\alpha$. At $\gamma=0.0$ and $1.05$, we observe consistent accuracy levels across all constraint levels (degradation of approximately 0.03\%). At $\gamma=1.65$, test accuracy increases from 65\% at $\alpha=0.0$ to over 75\% at $\alpha=0.5$. This aligns with our theory: increasing the layerwise step size results in improved robustness at a small penalty in in-distribution performance. Finally, we observe increased variance across results at higher perturbation levels.  

\textbf{Ablation of $\eta_2$.} Figure  \ref{fig:duallr_ablation} shows test accuracy at three different test perturbation levels, with means and error bars calculated across three seeds, for increasing values of $\eta_2$. We observe little sensitivity across all values of $\eta_2$. With an appropriately chosen step size to guarantee dual convergence, we can expect similar robustness results.
 
\begin{figure}[t]
    \centering
    \begin{subfigure}{0.32\textwidth}
        \centering
        \includegraphics[width=\linewidth]{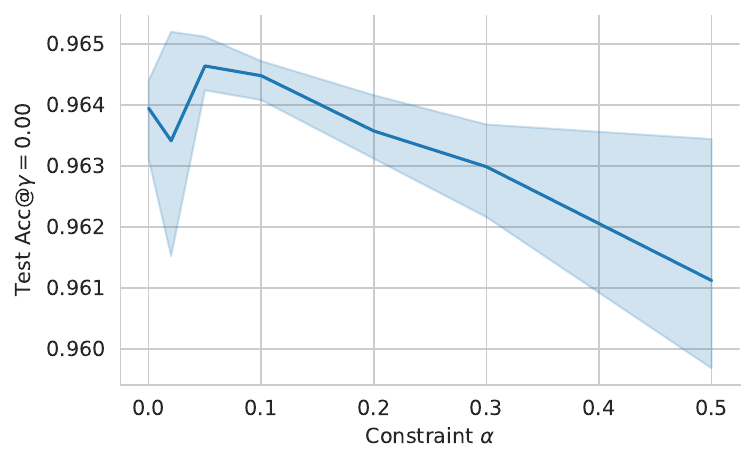}
        \caption{$\gamma = 0.00$}
        \label{fig:alpha_p0}
    \end{subfigure}\hfill
    \begin{subfigure}{0.32\textwidth}
        \centering
        \includegraphics[width=\linewidth]{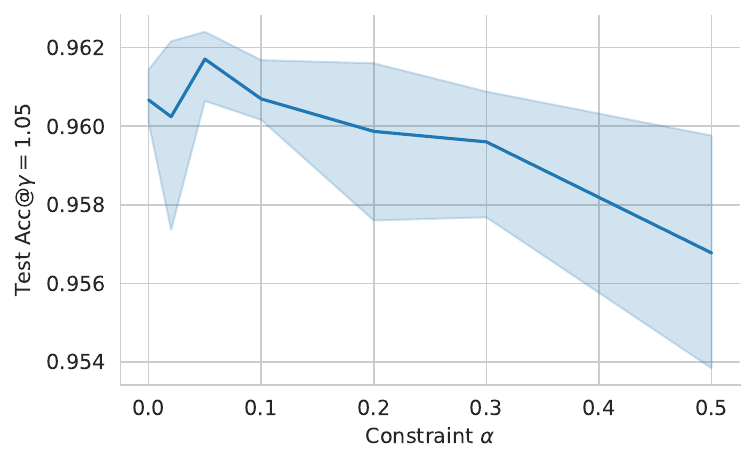}
        \caption{$\gamma = 1.05$}
        \label{fig:alpha_p1.05}
    \end{subfigure}\hfill
    \begin{subfigure}{0.32\textwidth}
        \centering
        \includegraphics[width=\linewidth]{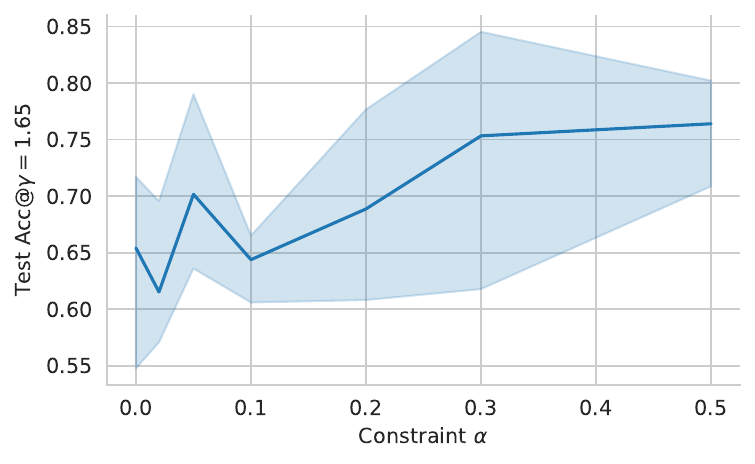}
        \caption{$\gamma = 1.65$}
        \label{fig:alpha_p1.65}
    \end{subfigure}
    \caption{Ablation on constraint $\alpha$. Test accuracy as a function of constraint $\alpha$ for different test $\gamma$. Error bars show variation over three runs.}
    \label{fig:alpha_ablation}
\end{figure}

\begin{figure}[t]
    \centering
    \begin{subfigure}{0.32\textwidth}
        \centering
        \includegraphics[width=\linewidth]{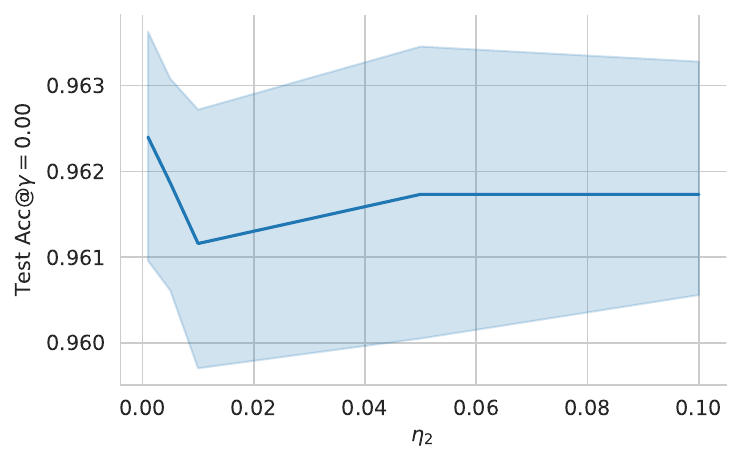}
        \caption{$\gamma = 0.00$}
        \label{fig:duallr_p0}
    \end{subfigure}\hfill
    \begin{subfigure}{0.32\textwidth}
        \centering
        \includegraphics[width=\linewidth]{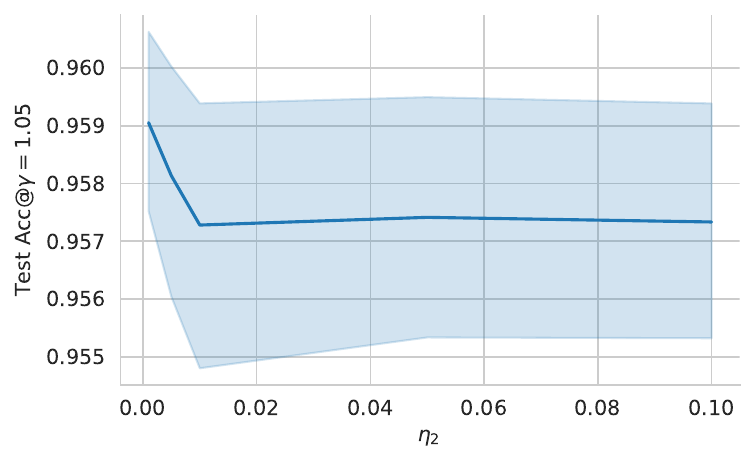}
        \caption{$\gamma = 1.05$}
        \label{fig:duallr_p1.05}
    \end{subfigure}\hfill
    \begin{subfigure}{0.32\textwidth}
        \centering
        \includegraphics[width=\linewidth]{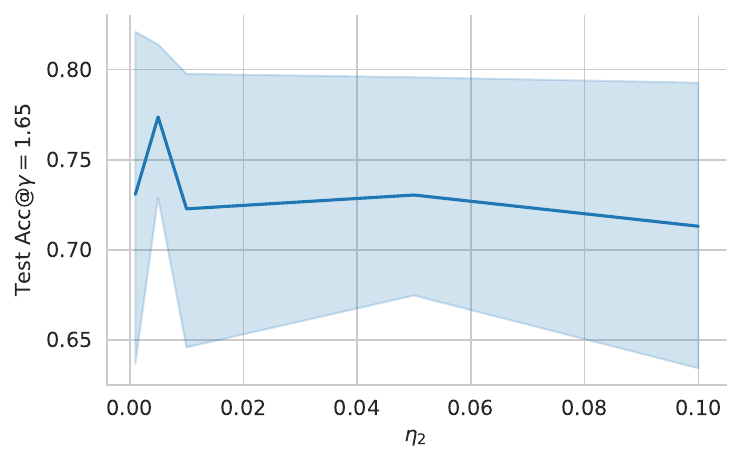}
        \caption{$\gamma= 1.65$}
        \label{fig:duallr_p1.65}
    \end{subfigure}
    \caption{Dual learning rate ablation. Test accuracy as a function of dual learning rate for different test $\gamma$. Error bars show variation over three runs}
    \label{fig:duallr_ablation}
\end{figure}

\subsection{LLM Experiment - Implementation Details \& Completions} \label{app:llm_details}
Both constrained and unconstrained models were fine-tuned with full precision, using AdamW for two epochs, with a batch size of 4. The constrained model used the primal-dual Algorithm \ref{alg:primal-dual-transformer-cl}. OOD robustness was evaluated on a held-out set of the alpaca training data, where the ground truth was perturbed with Gaussian noise. For the Alpaca Eval completions, Gaussian perturbations were added to all tokens, including the model's own generations. Table \ref{tab:hyperparameters_llm} contains the hyperparameters used for the LLM experiment of Section \ref{sec:llm_experiment}, and Figure \ref{fig:llm_example} presents an example completion from the Alpaca Eval dataset.

\begin{table}[htbp]
\centering
\caption{Hyperparameters for Llama 3.1 8B on instruction-following.}
\begin{tabular}{rrccccccccc}
\toprule
\textbf{Model} & \textbf{Dataset} & $L$ & $\gamma$ & $\alpha$ & $f_0$ & $\beta$ & $\eta_2$ & $r$ & $\alpha_{\text{LoRA}}$ & dropout \\
\midrule
Llama 3.1 8B & Alpaca & 32 & 0.25 & 0.95 & 12.0 & 1.5 & $1.00\times10^{-1}$ & 6 & 8 & 0.05 \\
\bottomrule
\end{tabular}
\label{tab:hyperparameters_llm}
\end{table}

\begin{figure}[htbp]
\small
\textbf{Prompt:} \textit{List the pieces of a reinforcement learning system that can learn how to play Atari games.}

\noindent
\begin{minipage}[t]{0.48\textwidth}
\textbf{Unconstrained} \\[0.3em]
\fbox{\parbox{0.95\textwidth}{\raggedright
The pieces of a reinforcement learning system that can learn how to play Atari games are: 
1. A keyboard,
2. A monitor,
3. A mouse,
4. A monitor
5. A monitor,
6. A monitor,
7. A mouse,
8. A mousse,
9. A mouse,
10. A moose,
11. A moose,
12. A mouse,
...
}}
\end{minipage}
\hfill
\begin{minipage}[t]{0.48\textwidth}
\textbf{Constrained} \\[0.3em]
\fbox{\parbox{0.95\textwidth}{\raggedright
The reinforcement learning system consists of a set of rules that reward the agent for performing a certain action. These rules are designed to encourage the agent to take certain actions in order to maximize its reward.
}}
\end{minipage}
\caption{Example completion from Alpaca Eval under $\gamma=1.5$ embedding noise. The unconstrained model degenerates into nonsensical text, while the constrained model remains coherent.}
\label{fig:llm_example}
\end{figure}

\section{Unrolled neural networks} \label{app:unrolling}
In this Appendix, we discuss the literature on unrolled neural networks, our relationship to unrolling, and implementation details of the unrolled architectures used in the experiments of Sections \ref{sec:video_denoising} and \ref{sec:language_classification}

\subsection{Algorithmic unrolling and unrolled neural networks}

Algorithmic unrolling began with the seminal work by Gregor \& Lecun for learning fast approximations of sparse coding (LISTA), which showed that a significant speedup can be accomplished by training a neural network to \textit{imitate} the optimal representation of ISTA. A vast literature has focused on improving the efficiency and convergence of LISTA-like algorithms, for instance, \citep{liu2018alista}.

Since then, a new type of unrolling literature has emerged \citep{yang22,weerdt23,yang21g,xie2023,hersheyDeepUnfoldingModelBased2014,freconBregmanNeuralNetworks2022} \, where neural networks are interpreted as optimizers that settle on an equilibrium point of an energy function. An unrolled network minimizes a problem of the form

\begin{align}
    \bbH^\star(\bbW) = \underset{\bbH}\argmin\ \E{g(\bbX,\bbH)} \label{eq:inner_restate},
\end{align}

where $g(\cdot)$ is the energy function, $\bbH$ is an optimization variable, and $\bbX$ is data. The network $\Phi(\bbX;\bbW)$ \textit{unrolls} problem \eqref{eq:inner_restate} by iteratively decreasing $g$ along its layers. The general method to prove unrolling is to interpret the forward pass as a gradient step, proximal method, or some other descent method,

\begin{align}
    \bbH_{k+1}(\bbW) &= \bbH_k(\bbW) - \eta\Gamma_{g,\bbH(\bbW)},
\end{align}

where $\Gamma_{\bbH(\bbD)}$ is a descent direction of $g$ with respect to $\bbH$. The goal of unrolling is then to find a $g$ function such that $\Gamma_{g,\bbH(\bbD)}$ makes this equation also satisfy

\begin{align}
    \bbH_{k+1}(\bbW) = \phi(\bbH,\bbX;\bbW), \label{eq:step_is_nn}
\end{align}

where $\phi$ is the forward pass of a neural network, such as a transformer. The motivation is to elucidate the behavior of the neural network, as it is said that the function $g$ \textit{explains} the forward pass of the architecture. For example, consider unrolling the following problem, which results in a ReLU layer: 
\begin{align}
    \min_{\bbH(\bbW)} \frac{1}{2} \text{Tr}[\bbH^\top\bbW\bbH] + \frac{1}{2} \|\bbH\|_2^2 + \psi(\bbH), \qquad \psi(u) = \begin{cases} +\infty \text{ if } u < 0\\ 0, 
\text{otherwise}\end{cases} \label{eq:relu_min_problem}
\end{align}

unrolls into $\bbH_{k+1} = \text{ReLU}[\bbW_s\bbH_k]$, with the symmetric matrix $\bbW_s = (1-\alpha)I - \frac{\eta}{2} (\bbW_2 + \bbW_2^\top)$, as shown in \citep{xie2023}. 


\textbf{Training unrolled models.} Training an unrolled neural network is given by the following bilevel optimization problem:
\begin{align} 
    \bbW^* = \underset{{\bbW}}{\text{argmin}} & \quad \mathbb{E}[f(\bbX,\bbH^*(\bbW))], \label{eq:outer_loop} \\
     \text{s.t.} \quad & \quad  \bbH^*(\bbW) = \underset{\bbH}{\text{argmin}}\ \mathbb{E}[g(\bbX,\bbH; \bbW)], \label{eq:inner_loop}
\end{align}
where $f$ is a training objective and $g$ is an auxiliary objective function that guides the evolution of the representation $\bbH$ and encourages desirable internal structure, such as sparsity, cross-correlation, etc.

\textbf{Relation to our method.} As we have explained in Section \ref{sec:bilevel_unrolled_transformers}, we draw inspiration from the idea of neural network unrolling, but what we call unrolling in this paper is a different method, since we don't design and train neural networks to solve optimization problems, but rather encourage existing architectures to descend on an objective via constraints during training. For standard unrolled transformers, such as DUST and UT, our training method can be seen as solving the bilevel problem \eqref{eq:outer_loop} and \eqref{eq:inner_loop}. The constraints we impose in Section \ref{sec:constrained_training} encourage descent on \eqref{eq:outer_loop}. Since the unrolled model is designed to be a descent algorithm of $g(\cdot)$, its forward pass should descend on \ref{eq:inner_loop}, by construction. 

\subsection{Transformer Unrolling} \label{app:appendix_yang_proof}

Consider the energy function given by $g(\bbX,\bbW) = g_1(\bbX,\bbW) + g_2(\bbX,\bbW)$, with

\begin{align}
    g_1(\bbX; \bbW) &= - \sum_{t=1}^T\sum_{u=1}^T \exp{\{ -\frac{1}{2}\|\bbW \bbx_t-\bbW \bbx_u\|^2 \}} + \frac{1}{2} \sum_{t=1}^T \|\bbW\bbx_t\|_2^2 \label{eq:attention_energy}, \\
    g_2(\bbX,\bbW_2) &=  \frac{1}{2} \text{Tr}\{\bbX^\top \bbW_2 \bbX\} + \frac{1}{2} \| \bbX \|_{\mathcal{F}}^2 + \varphi(\bbX) \label{eq:relu_energy}
\end{align}

where $\bbW \in \reals^{d \times n}$  is a matrix of learnable parameters. This function consists of a sum of scaled distances between the vectors of the sequence $\bbX$ and the norm of the projected vectors.

Consider the following recursion that describes a symmetric transformer with shared weights,
\begin{align}
        {\bbZ}_{k+1} & = {\bbX}_{k} \times \text{sm}\Big[({\bbW_1} {\bbX}_{k})^\top ({\bbW_1} {\bbX}_{k})\Big], \label{eq:standard_unroll_softmax}\\
        {\bbX}_{k+1} & = \text{ReLU}\left(\bbW_s\bbZ_{k+1} \right), \label{eq:standard_unroll_relu}
\end{align}
for $k\in[1,K]$, with $\bbW_s$ a symmetric weight matrix as in \eqref{eq:relu_min_problem}. Equation \eqref{eq:standard_unroll_softmax} is a softmax self-attention layer with a single projection matrix $\bbW^l_1$ shared between keys, queries, and values, i.e., $\bbQ^{l}=\bbK^{l}=\bbV^{l}$ for all $l$, noting that the value parameters cancel out from the previous layer. This Equation corresponds to the unrolling of \eqref{eq:attention_energy}. In the next section, we will elaborate on the part of the proof from \citep{yang22} that shows how to derive an attention-like structure from this function.

Equation \eqref{eq:standard_unroll_relu} is a linear transformation parameterized by $\bbW_2$, followed by a residual connection and a ReLU nonlinearity. As mentioned in the previous section, this form of ReLU with symmetric weights corresponds to the unrolling of \eqref{eq:relu_energy}.

With this definition of $g(\cdot)$,  \citep{yang22} show that Equations \eqref{eq:standard_unroll_softmax} and \eqref{eq:standard_unroll_relu} are a descent algorithm for problem \eqref{eq:inner_restate}. The proof involves showing that both steps sequentially result in an inexact gradient descent direction of $g_1(\cdot) + g_2(\cdot)$.

\subsubsection{Derivation of softmax self-attention} \label{app:appendix-yang-proof}

In \citep{yang22}, Theorem 3.1 shows how to derive unrollings for a family of attention structures. Here we present this theorem for the concrete case of self-attention, using our notation. We provide an extended version of their proof for completeness.

\begin{theorem}[Theorem 3.1 from \citep{yang22}]\label{theo:softmax-optimizes-energy}
    Replace $\bbY=\bbW\bbX$, and consider $g_1(\bbY)$ as in \eqref{eq:attention_energy}. Let ${\boldsymbol{\beta}}_i = \exp\left\{-\frac{1}{2}\left\|\bby_i^{(k)}\right\|^2\right\}$, and let $\bbY^{(k)}$ represent any fixed value for $\bbY$.  Then the update rule
   \begin{equation}\bby^{(k+1)}_i = \frac{{\sum_{j=1}^n}{\boldsymbol{\beta}}_j\exp\left\{\bby_i^{(k)\top} \bby_j^{(k)}\right\} \bby_j^{(k)}}{ \sum_{j = 1}^n{\boldsymbol{\beta}}_j\exp\left\{\bby_i^{(k)\top} \bby_j^{(k)}\right\} }, ~~\forall i,\label{eq:softmax-update-general}
   \end{equation} 
   satisfies $g_1\left(\bbY^{(k+1)}\right) \leq g_1\left(\bbY^{(k)}\right)$ with equality iff $\bbY^{(k)}$ is a stationary point of $g_1$.
\end{theorem}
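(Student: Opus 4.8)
The plan is to prove this as a majorization--minimization (MM) step, viewing the update \eqref{eq:softmax-update-general} as exact minimization of a tight quadratic surrogate of $g_1$ built around the current iterate $\bbY^{(k)}$.

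I would first rewrite $g_1$ to expose the structure of \eqref{eq:softmax-update-general}. Completing the square, $\|\bby_t-\bby_u\|^2 = \|\bby_t\|^2 + \|\bby_u\|^2 - 2\bby_t^\top\bby_u$, so $\exp\{-\tfrac12\|\bby_t-\bby_u\|^2\} = \boldsymbol{\beta}_t\boldsymbol{\beta}_u \exp\{\bby_t^\top\bby_u\}$ with $\boldsymbol{\beta}_t = \exp\{-\tfrac12\|\bby_t\|^2\}$. This already explains the $\boldsymbol{\beta}_j$ weights and the attention-like exponential $\exp\{\bby_i^\top\bby_j\}$ in \eqref{eq:softmax-update-general}, and the fact that the factor depending only on the query index $i$ cancels between numerator and denominator.

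The core of the argument is the surrogate. Using convexity of $x \mapsto e^{x}$ (equivalently of $z \mapsto e^{-z/2}$ in $z=\|\bby_t-\bby_u\|^2$), a first-order expansion about $\bbY^{(k)}$ gives, for each pair $(t,u)$, a global linear lower bound on $\exp\{-\tfrac12\|\bby_t-\bby_u\|^2\}$, hence a global upper bound on its negative that is affine in $\|\bby_t-\bby_u\|^2$ with coefficient $\exp\{-\tfrac12\|\bby_t^{(k)}-\bby_u^{(k)}\|^2\}$. Summing over $t,u$ and adding the quadratic term $\tfrac12\sum_t\|\bby_t\|^2$ of $g_1$ yields a function $Q(\bbY\mid\bbY^{(k)})$ that (i) satisfies $Q(\bbY\mid\bbY^{(k)}) \ge g_1(\bbY)$ for all $\bbY$, (ii) is tight and first-order tangent at $\bbY^{(k)}$, so $Q(\bbY^{(k)}\mid\bbY^{(k)}) = g_1(\bbY^{(k)})$ and $\nabla Q(\bbY^{(k)}\mid\bbY^{(k)}) = \nabla g_1(\bbY^{(k)})$, and (iii) is a strictly convex quadratic in $\bbY$. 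One then checks that its unique minimizer is exactly $\bbY^{(k+1)}$ as given in \eqref{eq:softmax-update-general}. The MM chain $g_1(\bbY^{(k+1)}) \le Q(\bbY^{(k+1)}\mid\bbY^{(k)}) \le Q(\bbY^{(k)}\mid\bbY^{(k)}) = g_1(\bbY^{(k)})$ then gives monotone descent. For the equality case: if $\bbY^{(k)}$ is stationary for $g_1$, then by (ii) it is a critical point of the convex $Q$, hence its unique minimizer, so $\bbY^{(k+1)} = \bbY^{(k)}$ and the chain is an equality; conversely, equality forces $Q(\bbY^{(k+1)}\mid\bbY^{(k)}) = Q(\bbY^{(k)}\mid\bbY^{(k)})$, so $\bbY^{(k)}$ minimizes $Q$, whence $\nabla g_1(\bbY^{(k)}) = \nabla Q(\bbY^{(k)}\mid\bbY^{(k)}) = 0$.

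The main obstacle is verifying (iii): showing that after folding in the self-energy $\tfrac12\sum_t\|\bby_t\|^2$ the minimizer of the quadratic surrogate is the \emph{normalized} weighted average in \eqref{eq:softmax-update-general} and not a shrunken variant of it. This is where the specific constants in \eqref{eq:attention_energy} (the $-\tfrac12$ in the exponent and the $\tfrac12$ on the norm) must combine correctly, and where the double sum together with the symmetry $t\leftrightarrow u$ has to be handled carefully. Should the surrogate's minimizer fail to match \eqref{eq:softmax-update-general} exactly, the fallback is the inexact-gradient route suggested in the text: show that $\bby_i^{(k+1)}-\bby_i^{(k)}$ aligns with $-\nabla_{\bby_i} g_1(\bbY^{(k)})$ up to a controlled error and that the implied effective step size is small enough for the descent lemma to yield $g_1(\bbY^{(k+1)}) \le g_1(\bbY^{(k)})$, with equality only at a stationary point.
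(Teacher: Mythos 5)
Your MM framing is the right high‑level lens, and the completing‑the‑square observation (so $\exp\{-\tfrac12\|\bby_t-\bby_u\|^2\}=\bbbeta_t\bbbeta_u\exp\{\bby_t^\top\bby_u\}$) matches the paper. But your central claim --- that $\bbY^{(k+1)}$ in \eqref{eq:softmax-update-general} \emph{exactly minimizes} the single quadratic surrogate $Q(\cdot\mid\bbY^{(k)})$ --- is false, and it is not a detail that ``combines correctly.'' After linearizing each concave term $-\exp\{-\tfrac12\|\bby_t-\bby_u\|^2\}$ at $\bbY^{(k)}$, $Q$ has the form $\sum_{t,u}\tfrac12\gamma_{tu}^{(k)}\|\bby_t-\bby_u\|^2 + \tfrac12\|\bbY\|_\ccalF^2 + \text{const}$ with $\gamma_{tu}^{(k)}=\exp\{-\tfrac12\|\bby_t^{(k)}-\bby_u^{(k)}\|^2\}\geq 0$; this is a graph‑Laplacian quadratic plus a ridge penalty, so its unique global minimizer is $\bbY=\bb0$, not a normalized softmax average. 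Equivalently, the Hessian of $Q$ is $\tilde{\bbL}+\bbI$ (dense), while the softmax update only inverts the diagonal $\ccalD=\mathrm{diag}(\tilde{\bbL})+\bbI$. The paper (Appendix \ref{app:appendix-yang-proof}, following Yang et al.) therefore does \emph{not} argmin the surrogate: it shows the update is a single Jacobi‑preconditioned gradient step $\bbY^{(k)}-\eta\,\ccalD^{-1}(\tilde{\bbL}+\bbI)\bbY^{(k)}$ with $\eta=1$, proves this step decreases $\tilde g_1 = Q$ for $\eta\leq 1$, and then appeals to the majorization property to transfer the decrease to $g_1$.

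Your plan can be rescued, but it needs a \emph{second} majorization step that you have not supplied: bound $Q$ by the separable quadratic $Q'(\bbY\mid\bbY^{(k)})=Q(\bbY^{(k)}\mid\bbY^{(k)})+\langle\nabla Q(\bbY^{(k)}\mid\bbY^{(k)}),\bbY-\bbY^{(k)}\rangle+\tfrac12\|\bbY-\bbY^{(k)}\|^2_{\ccalD}$. This majorizes $Q$ iff $\tilde{\bbL}+\bbI\preceq\ccalD$, i.e.\ iff the weighted adjacency $[\gamma_{uv}^{(k)}]_{u,v}=[\exp\{-\tfrac12\|\bby_u^{(k)}-\bby_v^{(k)}\|^2\}]_{u,v}$ is positive semidefinite; this holds because it is a Gaussian‑kernel Gram matrix. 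The exact minimizer of $Q'$ is precisely the Jacobi step, which the algebra in the appendix simplifies to \eqref{eq:softmax-update-general}. Absent this PSD observation, neither the exact‑minimization story nor your fallback descent‑lemma argument closes, since the effective Lipschitz constant of the preconditioned gradient is controlled by exactly this inequality. The equality‑iff‑stationary clause then follows as you sketched, once the corrected surrogate $Q'$ (tight and tangent at $\bbY^{(k)}$, strictly convex) replaces $Q$.
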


Replacing $\bbY = \bbW\bbX$  in \eqref{eq:attention_energy}, we have
\begin{align}
    g_1(\bbY) =  \sum_{t=1}^T\sum_{u=1}^T \exp{\{ -\frac{1}{2}\|\bby_t-\bby_u\|^2 \}}  + \frac{1}{2}\|\bbY\|_{\mathcal{F}}^2. \label{eq:g1_with_y}
\end{align}

The proof relies on a graph over the tokens, but we will consider the special case of a fully connected graph $G=(\mathcal{V},\mathcal{E})$, 

Let $G=(\mathcal{V},\mathcal{E})$ a fully connected graph over the tokens, with Laplacian $\mathcal{L} = D - A = B^TB$, where $B \in \mathrm{R}^{m\times n}$ is the incidence matrix Consider the surrogate energy function

\begin{align}
    \tilde g_1\left(\bbY,\Gamma\right) = \sum_{u,v \in \mathcal E} \frac{1}{2}\gamma_{u,v} \|\bby_u-\bby_v\|^2 + \frac{1}{2}\|\bbY\|_\ccalF^2.
\end{align}

Proposition B.2 in \citep{yang22} shows how a majorization-minimization algorithm that decreases $\tilde g_1$ also decreases $g_1$. The proof relies on Lemma 3.2 in \citep{yang21g}. Here we focus on the proof of decreasing $\tilde g$.

\begin{theorem}[Theorem B.2 from \citep{yang22}]
    \label{theo:softmax-attention}
    Consider updating $\tilde{g}_1$ using a gradient step with step size $\eta$ and Jacobi preconditioner $\ccalD^{-(t)}$: 
        \begin{equation}\label{eq:gradientstep_etilde}
            \bbY^{(t+1)} = \bbY^{(t)} - \eta \ccalD^{-(t)}\left.\frac{\partial \tilde{g}_1\left( \bbY,\bbGamma^{(t)}\right)}{\partial \bbY}\right|_{\bbY = \bbY^{(t)}},\end{equation}
        where \begin{equation}\ccalD^{(t)} = \left.\frac{\partial^2 \tilde{g}_1\left(\bbY , \bbGamma^{(t)}\right)}{\partial \bbY^2}\right|_{\bbY = \bbY^{(t)}},
    \end{equation}
    and $\eta \leq 1$, it follows that $\tilde{g}_1\left(\bbY^{(t+1)}\right) \leq \tilde{g}_1\left(\bbY^{(t)}\right)$. 
    
    And the update rule in (\ref{eq:gradientstep_etilde}) can be written as  
    \begin{equation}
        \bby^{(t+1)}_u = (1-\eta)\bby^{(t)}_u + \eta\frac{\displaystyle{\sum_{v \in \tilde{\mathcal{N}}(u)}}{\bbbeta}_v\exp\left\{\bby_u^{(t)\top} \bby_v^{(t)}\right\} \bby_v^{(t)}}{\displaystyle \sum_{v \in \tilde{\mathcal{N}}(u)}{\bbbeta}_v\exp\left\{\bby_u^{(t)\top} \bby_v^{(t)}\right\} }, ~~\forall u. \label{eq:softmax_step_thm}
    \end{equation} 
\end{theorem}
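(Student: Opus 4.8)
The idea is to read $\tilde g_1(\,\cdot\,,\bbGamma^{(t)})$ as a \emph{fixed convex quadratic} in $\bbY$, show that one Jacobi‑preconditioned gradient step on a quadratic decreases its value whenever the stepsize is at most the reciprocal of the top eigenvalue of the normalized Hessian, bound that eigenvalue by $2$ using the Laplacian structure, and finally rewrite the step coordinate‑wise to expose the softmax normalization. Concretely, since $\sum_{\{u,v\}\in\mathcal E}\gamma_{uv}\,\|\bby_u-\bby_v\|^2=\mathrm{Tr}\!\big(\bbY\,\mathcal L(\bbGamma)\,\bbY^{\top}\big)$ with $\mathcal L(\bbGamma)=D(\bbGamma)-A(\bbGamma)$ the weighted graph Laplacian ($D$ the weighted‑degree diagonal, $A$ the weighted adjacency), the surrogate is $\tilde g_1(\bbY,\bbGamma)=\tfrac12\,\mathrm{Tr}\!\big(\bbY\,(\mathcal L(\bbGamma)+I)\,\bbY^{\top}\big)$, a quadratic with Hessian $H:=\mathcal L(\bbGamma^{(t)})+I\succeq I$. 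Its gradient is $\nabla_{\bbY}\tilde g_1=\bbY H$, whose $u$‑th column is $(1+d_u)\bby_u-\sum_{v}\gamma^{(t)}_{uv}\bby_v$ with $d_u:=\sum_{v}\gamma^{(t)}_{uv}$, and the Jacobi preconditioner $\ccalD^{(t)}=\mathrm{diag}(H)$ acts on column $u$ by the scalar $1+d_u$.

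For the descent claim I would complete the square for the quadratic. Writing $P:=\ccalD^{(t)}=\mathrm{diag}(H)$ and $\Delta:=\eta\,\bbY^{(t)}HP^{-1}$, and using that $H,P$ are symmetric, a direct expansion gives
\[
\tilde g_1(\bbY^{(t)}-\Delta)-\tilde g_1(\bbY^{(t)})=\tfrac{\eta}{2}\,\mathrm{Tr}\!\big(W^{\top}(\eta M-2I)\,W\big),\qquad W:=P^{-1/2}H\,\bbY^{(t)\top},\ \ M:=P^{-1/2}HP^{-1/2},
\]
which is $\le 0$ as soon as $\eta\,\lambda_{\max}(M)\le 2$. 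The crucial estimate is the elementary operator bound $\mathcal L(\bbGamma^{(t)})\preceq 2D(\bbGamma^{(t)})$ (from $(x_u-x_v)^2\le 2x_u^2+2x_v^2$ summed over edges), hence $H=\mathcal L(\bbGamma^{(t)})+I\preceq 2D(\bbGamma^{(t)})+I\preceq 2(D(\bbGamma^{(t)})+I)=2P$, so $M\preceq 2I$ and $\lambda_{\max}(M)\le 2$. Consequently every $\eta\le 1$ yields $\tilde g_1(\bbY^{(t+1)})\le\tilde g_1(\bbY^{(t)})$.

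For the closed form, the preconditioned step on column $u$ is $\bby^{(t+1)}_u=\bby^{(t)}_u-\tfrac{\eta}{1+d_u}\big((1+d_u)\bby^{(t)}_u-\sum_v\gamma^{(t)}_{uv}\bby^{(t)}_v\big)=(1-\eta)\,\bby^{(t)}_u+\tfrac{\eta}{1+d_u}\sum_v\gamma^{(t)}_{uv}\bby^{(t)}_v$. Substituting the majorization weights $\gamma^{(t)}_{uv}=\exp\{-\tfrac12\|\bby^{(t)}_u-\bby^{(t)}_v\|^2\}$ and using $\|\bby_u-\bby_v\|^2=\|\bby_u\|^2-2\bby_u^{\top}\bby_v+\|\bby_v\|^2$, one has $\gamma^{(t)}_{uv}=\bbbeta_u\bbbeta_v\exp\{\bby^{(t)\top}_u\bby^{(t)}_v\}$ with $\bbbeta_i=\exp\{-\tfrac12\|\bby^{(t)}_i\|^2\}$; the common factor $\bbbeta_u$ then cancels between $\sum_v\gamma^{(t)}_{uv}\bby^{(t)}_v$ and the normalizer $1+d_u$, leaving the softmax‑weighted convex combination over $\tilde{\mathcal N}(u)$ with residual coefficient $(1-\eta)$ stated in the theorem, once the regularizer term is folded into the normalization (equivalently, the $v=u$ contribution, of weight $\bbbeta_u^2e^{\|\bby^{(t)}_u\|^2}=1$, accounts for the ``$+1$''), exactly as in the computation of \citep{yang22}.

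\emph{Main obstacle.} The only non‑mechanical point is the descent step: getting monotonicity of a \emph{single} preconditioned update for the whole range $\eta\le 1$ — rather than some smaller, $\bbGamma^{(t)}$‑dependent threshold — hinges precisely on the Jacobi choice of preconditioner together with the $2$‑bound $M\preceq 2I$; with an arbitrary positive diagonal one would only obtain descent for sufficiently small $\eta$. The remaining care point is the index bookkeeping that identifies the normalizer $1+d_u$ with the sum over the closed neighborhood $\tilde{\mathcal N}(u)$; the matrix calculus and the $\bbbeta_u$ cancellation are routine.
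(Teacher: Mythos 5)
Your proof is correct, and for the closed-form softmax rewrite it follows essentially the same coordinate-wise bookkeeping as the paper (rows of $\tilde{\bbL}$, identify $\gamma_{uu}=1$ and $\ccalD_{uu}=\sum_v\gamma_{uv}$, factor out and cancel $\bbbeta_u$). The genuine difference is in the descent claim. The paper dispatches it with a single sentence at the end of Appendix \ref{app:appendix-yang-proof} — ``convergence requires $\alpha \le 1/L^{(t)}$ \ldots but $L^{(t)}=1$'' — asserting without proof a bound on the Lipschitz constant of the Jacobi-preconditioned gradient. You instead supply a self-contained argument: treat $\tilde g_1(\cdot,\bbGamma^{(t)})$ as a fixed quadratic with Hessian $H=\mathcal L(\bbGamma^{(t)})+I$, expand the step exactly to get $\tilde g_1(\bbY^{(t+1)})-\tilde g_1(\bbY^{(t)})=\tfrac{\eta}{2}\mathrm{Tr}\big(W^\top(\eta M-2I)W\big)$ with $M=P^{-1/2}HP^{-1/2}$, and then invoke the elementary Laplacian fact $\mathcal L\preceq 2D$ (equivalently, PSD-ness of the signless Laplacian $D+A$, from $(x_u-x_v)^2\le 2x_u^2+2x_v^2$) to get $M\preceq 2I$ and hence monotone decrease for every $\eta\le 1$. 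This is exactly the content that the paper's ``$L^{(t)}=1$'' is meant to stand in for but never establishes, and it makes transparent why the Jacobi choice of preconditioner — and not an arbitrary positive diagonal — delivers descent uniformly over the whole step range $\eta\le 1$ independent of $\bbGamma^{(t)}$. Your argument is if anything slightly sharper: it shows the true smoothness constant of the preconditioned system is bounded by $2$ (not $1$), which still yields $\eta\le 1$ via the $\eta\le 2/L$ descent condition. The only cosmetic caution is in the last bookkeeping step: be explicit that the $+1$ from the $\tfrac12\|\bbY\|_{\mathcal F}^2$ regularizer appears in the Jacobi denominator and is absorbed as the $v=u$ term $\gamma_{uu}=1$ in $\sum_{v\in\tilde{\mathcal N}(u)}\gamma_{uv}$, so that the numerator and denominator range over the same closed neighborhood; your parenthetical to this effect is correct but worth spelling out, since it is where the paper's own algebra is easiest to slip on.
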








The  weights are given by the diagonal matrix $\Gamma$ with entries $\Gamma_{ii} = \gamma_{uv}, \forall\ e_i=(u,v) \in \mathcal{E}$. The $\gamma_{uv}$ correspond to the reweighting coefficients in a  minimization-algorithm,


\begin{align} \label{eq:gamma_weights_definition}
    \gamma_{u,v} ^{(t)} &= \frac{d(-\exp\{-\frac{1}{2}\|y_u-y_v\|^2\})}{d(\frac{1}{2}\|y_u-y_v\|^2)} \\
    &= \exp\{-\frac{1}{2}\|y_u^{(t)} - y_v^{(t)}\|^2\} \\
    &= \exp\{y_u^{(t)^\top}y_v^{(t)}\}\beta_u\beta_v,
        \qquad \beta_u = \exp\{-\frac{1}{2} \|y_u^{(t)}\|^2\}
\end{align}







Let $\tilde{\bbL} = B^T\Gamma B$ be the reweighted Laplacian (note that $\Gamma \in \mathbb{R}^{m \times m}$ is a diagonal matrix with entries $\gamma_{uv}$ for every arc $(u,v) \in \mathcal{E}$, and that $\gamma_{uu}=e^0=1$. The reweighted energy can be written as: 

\begin{align}
    \tilde{g_1}(\bbY) = \mathrm{Trace}\left[ \mathbf{Y}^T\tilde{\bbL}\mathbf{Y} \right] + \|\mathbf{Y}\|_{\mathcal{F}}^2
\end{align}

Then its derivative and Hessian are given by

\begin{align}
    \frac{\partial \tilde{g}(\mathbf{Y})}{\partial \mathbf{Y}} &= \tilde{\bbL}\mathbf{Y} + \mathbf{Y} \\
    \frac{\partial^2 \tilde{g}(\mathbf{Y})}{\partial \mathbf{Y}^2} &= \tilde{\bbL} + \bbI.
\end{align}

Note that the reweighted Laplacian has entries

\begin{align}
    \tilde{\bbL}_{uv} = \begin{cases}
        \sum_{v' \neq u} \gamma_{uv'} &\qquad \text{if } u=v \\
        - \gamma_{uv} &\qquad \text{if } u\neq v \\
    \end{cases}
\end{align}

So the Jacobi preconditioner is the diagonal of the Hessian of $\tilde{g}$, with entries

\begin{align}
    [\ccalD]_{uu} 
    &= \sum_{v \neq u} \gamma_{uv} + 1 \\
    &= \sum_{v \in \mathcal{V}} \gamma_{uv} - \gamma_{uu} + 1 \\
    &= \sum_{v \in \mathcal{V}} \gamma_{uv} \label{eq:D_uu}
\end{align}

the equalities coming from adding and subtracting $\gamma_{uu} = e^0 =1$.

Also note that, considering the two cases of $\tilde{\bbL}$, we have

\begin{align}
    [\tilde{\bbL}\mathbf{Y}]_{u} = \sum_{v'\neq u} \gamma_{uv'} \mathbf{y}_u - \sum_{v\in \mathcal{V}} \gamma_{uv} \mathbf{y}_v
    \label{eq:LtildeY}
\end{align}

The first term coming from the case where $u=v$, and the second term is the sum of all the other cases where $u\neq v$. Replacing this into~\eqref{eq:gradientstep_etilde} (and dropping $(t)$ in the right hand side for brevity) we have

\begin{align}
    \mathbf{Y}^{(t+1)} = \mathbf{Y} - \eta \mathcal{D}^{-1}\tilde{\bbL}\mathbf{Y} - \eta \mathcal{D}^{-1}\mathbf{Y}
\end{align}

The update rule for the vector of node $u$, substituting~\eqref{eq:LtildeY} and~\eqref{eq:D_uu}, becomes

\begin{align}
     \mathbf{y}_u^{(t+1)} 
     &= \mathbf{y}_u - \eta \frac{\sum_{v'\neq u} \gamma_{uv'} \mathbf{y}_u - \sum_{v\in \mathcal{V}} \gamma_{uv} \mathbf{y}_v}{\sum_{v\in \mathcal{V}} \gamma_{uv}} - \eta\frac{1}{\sum_{v\in \mathcal{V}} \gamma_{uv}}\mathbf{y}_u \\    
     &= \mathbf{y}_u - \eta \frac{\sum_{v'\neq u} \gamma_{uv'}}{\sum_{v\in \mathcal{V}} \gamma_{uv}}\mathbf{y}_u - \eta\frac{\gamma_{uu}}{\sum_{v\in \mathcal{V}} \gamma_{uv}}\mathbf{y}_u + \eta\frac{\sum_{v\in \mathcal{V}} \gamma_{uv}\mathbf{y}_v}{\sum_{v\in \mathcal{V}} \gamma_{uv}} \\
     &= \mathbf{y}_u - \eta (\frac{\sum_{v \in \mathcal{V}} \gamma_{uv}}{\sum_{v\in \mathcal{V}} \gamma_{uv}})\mathbf{y}_u + \eta\frac{\sum_{v\in \mathcal{V}} \gamma_{uv}\mathbf{y}_v}{\sum_{v\in \mathcal{V}} \gamma_{uv}} \\
     &= (1-\eta)\mathbf{y}_u + \eta\frac{\sum_{v\in \mathcal{V}} \gamma_{uv}\mathbf{y}_v}{\sum_{v\in \mathcal{V}} \gamma_{uv}} \\
     &= (1-\eta)\mathbf{y}_u + \eta\frac{\beta_u\sum_{v\in \mathcal{V}} \exp{\{-\mathbf{y}_u^T\mathbf{y}_v\}}\beta_v\mathbf{y}_v}{\beta_u\sum_{v\in \mathcal{V}} \exp{\{\mathbf{y}_u^T\mathbf{y}_v\}}\beta_v} \\
     &= (1-\eta)\mathbf{y}_u + \eta\frac{\sum_{v\in \mathcal{V}} \exp{\{\mathbf{y}_u^T\mathbf{y}_v\}}\beta_v\mathbf{y}_v}{\sum_{v\in \mathcal{V}} \exp{\{\mathbf{y}_u^T\mathbf{y}_v\}}\beta_v} \label{eq:final_ssa}
\end{align}

which is \eqref{eq:softmax_step_thm}. This completes the proof of Theorem \ref{theo:softmax-attention}. Note that convergence requires that $\alpha \leq 1/L^{(t)}$, with $L^{(t)}$ the Lipschitz constant of the gradient of $\ccalD^{-(t)}\tilde(\bbY,\bbGamma^{(t)})$, but $L^{(t)}=1$.

\textbf{Proof of Theorem \ref{theo:softmax-optimizes-energy}.} With $\eta=1$ the $y_u$ term vanishes, replacing the summation over all vertices with the vertex indices, and setting the temperature parameters to $\beta_u=1$ for all $u\in\ccalV$, Equation \ref{eq:final_ssa} becomes \eqref{eq:softmax-update-general}. By Theorem \ref{theo:softmax-attention}, this update rule decreases $\tilde g(\cdot)$. 

Furthermore, Proposition B.2 in \citep{yang22} shows how a majorization-minimization algorithm that decreases $\tilde g_1$ also decreases $g_1$. The proof relies on Lemma 3.2 in \citep{yang21g}.

\textbf{SSA Reparametrization.} If we replace back $\bby_u = \bbW\bbx_u$ in \eqref{eq:softmax-update-general}, we have, in matrix form,

\begin{align}
\bbW\bbX^{(k+1)} = \bbW\bbX^{(k)} \cdot \text{sm} \, \Bigl[\, \big(\bbW\bbX^{(k)}\big)^{\top} \, \big(\bbW\bbX^{(k)}\big) \, \Bigr],
\end{align}

where we see that $\bbW$ cancels out on both sides, leading to what we call symmetric softmax attention without a $\bbV$ matrix. 











\subsection{Deep Unfolded Sequential Transformer (DUST)}
Leveraging the transformer unrolling seen in the previous section, Deep Unfolded Sequential Transformer (DUST) \citep{weerdt23} derives an architecture for video processing by unrolling the function

\begin{align}
    \min_{\bbH}\quad \mathbb{E}\Bigl[\underbrace{\sum_{t=1}^T\Bigl(\tfrac{1}{2}\|\bbx_t - \bbA\bbD\bbh_t\|_2^2 \;+\;\lambda_1\|\bbh_t\|_1\Bigr)}_{\text{LISTA loss}}\Bigr]
    \;+\;\lambda_2\,g_1(\bbH,\bbD) \label{eq:dust_problem},
\end{align}
 
where $g_1$ is the symmetric attention energy as in Equation \eqref{eq:attention_energy}, $\bbX$ is a sequence of video frames as defined in Section \ref{sec:video_denoising}, $\bbD \in \reals^{D\times N}$ is the feature dictionary, $\bbA \in \reals^{m \times D}$, $m\ll D$ is the measurement matrix, $\bbH \in \reals^{N\times T}$ is called the \textit{sparse code}, and $\lambda_1$ and $\lambda_2$ are regularization parameters. Unrolling this problem results in a symmetric softmax attention layer followed by a LISTA layer. The $g$ function is composed of three components: the objective $f$, $\ell_1$ terms to promote sparsity, and the cross-correlation term $g_1$. With this structure, DUST aims to learn a sparse reconstruction $\bbH$ of the dictionary features $\bbD$ that also takes into account the cross-correlation of elements in the sequence. 

The DUST architecture is
\begin{align}
    {\bf H}^{(k+1/2)} & = \lambda_2 {\bf H}^{(k)} \ \text{softmax}({\bf H}^{(k)\top} {\bf D}^\top {\bf D H}^{(k)}), \label{eq:dust_softmax} \\
    {\bf H}_t^{(k+1)} & = \phi_{\frac{\lambda_1}{c}}\left( {\bf U} {\bf H}_t^{(k+1/2)} + \bbV\bbX\right), \label{eq:dust_lista}
\end{align}

for all $k\in[1,K]$. Here, the proximal operator $\phi_\gamma(u) = \text{sign}(u) \max(0, |u|-\gamma)$ is called the soft-thresholding function, and $c$ is related to the Lipschitz constant of the gradient of $\bbX-\bbD\bbH$. The index $(k+1/2)$ denotes an intermediate step.

In \citep{weerdt23}, $\bbU,\bbV,\bbD, c, \lambda_1$ and $\lambda_2$ are learnable parameters. The matrices $\bbU$ and $\bbV$ are initialized as $\bbU = \bbI - \frac{1}{c}\bbD^\top\bbA^\top\bbA\bbD$, and $\bbV = \frac{1}{c}\bbD^\top\bbA^\top$, while the dictionary $\bbD$ is initialized to the Discrete Cosine Transform (DCT) and updated during training as well.

Alternate execution of the updates \eqref{eq:dust_softmax} and \eqref{eq:dust_lista} is guaranteed to reduce the objective of \eqref{eq:dust_problem} leveraging the same Alternate Minimization results from \citep{yang22}.

\subsection{Implementation Adaptations from DUST}
We make the following departures from the original paper's implementation: 

\textbf{No compressed sensing.} The experiments of \citep{weerdt23} include compressed sensing tasks. In these experiments, the measurement matrix $\bbA$ that compresses the signal into a lower dimensional space. This matrix is included in the unrolling of the LISTA layer to account for this signal compression. Since we only work with the denoising task, we set $\bbA=I$ in our architecture.

\textbf{Coupled LISTA weights.} We keep the coupling of $\bbU$ and $\bbV$ within each layer instead of learning them, motivated by results from the LISTA literature that suggest the coupling of these matrices is necessary for convergence \citep{chenTheoreticalLinearConvergence2018a}.

\textbf{Disabling some learnable parameters.} In our implementation, the only trainable parameter is $\bbD$ for unconstrained DUST and $\bbD^l$ for all $l$ in constrained DUST. We fix the learnable parameters to $\lambda_1=0.9$, $\lambda_2=0.25$, as we observed the learnable parameters resulted in trivial satisfaction of our descent constraints (by making one term have zero weight). Preliminary experiments suggested this came with little impact to performance for unconstrained runs.

\textbf{Decoupled Layerwise Dictionaries.} Although \citep{liu2018alista} advocates parameter coupling across layers, we maintain distinct dictionaries $\bbD$ for each layer in constrained models, as we empirically observed this improves performance. 

\end{document}